\newcommand{\reva}[1]{#1}
\newcommand{\revb}[1]{#1}
\newcommand{\revc}[1]{#1}
\definecolor{mygreen}{rgb}{0,0.6,0}
\definecolor{mygray}{rgb}{0.5,0.5,0.5}
\definecolor{mymauve}{rgb}{0.58,0,0.82}
\newcommand{\cg}{ G}
\newcommand{\nec}{{\textsc{Nec}}}
\newcommand{\suf}{\textsc{Suf}}
\newcommand{\nsuf}{\textsc{NeSuf}}
\newcommand{\cm}{{{ M }}}
\newcommand{\pcm}{\langle \cm, \pr(\mb u) \rangle}
\newcommand{\Do}{{{ \texttt{do} }}}
\newcommand{\Pa}{\mb{Pa}}
\DeclareMathOperator*{\argmin}{argmin}
\newcommand{\pr}{{\tt \mathrm{Pr}}}
\newcommand{\sys}{{\textsc{Lewis}}}
\newcommand{\ignore}[1]{}
\newcommand*{\rom}[1]{\expandafter\@slowromancap\romannumeral #1@}
\newcommand{\romila}[1]{{\texttt{\color{blue} Romila: [{#1}]}}}
\newcommand{\indep}{\mbox{$\perp\!\!\!\perp$}}
\newcommand{\RNum}[1]{\uppercase\expandafter{\romannumeral #1\relax}}
\newcommand{\mb}[1]{{\mathbf{#1}}}
\newtheorem{definition}{Definition}[section]
\newtheorem{remark}[definition]{Remark}
\newtheorem{example}[definition]{Example}
\newtheorem{prop}[definition]{Proposition}
\newcommand{\proj}[1]{{\Pi}}
\newcommand{\sel}[1]{{\sigma}}
\newcommand{\cut}[1]{}
\newcommand{\eat}[1]{}
\newcommand{\defeq}{\stackrel{\text{def}}{=}}
\newcommand{\positiveuser}{Irrfan}
\newcommand{\negativeuser}{Maeve}
\title{Explaining Black-Box Algorithms Using Probabilistic Contrastive Counterfactuals}
\author{
  Sainyam Galhotra\textsuperscript{*}\\
   University of Massachusetts Amherst\\
   \texttt{sainyam@cs.umass.edu} \\
   \And
 Romila Pradhan   \thanks{These authors contributed equally to this work.}\\
  University of California, San Diego\\
   \texttt{rpradhan@ucsd.edu} \\
  \And
  Babak Salimi \\
  University of California, San Diego\\
  \texttt{bsalimi@ucsd.edu} 
}
\date{}
\begin{document}
\maketitle
\pagestyle{headings} 

\begin{abstract}


There has been a recent resurgence of interest in {\em explainable artificial intelligence} (XAI) that aims to reduce the opaqueness of AI-based decision-making systems, allowing humans to scrunitize and trust them.
%
Prior work in this context has focused on the attribution of {\em responsibility} for an algorithm’s decisions to its inputs wherein responsibility is typically approached as a purely {\em associational} concept. In this paper, we propose a principled causality-based approach for explaining black-box decision-making systems that addresses limitations of existing methods in XAI. At the core of our framework lies {\em probabilistic contrastive counterfactuals}, a concept that can be traced back to philosophical, cognitive, and social foundations of theories on how humans generate and select explanations. We show how such counterfactuals can quantify the {\em direct} and {\em indirect} influences of a variable on decisions made by an algorithm, and provide {\em actionable recourse} for individuals negatively affected by the algorithm's decision. Unlike prior work, our system, \sys{}: (1)~can compute
provably effective explanations and recourse at local, global and contextual levels; (2)~is designed to work with users with varying levels of background knowledge of the underlying causal model; and (3)~makes no assumptions about the internals of an algorithmic system except for the availability of its input-output data. We empirically evaluate \sys\ on three real-world datasets and show that it generates human-understandable explanations that improve upon state-of-the-art approaches in XAI, including the popular LIME and SHAP. Experiments on synthetic data further demonstrate the correctness of \sys's explanations and the scalability of its recourse algorithm.
\end{abstract}



\ignore{
\begin{abstract}
Algorithmic decision-making systems are increasingly used to aid in decision-making, with potentially significant consequences for individuals, institutions and society. This has led to much interest in {\em "explainable artificial intelligence"}, which aims to reduce the opaqueness of AI-based decision-making systems, allowing humans to understand and build trust these systems.  In this paper, we proposed a principled approach for explaining black-box decision that unifies existing methods in XAI and addresses their limitations. At the core of our framework lies {\em ``probabilistic contrastive counterfactuals"} that can be traced back to philosophical, cognitive, and social foundations of theories on how humans generate and select explanations. Built upon probabilistic contrastive counterfactuals we propose novel measures called {\em necessity score} and {\em sufficiency score} that respectively quantify the extent to which an attribute is a necessary and sufficient {\em cause} for an algorithm’s decisions. We show that these measures play different and complementary roles in generating effective explanations for algorithmic systems and can be used as the basis for generating a wide range of explanations at population, group and individual-level. While necessity score is a concept tailored for {\em attribution} of causal responsibility of an algorithm's positive decision to its inputs, sufficient causation is based on the tendency of an attribute to {\em produce} the desired outcome of algorithm, thereby can be used as basis for generating {\em actionable recourse} for algorithmic systems.  Unlike the previous proposals for generating counterfactual recourse that rely on the restrictive assumption that the underlying causal model is fully specified or can be learned from data, we establish theoretical results that enable us to generate reliable counterfactual recourse and explanations under partial knowledge. Experimental evaluation on real data demonstrate the effectiveness of our proposal and its improvement over state-of-the-art approaches in XAI, including the popular LIME and SHAP. 
\end{abstract}
}
%
%




\maketitle

\vspace{-.1cm}
\section{Introduction}

Algorithmic decision-making systems are increasingly used to automate consequential decisions, such as lending, assessing job applications, informing release on parole, and prescribing life-altering medications. There is growing concern that the opacity of these systems can inflict harm to stakeholders distributed across different segments of society. These calls for transparency created a resurgence of interest in {\em explainable artificial intelligence} (XAI), which aims to provide human-understandable explanations of outcomes or processes of algorithmic decision-making systems (see \cite{guidotti2018survey,mittelstadt2019explaining,molnar2020interpretable} for recent surveys).  

{\em Effective explanations} should serve the following
purposes: (1) help to build trust by providing a mechanism for {\em normative evaluation} of an algorithmic system, ensuring different stakeholders that the system's decision rules are justifiable~\cite{selbst2018intuitive}; and (2) provide users with an {\em actionable  recourse}
to change the results of algorithms in the future~\cite{DBLP:books/sp/Berk19,wachter2017counterfactual,venkatasubramanian2020philosophical}. Existing methods in XAI can be broadly categorized based on whether explainability is achieved by design ({\em intrinsic}) or by post factum system analysis ({\em post hoc}), and whether the methods assume access to system internals ({\em model dependent}) or can be applied to any black-box algorithmic system ({\em model agnostic}). 
\begin{figure*}
    \centering
    \includegraphics[scale=0.49]{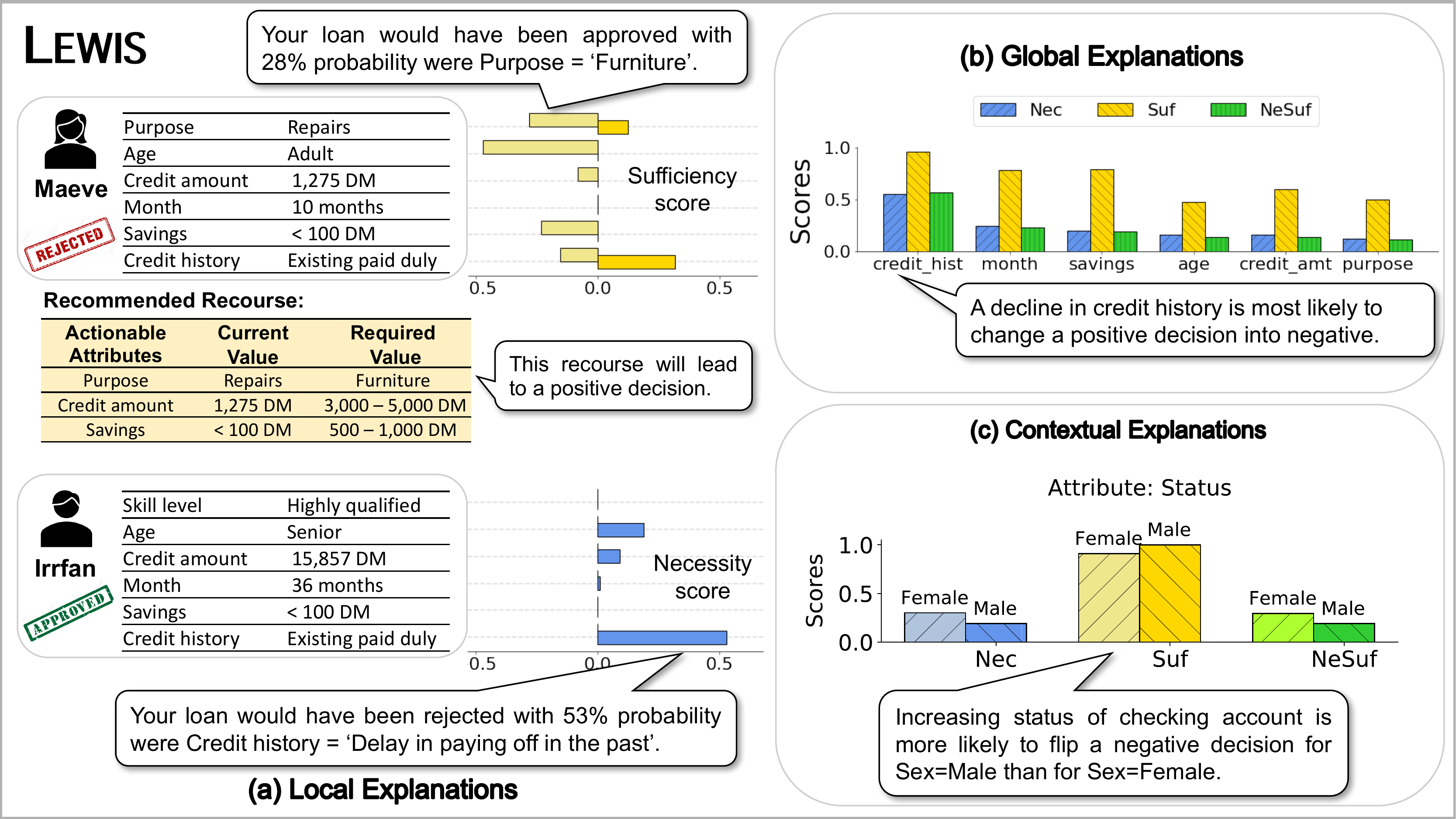}
    \caption{{An overview of explanations generated by $\sys$ for a loan approval algorithm built using the UCI German credit dataset (see Section~\ref{sec:exp} for details). Given a black-box classification algorithm, \sys{} generates: (a)~Local explanations, that explain the algorithm's output for an individual; (b)~Global explanations, that explain the algorithm's behavior across different attributes; and (c)~Contextual explanations, that explain the algorithms's predictions for a sub-population of individiuals.}}
    \label{fig:ravan}
\end{figure*}
%




In this work, we address post hoc and model-agnostic explanation methods that are applicable to any proprietary black-box algorithm.  Prior work in this context has focused on the attribution of {\em responsibility} of an algorithm's decisions to its inputs. These approaches include methods for quantifying the {\em global} (population-level) or {\em local} (individual-level) {\em influence} of an algorithm's input on its output
\cite{friedman2001greedy,goldstein2015peeking, apley2016visualizing,hooker2004discovering,greenwell2018simple,fisher2018model, lundberg2017unified,lundberg2018consistent,datta2016algorithmic}; they also include methods based on {\em surrogate explainability}, which search for a simple and interpretable model (such as a decision tree or a linear model) that mimics the behaviour of a black-box algorithm~\cite{ribeiro2016should,ribeiro2018anchors}.  However, these methods can produce incorrect and misleading explanations primarily because they focus on the {\em correlation} between the input and output of algorithms as opposed to their {\em causal} relationship~\cite{hooker2019please,kumar2020problems,frye2019asymmetric,guidotti2018survey,molnar2020interpretable,alvarez2018robustness}. Furthermore,  several recent works have
argued for the use of {\em counterfactual explanations}, which are typically obtained by considering the smallest perturbation in an algorithm's input that can lead to the algorithm's desired outcome~\cite{wachter2017counterfactual,laugel2017inverse,ustun2019actionable,mahajan2019preserving,mothilal2020explaining}. However, due to the causal dependency between variables, these perturbations are not translatable into real-world interventions and therefore fail to generate insights that are actionable in the real world~\cite{barocas2020hidden,karimi2019model,sokol2019counterfactual,mahajan2019preserving,karimi2020algorithmic}.

This paper describes a \textbf{new causality-based framework} for generating post-hoc explanations for black-box decision-making algorithms \textbf{that unifies existing methods in XAI and addresses their limitations.}  Our system, \sys, \footnote{Our system is named after David Lewis (1941–2001), who made significant contributions to modern theories of causality and explanations in terms of counterfactuals. In his essay on causal explanation \cite{lewis1986causal}, Lewis argued that ``to explain an event is to provide some information about its causal history." He further highlighted the role of counterfactual contrasts in explanations when he wrote, ``One way to indicate what sort of explanatory information is wanted is through the use of contrastive why-questions $\ldots$ [where] information is requested about the difference between the
actualized causal history of the explanandum and the unactualized
causal histories of its unactualized alternatives [(termed as ``foils" by Peter Lipton~\cite{lipton1990contrastive})]. Why did I visit
Melbourne in 1979, rather than Oxford or Uppsala or Wellington?" } reconciles the aforementioned objectives of XAI by: (1) providing insights into what {\em causes} an algorithm's decisions at the global, local and contextual (sub-population) levels, and (2) generating  actionable recourse translatable into real-world interventions. At the heart of our proposal are {\em probabilistic contrastive counterfactuals} of the following form:
%
\begin{equation}
\text{\parbox{.72\textwidth}{``For individual(s) with attribute(s) \textsf{$<$actual-value$>$} for whom an  algorithm made the decision \textsf{$<$actual-outcome$>$}, the decision would have been \textsf{$<$foil-outcome$>$} with {\em probability} \textsf{$<$score$>$} had the attribute been \textsf{$<$counterfactual-value$>$}."}} \label{eq:ppc}
\end{equation}
Contrastive counterfactuals are at the core of the philosophical, cognitive, and social foundations of theories that address how humans generate and select explanations~\cite{de2017people,gerstenberg2015whether,pearl2009causality,lipton1990contrastive,woodward2005making,grynaviski2013contrasts,morton2013contrastive}. Their probabilistic interpretation has been formalized and studied extensively in AI, biostatistics, political science, epistemology,  biology and legal reasoning~\cite{greenland1999relation, robins1989probability, greenland1999epidemiology, tian2000probabilities, greenland1999relation, robertson1996common, cox1984probability, pearl2009causality,grynaviski2013contrasts,mandel2005counterfactual}.  While their importance in achieving the objectives of XAI has been recognized in the literature ~\cite{miller2019explanation}, very few attempts have been made to operationalize causality-based contrastive counterfactuals for XAI. The following example illustrates how \sys\ employs contrastive counterfactuals to generate different types of explanations. 
\begin{example}
\label{ex:exp} Consider the black-box loan-approval algorithm in Figure~\ref{fig:ravan} for which \sys{} generates different kinds of explanations. For local explanations, \sys\ ranks attributes in terms of their {\em causal} responsibility to the algorithm's decision. For individuals whose loans were rejected, the responsibility of an attribute is measured by its {\em sufficiency} score, defined as ``the probability that the algorithm's decision would have been positive if that attribute had a counterfactual value". {For \negativeuser, the sufficiency score of $28\%$ for purpose of loan means that if purpose were `Furniture', \negativeuser's loan would have been approved with a  $28\%$ probability.} For individuals whose loans were approved, the responsibility of an attribute is measured by its {\em necessity} score, defined as ``the probability that the algorithm's decision would have been negative if that attribute had a counterfactual value." For \positiveuser, the necessity score of $53\%$ for credit history means that had credit history been worse, \positiveuser\ would have been denied the loan with a $53\%$ probability. 
Furthermore, individuals with a negative decision, such as \negativeuser, would want to know the actions they could take that would likely change the algorithm's decision. For such users, \sys\ suggests the minimal causal interventions on the set of actionable attributes that are sufficient, with high probability, to   change the algorithm's decision in the future. Additionally, \sys\ generates insights about the algorithm's {\em global} behavior with respect to each attribute by computing its necessity, sufficiency, and necessity and sufficiency scores at the population level. For instance, a higher necessity score for credit history indicates that a decline in its value is more likely to reverse a positive decision than a lower value of savings; a lower sufficiency score for age indicates that increasing it is less likely to overturn a negative decision compared to credit history or savings. By further customizing the scores for a {\em context} or sub-population of individuals that share some attributes, \sys\ illuminates the {\em contextual} behavior of the algorithm in different sub-populations. In Figure~\ref{fig:ravan}, \sys\ indicates that increasing the \texttt{status} is more likely to reverse a negative decision for \{\texttt{sex=Male}\} than for \{\texttt{sex=Female}\}.

\end{example}
{To compute these scores, \sys{} relies on the ordinal importance of  attribute values e.g., higher savings are more likely to be granted a loan than lower savings.  In case the attribute values do not possess a natural ordering or the ordering is not known apriori, \sys{} infers it from the output of the black-box algorithm (more in Section~\ref{sec:identif}).}
\vspace{.2cm}

\par
{\bf Our contributions.} This paper proposes a principled approach for explaining black-box decision-making systems using probabilistic contrastive counterfactuals.  Key contributions include:
\vspace{-0.1cm}
\begin{enumerate}[leftmargin=0.35cm,align=left,labelsep=-.35cm]



\item Adopting standard definitions of sufficient and necessary causation based on contrastive counterfactuals to propose \textbf{novel probabilistic measures, called \textit{necessity scores} and \textit{sufficiency scores}}, which respectively quantify the extent to which an attribute is necessary and sufficient for an algorithm's decision ({Section~\ref{sec:probcauses}}). We show that these measures play unique, complementary roles in generating effective explanations for algorithmic systems. While the necessity score addresses the {\em attribution} of causal responsibility of an algorithm's decisions to an attribute, sufficient score addresses the tendency of an attribute to {\em produce} the desired algorithmic outcome. 
 
\item Demonstrating that our newly proposed measures can generate a \textbf{wide range of explanations} for algorithmic systems that quantify the necessity and sufficiency of attributes that \reva{{\bf implicitly} or {\bf explicitly} influence an algorithm's decision making process}~(Section~\ref{sec:sysexp}). More importantly, \sys\ generates \textbf{contextual explanations} at global or local levels and for a user-defined sub-population.  

\item Showing that the problem of generating {\bf actionable recourse} can be framed as an optimization problem that searches for a  {\bf  minimal intervention} on a pre-specified set of actionable variables that have a high {\bf  sufficiency score} for producing the algorithm's desired future outcome.




\item Establishing conditions under which\textbf{ the class of probabilistic contrastive counterfactuals we use can be bounded and estimated using historical data} ({Section~\ref{sec:identif}}). 
Unlike previous attempts to generate actionable recourse using counterfactual reasoning, \sys\ leverages established bounds and integer programming to generate reliable recourse under partial background knowledge on the underlying causal models~({ Section~\ref{sec:computingrecourse}}). 



\item Comparing $\sys$ to state-of-the-art methods in XAI ({Sections~\ref{sec:exp} and \ref{sec:related}}). We present an \textbf{end-to-end experimental evaluation on both real and synthetic data}. In real datasets, we show that \sys\ generates intuitive and actionable explanations that are consistent with insights from existing literature and surpass state-of-the-art methods in XAI. Evaluation on synthetic data demonstrates the accuracy and correctness of the explanation scores and actionable recourse  that \sys\ generates.   
\end{enumerate}
\ignore{
\begin{figure} \small
	\begin{tabular}{ r|c|c| }
		\multicolumn{1}{r}{}
		&  \multicolumn{1}{c}{Model-Dependent}
		& \multicolumn{1}{c}{Model Agnostic} \\
		\cline{2-3}
		Intrinsic & \cite{} & \cite{} \\
		(Achieved during design) & & \\ \cline{2-3}
		Post Hoc &  \cite{} & \cite{}  \\
		(Achieved after design) & & \sys\ (this paper)\\ 	\cline{2-3}
	\end{tabular}
	\vspace*{-0.3cm} \caption{\textmd{ \small Different methods for explainability.}}
	\label{tbl:exp:categ}
\end{figure}
}

%
%



\vspace{-0.3cm}
\section{Preliminaries}
\label{sec:back}
The notation we use in this paper is summarized in Table~\ref{tab:notations}. We denote variables by uppercase letters,
$X, Y, Z, $ $V$; their values with lowercase letters, $x,y,z, v$; and sets of variables or values using boldface ($\mb X$ or
$\mb x$).  The domain of a variable $X$ is $Dom(X)$, and the domain of
a set of variables is $Dom(\mb X) = \prod_{X\in \mb X} Dom(X)$. All domains are discrete and finite; continuous domains are assumed to be binned. We use $\pr(\mb x)$ to represent a joint probability distribution $\pr(\mb X=\mb x)$. The basic semantic framework of our proposal rests on probabilistic causal models \cite{pearl2009causality}, which we review next.

\begin{table}
\footnotesize
	\centering
	\begin{tabular}{|l|l|} \hline
		\textbf{    Symbol} & \textbf{Meaning} \\ \hline
		$X, Y, Z$ & attributes (variables)\\
		$\mb{X}, \mb{Y}, \mb{Z}$ & sets of attributes \\
		$Dom(X), Dom(\mb{X})$ & their domains \\
		$x \in Dom(X)$ & an attribute value \\ 
		$\mb{x} \in Dom(\mb{X})$& a tuple of attribute values \\
		$\mb{k} \in Dom(\mb{K})$& a tuple of context attribute values \\
		$\cg$ & causal diagram \\
		 $\pcm$ & probabilistic causal model\\
		 		 $O_{\mb X \leftarrow\mb x}$ & potential outcome\\
		 $\pr(\mb V= \mb v), \pr(\mb v)$ &  joint probability distribution\\
		 $\pr(o_{\mb X \leftarrow\mb x})$ & abbreviates $\pr(O_{\mb X \leftarrow\mb x}=o)$\\
	\hline
	\end{tabular}
	\caption{      {\bf Notation used in this paper.}}
	\label{tab:notations}
\end{table}



\par {\bf Probabilistic causal models.}  A {\em probabilistic causal model} (PCM) is a tuple $\pcm$, where $\cm = \langle \mb U, \mb V, \mb F\rangle$ is a {\em causal model} consisting of a set of \emph{observable or endogenous} variables $\mb V$ and a set of
\emph{background or exogenous} variables $\mb U$ that are outside of the model, and $\mb F = (F_X)_{X \in \mb V}$
is a set of \emph{structural equations} of the form
$F_X : Dom(\Pa_{\mb V}(X)) \times Dom(\Pa_{\mb U}(X)) \rightarrow Dom(X)$, where
$\Pa_{\mb U}(X) \subseteq \mb{U}$ and $\Pa_{\mb V}(X) \subseteq \mb V-\set{X}$
are called \textit{exogenous parents} and \textit{endogenous parents} of $X$, 
respectively. The values of $\mb U$ are drawn from the distribution $\pr(\mb u)$. A PCM $\pcm$ can be represented as a directed graph
$\cg=\langle \mb V, \mb E \rangle$, called a {\em causal diagram}, where each node represents a variable, and there are directed edges from the
elements of $\Pa_{\mb U}(X) \cup \Pa_{\mb V}(X)$ to $X$. We say a variable $Z$ is a {\em descendant} of another variable $X$ if $Z$ is {\em caused} (either {\em directly} or {\em indirectly}) by $X$, i.e., if there is a directed edge or path from $X$ to $Z$ in $G$; otherwise, we say that $Z$ is a {\em non-descendant} of $X$. 
\par 
{\bf Interventions and potential outcomes.}
An {\em intervention} or an {\em action} on a set of variables
$\mb X \subseteq \mb V$, denoted 
$\mb X \leftarrow \mb x$, is an operation that {\em modifies} the underlying causal model by replacing the structural equations associated with $\mb X$ with a constant $\mb x \in Dom(\mb X)$. The {\em potential outcome} of a variable $Y$ after the intervention $\mb X \leftarrow \mb x$ in a context $\mb u \in Dom(\mb U)$, denoted $Y_{\mb X \leftarrow\mb x}(\mb u)$, is the {\em solution} to $Y$ in the modified set of structural equations.  Potential outcomes satisfy the following {\em consistency rule} used in derivations presented in Section~\ref{sec:identif}.
%
%
\begin{align}
\mb X(\mb u)= \mb x \implies Y_{\mb X \leftarrow\mb x}(\mb u)= y \label{eq:consistency}
\end{align}
This rule states that in contexts where $\mb X=\mb x$, the outcome is invariant to the intervention $\mb X \leftarrow\mb x$. For example, changing the income-level of applicants to high does not change the loan decisions for those who already had high-incomes before the intervention.

The distribution  $\pr(\mb u)$ induces a probability distribution over endogenous variables and potential outcomes. Using PCMs, one can express {\em counterfactual queries} of the form $\pr(Y_{\mb X \leftarrow\mb x}=y \mid \mb k)$, or simply $\pr(y_{\mb X \leftarrow\mb x} \mid \mb k)$; this reads as ``For contexts  with attributes $\mb k$, what is the probability that we would observe $Y=y$ had $\mb X$ been $\mb x$?" and is given by the following expression: 
\vspace{-0.1cm}
\begin{align} 
  \pr(y_{\mb X \leftarrow \mb x} \mid \mb k)
  &= \sum_{\mb u } \ \pr(y_{\mb X \leftarrow \mb x}(\mb u)) \ \pr(\mb u \mid  \mb k) \label{eq:abduction}
\end{align}
%
%
%

\noindent Equation~\eqref{eq:abduction} readily suggests Pearl's three-step procedure for answering counterfactual queries~ \cite{pearl2009causality}[Chapter~7]: (1) update $\pr(\mb u)$ to obtain $\pr(\mb u \mid  \mb k)$ ({\em abduction}), (2) modify the causal model to reflect the intervention ${\mb X \leftarrow \mb x}$ ({\em action}), and (3) evaluate the RHS of (\ref{eq:abduction}) using the index function $\pr(Y_{\mb X \leftarrow \mb x}(\mb u)=y)$ {(\em prediction)}. However, performing this procedure requires the underlying PCM to be fully observed, i.e, the distribution $\pr(\mb u)$ and the underlying structural equations must be known, which is an impractical requirement. In this paper, we assume that only background knowledge of the underlying causal diagram is available, but exogenous variables and structural equations are unknown.

\par {\bf The $\Do$-operator.} For causal diagrams, Pearl defined the $\Do$-operator as a graphical operation that gives semantics to {\em interventional queries} of the form ``What is the probability that we would observe $Y=y$ (at population-level) had $\mb X$ been $\mb x$?", denoted $\pr(\mb y \mid \Do(\mb x))$.  Further, he proved a set of necessary and sufficient conditions under which interventional queries can be answered using historical data.  A sufficient condition is the backdoor-criterion,~\footnote{Since it is not needed in our work, we do not discuss the graph-theoretic notion of backdoor-criterion.} which states that if there exists a set of variables $\mb C$ that satisfy a graphical condition relative to $\mb X$ and $Y$ in the causal diagram  $G$, the following holds (see~\cite{pearl2009causality}[Chapter~3] for details):
\begin{align}\footnotesize
 \pr(\mb y \mid \Do(\mb x))
  &= \sum_{\mb c \in Dom(\mb C)} \pr(\mb y \mid \mb c, \mb x) \ \pr(\mb c)  \label{eq:backdoor}
\end{align}
In contrast to~\eqref{eq:abduction}, notice that the RHS of \eqref{eq:backdoor} is expressed in terms of observed probabilities and can be estimated from historical data using existing statistical and ML algorithms.  

\par {\bf Counterfactuals vs. interventional queries.} The $\Do$-operator is a population-level operator, meaning it can only express queries about the effect of an intervention at population level; in contrast, counterfactuals can express queries about the effect of an intervention on a sub-population or an individual. Therefore, every interventional query can be expressed in terms of counterfactuals, but not vice versa (see~\cite{pearl2016causal}[Chapter~4] for more details).  For instance,   $\pr(y \mid \Do(\mb x))= \pr( y_{\mb X \leftarrow \mb x})$; however, the counterfactual query $\pr( y_{\mb X \leftarrow \mb x}\mid \mb x',  y')$, which asks about the effect of the intervention $\mb X \leftarrow \mb x$ on a sub-population with attributes $\mb x'$ and $y'$, cannot be expressed in terms of the  $\Do$-operator (see Example~\ref{ex:exp2} below). Note that the probabilistic contrastive counterfactual statements in ~\eqref{eq:ppc}, used throughout this paper to explain a black-box decision-making system concerned with the effect of interventions at sub-population and individual levels, cannot be expressed using the $\Do$-operator and therefore cannot be assessed in general when the underlying probabilistic causal models are not fully observed. Nevertheless, in Section~\ref{sec:identif} we establish conditions under which these counterfactuals can be estimated or bounded using data. 




\begin{figure}
	\centering	\includegraphics[scale=0.44]{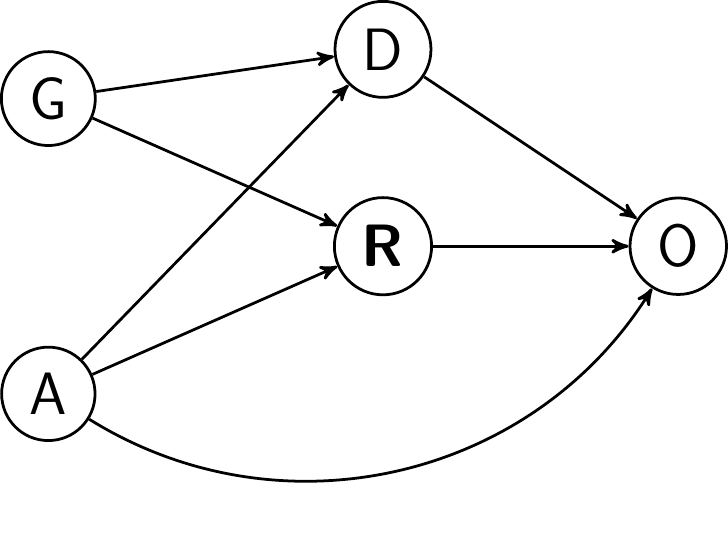}
	\caption{A causal diagram for Example~\ref{ex:exp}.}
	\vspace{-4mm}
	\label{fig:german-dag}
\end{figure}
\begin{example} \label{ex:exp2}
Continuing Example~\ref{ex:exp}, Figure~\ref{fig:german-dag} represents a simple causal diagram for the loan application domain, where $G$ corresponds to the attribute gender, $A$ to age, $D$ to the repayment duration in months, $O$ to the decision of a loan application, and $R$ compactly represents the rest of the attributes, e.g., status of checking account, employment, savings, etc. Note that the loan decision is binary: $O=1$ and $O=0$ indicate whether the loan has been approved or not, respectively. The interventional query $\pr(O=1 \mid \Do(D=24 \text{ months}))$ that is equivalent to the counterfactual $\pr(O_{D \leftarrow 24 \text{ months}} =1)$ reads as ``What is the probability of loan approval at population-level had all applicants selected repayment duration of 24 months?" This query can be answered using data and the causal diagram (since $\{G, A\}$ satisfies the backdoor-criterion in the causal diagram in Figure~\ref{fig:german-dag}).  However, the counterfactual query $\pr(O_{D \leftarrow 24\text{ months}}=1 \mid O=0)$, which reads as `What is the probability of loan approval for a group of applicants whose loan applications were denied had they selected a repayment duration of 24 months?", cannot be expressed using the $\Do$-operator.
\end{example}

\section{Explanations and Recourse Using Probabilistic Counterfactuals}
\label{sec:scores}
In this section, we introduce three measures to quantify the influence of an attribute on decisions made by an algorithm (Section~\ref{sec:probcauses}). We then use these measures to generate different types of explanations for algorithmic systems (Section~\ref{sec:sysexp}).


\subsection{Explanation Scores}
\label{sec:probcauses}

We are given a decision-making algorithm $f : Dom(\mb I) \rightarrow Dom(O)$, where $\mb I$ is set of input attributes (a.k.a. features for ML algorithms) and $O$ is a binary  attribute, where $O=o$ denotes the positive decision (loan approved) and $O=o'$ denotes the negative decision (loan denied). Let us assume we are given a PCM $\pcm$ with a corresponding causal diagram $G$ (this assumption will be relaxed in Section~\ref{sec:identif}) such that
$\mb I \subseteq \mb V$, i.e., the inputs of $f$ are a subset of the observed attributes. 
 Consider an attribute $X \in \mb V$ and a pair of attribute values $x, x' \in Dom(X)$. We quantify the influence of the attribute value $x$ relative to a baseline $x'$ on decisions made by an algorithm using the following scores, herein referred to as {\em explanation scores}; (we implicitly assume an order $x > x'$).  

%
\begin{definition}[Explanation Scores]
  Given a PCM $\pcm$ and an algorithm $f : Dom(\mb X) \rightarrow Dom(O)$, a variable $X \in \mb V$, and a pair of attribute values $x, x' \in Dom(X)$, we quantify the influence of $x$ relative to $x'$ on the algorithm's decisions in the context $\mb k \in Dom(\mb K)$, where $\mb K \subseteq \mb V-\{X,O\}$, using the following  measures:
  \begin{itemize}
  \item The {\em necessity score}:
{    \begin{align} 
       \nec_{x}^{x'}(\mb k) &\defeq \pr(o'_{X \leftarrow x'} \mid x,  o, \mb k) \label{eq:ns} 
    \end{align}
    }
  \item The {\em sufficiency score}:
  {   \begin{align} 
      \suf_x^{x'}(\mb k) & \defeq \pr(o_{X\leftarrow x} \mid x', o', \mb k) \label{eq:ss}
    \end{align}}
  \item The {\em necessity and sufficiency score}:
     {  \begin{align}  
    \nsuf_x^{x'}(\mb k) &\defeq \pr(o_{X\leftarrow x}, o'_{X\leftarrow x'} \mid \mb k) \label{eq:pns},
    \end{align}}
  \end{itemize}
\noindent where the distribution $\pr(o_{X\leftarrow x})$ is well-defined and can be computed from the algorithm $f(\mb I)$.\footnote{For deterministic  $f(\mb I)$, $\pr(o_{X\leftarrow x})= \sum_{\mb i \in Dom(\mb I)}\mathbbm{1}_{\{f(\mb i)=o\}} \ \pr(\mb I_{X\leftarrow x}=\mb i)$, where $\mathbbm{1}_{\{f(\mb i)=o\}}$ is an indicator function.}
\label{def:exp}
\end{definition}
%
For simplicity of notation, we drop $x'$ from $\nec_x^{x'}, \suf_x^{x'}$ and $\nsuf_x^{x'}$ whenever it is clear from the context.
The necessity score in~\eqref{eq:ns} formalizes the probabilistic contrastive counterfactual in~\eqref{eq:ppc}, where \textsf{$<$actual-value$>$} and \textsf{$<$counterfactual-value$>$} are respectively $\mb k  \ \cup \ x$ and $\mb k \ \cup \ x'$, and \textsf{$<$actual-decision$>$} and \textsf{$<$foil-decision$>$} are respectively positive decision $o$ and negative decision $o'$. This reads as ``What is the probability that for individuals with attributes $\mb k$, the algorithm's decision would be {\em negative} instead of {\em positive} had $X$ been $x'$ instead of $x$?"  In other words, $\nec_X(.)$ measures the algorithm's percentage of positive decisions that are {\em attributable to} or {\em due to} the attribute value $x$. The sufficiency score in ~\eqref{eq:ss} is the dual of the necessity score; it reads as ``What would be the probability that for individuals with attributes $\mb k$, the algorithm's decision would be {\em positive} instead of {\em negative}  had $X$ been $x$ instead of $x'$?" Finally, the necessity and sufficiency score in~\eqref{eq:pns} establishes a balance between necessary and sufficiency; it measures the probability that the algorithm responds in both ways. Hence, it can be used to measure the general explanatory power of an attribute.\ignore{\footnote{One can further derive scores for measuring the extent to which an attribute value is necessary but not sufficient, not necessary but sufficient, and neither necessary nor sufficient.}}   In Section~\ref{sec:identif}, we show that the necessary and sufficiency score is non-zero iff $X$ causally influences the algorithm's decisions. (Note that the explanation scores are well-defined for a set of attributes.)

\begin{remark}

\reva{A major difference between our proposal and existing methods in XAI is the ability to account for the indirect influence of attributes that may not be explicitly used in an algorithm's decision making process, but implicitly influence its decisions via their proxies. The ability to account for such influences is particularly important in auditing algorithms for fairness, where typically sensitive attributes, such as race or gender, are not explicitly used as input to algorithms.
For instance, in ~\cite{valentino2012websites} Wall Street Journal investigators reported that a seemingly innocent online pricing algorithm that simply adjusts online prices based on users' proximity to competitors’ stores is discriminative against lower-income individuals. In this case, the algorithm does not explicitly use income; however, it turns out that living further from competitors’ stores is a proxy for low income.~\footnote{\reva{In contrast to mediational analysis in causal inference that studies direct and indirect causal effects~\cite{DBLP:conf/uai/Pearl01,pearl2019seven}, in this paper we are interested in quantifying the sufficiency and necessity scores of attributes explicitly and implicitly used by the algorithm.}}}
\end{remark}

\subsection{\sys's Explanations}
\label{sec:sysexp}
Based on the explanations scores proposed in Section~\ref{sec:probcauses}, \sys\ generates the following types of explanations. 
\par
{\bf  Global, local and contextual explanations.} 
To understand the influence of each variable $X \in \mb V$ on an algorithm's decision, \sys\ computes the necessity score $\nec_x(\mb k)$, sufficiency score $\suf_x(\mb k)$, and necessity and sufficiency score $\nsuf_x(\mb k)$ for each value $x\in Dom(X)$ in the following contexts:  
(1) $\mb K=\emptyset$: the scores measure the {\em global} influence of $X$ on the algorithm's decision. (2) $\mb K=\mb V$: the scores measure the  individual-level or {\em local} influence of $X$ on the algorithm's decision. (3) A user-defined $\mb {K=k}$ with $\emptyset \subsetneq \mb K \subsetneq \mb V$: the scores measure the {\em contextual} influence of  $X$ on the algorithm's decision. In the context 
$\mb k$, \sys\ calculates the explanation scores for an attribute $X$ by computing the maximum score over all pairs of attribute values $x,x'\in Dom(X)$.
In addition to singleton variables, \sys{} can calculate explanation scores for any user-defined set of attributes.


For a given individual, \sys{} estimates the positive and negative contributions of a specific attribute value toward the outcome. Consider an individual with a negative outcome $O=o'$ having the attribute $X=x'$. The negative contribution of $x'$ is characterized by the probability of getting a positive outcome on intervening $X\leftarrow x$, $\ignore{\max\limits_{x>x'} \Pr[o_{X\leftarrow x}| x',o',\mb k] =} \max\limits_{x>x'}\suf_{x}^{x'}(\mb k)$, and 
the positive contribution of $x'$ for the individual is calculated as $\ignore{\max\limits_{x''<x'} \Pr[o_{X\leftarrow x'}| x'',o',\mb k]=}\max\limits_{x''<x'}\suf_{x'}^{x''}(\mb k)$. 
Similarly, for an individual with a positive outcome $O=o$  attribute value $X=x'$, the positive contribution of $x'$ is calculated by estimating the probability of $o'$ if the attribute value was intervened to be smaller than $x'$, $\ignore{\max\limits_{x''<x'}\Pr[o_{X\leftarrow x''}'| x',o,\mb k] =} \max\limits_{x''<x'}\nec_{x'}^{x''}(\mb k)$ 
and the negative contribution of $X=x'$ is $\ignore{\max\limits_{x'<x}\Pr[o_{X\leftarrow x'}'| x,o,k] =} \max\limits_{x>x'} \nec_{x}^{x'}(\mb k)$. Note that the negative contribution of attribute $X=x'$ is calculated by intervening on the individual at hand, but the positive contribution is estimated by intervening on individuals with $X=x''$ to satisfy the same context $\mb k$. In Figure~\ref{fig:ravan}, low credit amount contributes negatively 
to the outcome for \negativeuser\ as increasing credit amount improves 
their chances of getting the loan approved. Attributes like credit history contribute both positively and negatively: poor credit history worsens the chances of approval, but improving credit history furthers the chances of better credit.

\ignore{\paragraph*{\bf Top-$k$ heterogeneous context} In general, the global and contextual influence of a variable may differ or even contradict.  To discover {\em heterogeneity} in influence
of an attribute on an algorithm's decision across different contexts, \sys\ automatically identifies to-$k$ contexts in which the global and contextual influences are significantly different.  This problem is computationally infeasible, however, in section~\ref{sec:opti}, we develop algorithms that effectively identify semantically meaningful contexts that exhibit heterogeneity.}

\par
{\bf Counterfactual recourse.} For individuals for whom an algorithm's decision is negative, \sys\ generates explanations in terms of minimal interventions on a user-specified set of actionable variables $\mb A \subseteq \mb V$ that have a high {\em sufficiency score}, i.e., the intervention can produce the positive decision with high probability.  The explanations can be used either as justification in case the decision is challenged or as a feasible action that the individual may perform in order to improve the outcome in the future (``recourse'').
For example, in Figure~\ref{fig:ravan}, the set of actionable items for \negativeuser\ may consist of her credit amount, loan duration, savings and purpose. Examples of specific actions include ``increase the loan repayment duration'' or ``raise the amount in savings.''


Given an individual with attributes $\mb v$, a set of actionable variables $\mb A \subseteq \mb V$, and a cost function $\text{Cost}\left(\mb a,\hat{\mb a}\right)$ that determines the cost of an intervention that changes $\mb A$ from its current value $\mb a$ to $\hat{\mb a}$, for $\hat{\mb a} \in Dom(\mb A)$, a {\em counterfactual recourse} can be computed using the following
optimization problem:
\vspace{-.1cm}
\begin{align} 
      \argmin_{{\mb a} \in Dom(\mb A)}\text{Cost}\left(\mb a , \hat{\mb a} \right)
  &&& \text{s.t. } \suf_{ \hat{\mb a}}(\mb v) \geq \alpha   \label{eq:opt}
\end{align} 
The optimization problem in \eqref{eq:opt} treats the decision-making algorithm as a black box; hence, it can be solved merely using {\em historical data} (see Section~\ref{sec:computingrecourse}). The solutions to this problem provide end-users with informative, feasible and actionable explanations and recourse by answering questions such as ``What are the best courses of action that, if performed in the real world, would with high probability change the outcome  for this individual?''


\vspace{-.2cm}
\section{Properties and Algorithms}
\label{sec:boundandalg}

In this section, we study properties of the explanation scores in~Section~\ref{sec:scores} and establish conditions under which they can be bounded or estimated from historical data (Section~\ref{sec:identif}). We then  develop an algorithm for solving the optimization problem for computing counterfactual recourse (Section~\ref{sec:computingrecourse}). 
\label{sec:opti}
\subsection{Computing Explanation Scores}
\label{sec:identif}
 Recall from Section~\ref{sec:back} that if the underlying PCM is fully specified, i.e., the structural equations and the exogenous variables are observed, then counterfactual queries, and hence the explanation scores, can be computed via Equation~\eqref{eq:abduction}. However, in many applications, PCMs are not fully observed, and one must estimate explanation scores from  data. First, we prove the following bounds on  explanation scores, computed for a set of attributes $\mb X$.
%
\begin{prop}\label{prop:iden:mon:bounds}  \em
 Given a PCM $\pcm$ with a corresponding causal DAG $\cg$, an algorithm $f : Dom(\mb I) \rightarrow Dom(O)$, and a set of attributes $\mb X\subseteq \mb V-\{O\}$ with two sets of attribute values ${\mb x, \mb x'} \in Dom(\mb X)$, if $\mb K$  consists of non-descendants of $\mb I$ in $\cg$, then the explanation score can be bounded as follows: 
{\small
 \begin{align}
 \max{\left(0, \frac{\splitfrac{{\pr(o, \mb x \mid \mb k)}{+ \pr(o, \mb x'\mid \mb k)}}{- \pr(o \mid \Do(\mb x'), \mb k)}}{\pr(o, \mb x \mid \mb k)}\right)}   \leq  &\nec_{\mb x}(\mb k)  \leq \ \min\left(\frac{\splitfrac{\pr(o' \mid \Do(\mb x'), \mb k)}{- \pr(o', \mb x' \mid \mb k)}}{\pr(o, \mb x \mid \mb k)}, 1\right) 
\label{eq:nbound}  \\
  \max{\left(0,\frac{\splitfrac{{\pr(o', \mb x \mid \mb k)}{ + \pr(o',\mb x' \mid \mb k)}}{- \pr(o' \mid \Do(\mb x), \mb k)}}{\pr(o', x' \mid \mb k)}\right)} \ \leq  &\suf_{\mb x}(\mb k)  \leq  \min\left(\frac{\splitfrac{\pr(o\mid \Do(\mb x), \mb k)} { -  \pr(o,x \mid \mb k)}}{\pr( o', x' \mid \mb k)}, 1\right)
 \label{eq:sbound}\\
      \max\Big(0, \splitfrac{\pr(o \mid \Do(\mb x),  \mb k)}{-\pr(o \mid \Do(\mb x'), \mb k)}   \Big)  \leq &      \nsuf_{\mb x}(\mb k)  \leq \min
\Big(\splitfrac{\pr(o \mid \Do(\mb x), \mb k),}{ \pr(o' \mid \Do(\mb x'), \mb k)}\Big) 
   \label{eq:nsbound} 
 \end{align}
 }
 
 \ignore{
 \hspace{-1cm} {\small
 \begin{align}
\max{\left(0, \frac{\pr(o, \mb x \mid \mb k)+\pr(o, \mb x'\mid \mb k)- \pr(o \mid \Do(\mb x'), \mb k)}{\pr(o, \mb x \mid \mb k)}\right)}   & \leq  \nec_{\mb x}(\mb k)  \leq \ \min\left(\frac{\pr(o' \mid \Do(\mb x'), \mb k)- \pr(o', \mb x' \mid \mb k)}{\pr(o, \mb x \mid \mb k)}, 1\right) 
\label{eq:nbound}  \\
\max{\left(0,\frac{\pr(o', \mb x \mid \mb k) + \pr(o',\mb x' \mid \mb k)- \pr(o' \mid \Do(\mb x), \mb k)}{\pr(o', x' \mid \mb k)}\right)} & \leq  \suf_{\mb X}(\mb k)  \leq  \min\left(\frac{\pr(o\mid \Do(\mb x), \mb k)\big)  -  \pr(o,x \mid \mb k)}{\pr( o', x' \mid \mb k)}, 1\right)
 \label{eq:sbound}\\
     \max\Big(0, \pr(o \mid \Do(\mb x),  \mb k)-\pr(o \mid \Do(\mb x'), \mb k)   \Big)  & \leq      \nsuf_{\mb X}(\mb k)  \leq \min
\Big(\pr(o \mid \Do(\mb x), \mb k) , \pr(o' \mid \Do(\mb x'), \mb k)\Big) 
   \label{eq:nsbound} 
 \end{align}
 }
 }
\end{prop}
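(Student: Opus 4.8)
The plan is to recognize these three bounds as context-conditioned analogues of the classical Tian--Pearl bounds on the probabilities of necessity, sufficiency, and necessity-and-sufficiency, and to derive them from two ingredients: the consistency rule~\eqref{eq:consistency}, and the fact that each interventional term $\pr(o \mid \Do(\mb x), \mb k)$ equals the context-conditioned marginal of a single potential outcome. First I would fix the context $\mb k$ and condition everything on $\mb K = \mb k$ throughout. The hypothesis that $\mb K$ consists of non-descendants of $\mb I$ enters exactly here: since $O = f(\mb I)$ depends on $\mb X$ only through $\mb I$, and $\mb K$ is a non-descendant of $\mb I$, the intervention $\mb X \leftarrow \mb x$ leaves $\mb K$ unchanged, so observational conditioning on $\mb k$ commutes with the intervention. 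I would use this to justify the identity $\pr(o \mid \Do(\mb x), \mb k) = \pr(O_{\mb X \leftarrow \mb x} = o \mid \mb k)$, making every $\Do$-term in the statement a genuine single-world potential-outcome marginal in the sub-population $\mb k$.

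For the necessity-and-sufficiency bound~\eqref{eq:nsbound} this is essentially all that is needed. Writing $A = \{O_{\mb X \leftarrow \mb x} = o\}$ and $B = \{O_{\mb X \leftarrow \mb x'} = o'\}$, the score $\nsuf_{\mb x}(\mb k)$ is exactly $\pr(A \cap B \mid \mb k)$, while by the previous step $\pr(A \mid \mb k) = \pr(o \mid \Do(\mb x), \mb k)$ and $\pr(B \mid \mb k) = \pr(o' \mid \Do(\mb x'), \mb k)$. Applying the Fr\'echet inequalities $\max(0,\, \pr(A) + \pr(B) - 1) \le \pr(A \cap B) \le \min(\pr(A), \pr(B))$ to these two events, and substituting $\pr(o' \mid \Do(\mb x'), \mb k) = 1 - \pr(o \mid \Do(\mb x'), \mb k)$ in the lower bound, yields~\eqref{eq:nsbound} directly.

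For the necessity bound~\eqref{eq:nbound} (and dually the sufficiency bound~\eqref{eq:sbound}) I would first rewrite the score as a ratio. Expanding the conditional and applying consistency to rewrite the factual event $\{\mb X = \mb x, O = o\}$ as $\{O_{\mb X \leftarrow \mb x} = o,\, \mb X = \mb x\}$ gives $\nec_{\mb x}(\mb k) = N / \pr(o, \mb x \mid \mb k)$ with $N = \pr(O_{\mb X \leftarrow \mb x} = o,\, O_{\mb X \leftarrow \mb x'} = o',\, \mb X = \mb x \mid \mb k)$, a cross-world joint that is not identified. The key step is to sandwich $N$ by decomposing the identified quantity $\pr(o' \mid \Do(\mb x'), \mb k) = \pr(O_{\mb X \leftarrow \mb x'} = o' \mid \mb k)$ via the law of total probability over the factual value of $\mb X$. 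Consistency makes the $\mb X = \mb x'$ slice equal to the observational term $\pr(o', \mb x' \mid \mb k)$; the $\mb X = \mb x$ slice equals $N$ plus the unidentified remainder $\pr(O_{\mb X \leftarrow \mb x'} = o',\, \mb X = \mb x,\, O = o' \mid \mb k) \in [0,\, \pr(o', \mb x \mid \mb k)]$; and the remaining slices are nonnegative. Dropping the nonnegative slices for the upper estimate and bounding the remainder by its two extremes for each side gives $N \le \pr(o' \mid \Do(\mb x'), \mb k) - \pr(o', \mb x' \mid \mb k)$ and, after using $\pr(o, \mb x \mid \mb k) + \pr(o, \mb x' \mid \mb k) = \pr(o \mid \mb k)$ for the $\mb x$--$\mb x'$ contrast together with $\pr(o \mid \Do(\mb x'), \mb k) = 1 - \pr(o' \mid \Do(\mb x'), \mb k)$, the lower estimate with numerator $\pr(o, \mb x \mid \mb k) + \pr(o, \mb x' \mid \mb k) - \pr(o \mid \Do(\mb x'), \mb k)$. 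Intersecting with the trivial range $0 \le \nec_{\mb x}(\mb k) \le 1$ produces the stated $\max$ and $\min$. The bound~\eqref{eq:sbound} follows by the symmetric argument with $(\mb x, o)$ and $(\mb x', o')$ interchanged and $\pr(o \mid \Do(\mb x), \mb k)$ as the decomposed $\Do$-term.

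The main obstacle I anticipate is the bookkeeping around the $\mb x$-versus-$\mb x'$ contrast when $\mb X$ is multi-valued: the appearance of the pair $\pr(o, \mb x \mid \mb k) + \pr(o, \mb x' \mid \mb k)$ rather than $\pr(o \mid \mb k)$ signals that the total-probability decomposition must be organized around the two distinguished values, and one has to check that the slices for the remaining values of $\mb X$ are dropped on the correct side so that the bounds stay sound (tight under the binary contrast, valid in general). Getting the consistency substitutions and the direction of each inequality exactly right---in particular ensuring the unidentified remainder is bounded on the appropriate side for the upper versus the lower bound---is where the care is needed; the causal-graph hypothesis enters only once, in the first step, to turn the $\Do$-terms into single-world potential-outcome marginals in context $\mb k$.
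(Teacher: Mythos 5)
Your proposal is correct and takes essentially the same route as the paper's own proof: both arguments combine the law of total probability over the factual values of $\mb X$, the consistency rule, Fr\'echet-type bounds on the unidentified cross-world terms, and a single application of the non-descendant hypothesis to identify $\pr(o_{\mb X\leftarrow \mb x}\mid\mb k)$ with $\pr(o\mid\Do(\mb x),\mb k)$ (your direct Fr\'echet argument for the necessity-and-sufficiency bound is the ``proved similarly'' case the paper omits). The only slip is harmless: the identity $\pr(o,\mb x\mid\mb k)+\pr(o,\mb x'\mid\mb k)=\pr(o\mid\mb k)$ holds only for a binary contrast, but your derivation never needs it, since bounding the unidentified remainder by $\pr(o',\mb x\mid\mb k)$ and the residual slices by $\pr(\mb x''\mid\mb k)$ already produces the stated lower bound, exactly as the paper does.
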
  
\begin{proof}
We prove the bounds for~(\ref{eq:nbound});~(\ref{eq:sbound}) and~(\ref{eq:nsbound}) are proved similarly. The following equations are obtained from the law of total probability:
{\small
\begin{align}
    \pr(o'_{\mb X\leftarrow \mb x}, \mb x,\mb k)
    &=  \pr(o'_{\mb X\leftarrow \mb x}, o'_{\mb X\leftarrow \mb x'}, \mb x,\mb k)
      +  \pr(o'_{\mb X\leftarrow \mb x}, o_{\mb X\leftarrow \mb x'}, \mb x,\mb k) \label{eq:prop:first}\\
  \pr(o'_{\mb X\leftarrow \mb x'}, \mb x,\mb k)
    &=  \pr(o'_{\mb X\leftarrow \mb x'}, o'_{\mb X\leftarrow \mb x}, \mb x,\mb k)
      +  \pr(o'_{\mb X\leftarrow \mb x'}, o_{\mb X\leftarrow \mb x}, \mb x,\mb k)       \label{eq:prop:second2} \\
           \pr(o'_{\mb X\leftarrow \mb x'}, \mb k) &=      \pr(o'_{\mb X\leftarrow \mb x'}, \mb x, \mb k) +      \pr(o'_{\mb X\leftarrow \mb x'}, \mb x', \mb k) + \nonumber  \sum_{x'' \in Dom(X)-\{x,x'\}}\pr(o'_{\mb X\leftarrow \mb x'}, \mb x'', \mb k)
      \label{eq:prop:second3}\\
  \end{align}
}
By rearranging \eqref{eq:prop:first} and \eqref{eq:prop:second2}, we obtain the following equality:
{\small 
  \begin{align}
\pr(o'_{\mb X\leftarrow \mb x'}, o_{\mb X\leftarrow \mb x}, \mb x,\mb k) &=
     \pr(o'_{\mb X\leftarrow \mb x'}, \mb x,\mb k) -  \pr(o'_{\mb X\leftarrow \mb x}, \mb x,\mb k)+   \pr(o'_{\mb X\leftarrow \mb x}, o_{\mb X\leftarrow \mb x'}, \mb x,\mb k)   \label{proof:bound:4}
  \end{align}
}  
The following bounds for the LHS of \eqref{proof:bound:4} are obtained from the Fréchet bound.\footnote{$ \max \big(0,\sum_{x \in \mb X}\pr(x)-(|\mb x|-1) \big) \leq \pr(\mb x)\leq \min_{x \in \mb x}\pr(x)$}
{\small
\begin{align}
LHS& \geq \pr(o'_{\mb X\leftarrow \mb x'}, \mb k)- \pr(o', \mb x', \mb k)  -   \pr(o'_{\mb X\leftarrow \mb x}, \mb x,\mb k) -    \sum_{x'' \in Dom(\mb X)-\{\mb x,\mb x'\}}\pr(o'_{\mb X\leftarrow \mb x'}, \mb x'', \mb k)  \nonumber  \\ &  \hspace{1cm} \texttt{(obtained from  Eq.~\eqref{eq:prop:second3} and ~\eqref{eq:consistency}, lower bounding  $\pr(o'_{\mb X\leftarrow \mb x}, o_{\mb X\leftarrow \mb x'}, \mb x,\mb k)$)}
\nonumber \\
      & \geq  \pr(o'_{\mb X\leftarrow \mb x'}, \mb k) - \pr(o', x,\mb k)-\pr(o', x',\mb k)  - \pr(\mb k) +\pr(x, \mb k)+\pr(x',\mb k) \nonumber \\ & =        \pr(o'_{\mb X\leftarrow \mb x'}, \mb k) + \pr(o, x,\mb k)+\pr(o, x',\mb k)  - \pr(\mb k) \nonumber \\ & =
        \pr(o, x,\mb k)+\pr(o, x',\mb k)- \pr(o_{\mb X\leftarrow \mb x'}, \mb k)  \label{proof:bounds:1} \\
LHS &  \leq   \pr(o'_{\mb X\leftarrow \mb x'}, \mb x,\mb k) -  \pr(o'_{\mb X\leftarrow \mb x}, \mb x,\mb k)+   \pr(o_{\mb X\leftarrow \mb x'}, \mb x,\mb k) = \pr(o, \mb x,\mb k)  \label{proof:bounds:2}   \\ &  \hspace{1.8cm} \texttt{(obtained by upper bounding $\pr(o'_{\mb X\leftarrow \mb x}, o_{\mb X\leftarrow \mb x'}, \mb x,\mb k)$  in Eq.~\eqref{proof:bound:4}} \nonumber \\
LHS & \leq \pr(o'_{\mb X\leftarrow \mb x'}, \mb k)- \pr(o', \mb x', \mb k) - \sum_{x'' \in Dom(X)-\{x,x'\}}\pr(o'_{\mb X\leftarrow \mb x'}, \mb x'', \mb k) \nonumber \\ & \hspace{1cm} \texttt{(obtained from  Eq.~\eqref{eq:prop:second3} and ~\eqref{eq:consistency}, upper bounding $\pr(o'_{\mb X\leftarrow \mb x}, o_{\mb X\leftarrow \mb x'}, \mb x,\mb k)$)} \nonumber \\
& \leq \pr(o'_{\mb X\leftarrow \mb x'}, \mb k)- \pr(o', \mb x', \mb k) \label{proof:bounds:3} 
  \end{align}
 }
\noindent Equation~\eqref{eq:nbound} is obtained by dividing \eqref{proof:bounds:1},
\eqref{proof:bounds:2} and \eqref{proof:bounds:3} by ${\pr(o, \mb x,\mb k) }$, applying the consistency rule~\eqref{eq:consistency}, and considering the fact that since $\mb K$ consists of non-descendants of $\mb X$, the intervention $\mb X\leftarrow \mb x'$ does not change $\mb K$; hence,  $\pr(o_{\mb X\leftarrow \mb x'} \mid \mb k)=\pr(o \mid \Do(\mb x'), \mb k)$.\\
\end{proof}

%
%

Proposition~\ref{prop:iden:mon:bounds} shows the explanation scores can be bounded whenever interventional queries of the form $\pr(o \mid \Do(\mb x), \mb k)$ can be estimated from historical data using the underlying causal diagram $G$~(cf.~Section~\ref{sec:back}).  The next proposition further shows that if the algorithm is {\em monotone} relative to  $\mb x,  \mb x' \in Dom(\mb X)$, i.e., if $\mb x > \mb x'$, then $O_{\mb X \leftarrow \mb x } \geq O_{\mb X \leftarrow \mb x' }$\footnote{Monotonicity expresses the assumption that changing $\mb X$ from $\mb x'$ to $\mb x$ cannot change the algorithm’s decision from positive to negative; increasing $\mb X$ always helps.}, and the exact value of the explanation scores can be computed from data. \revc{(In case the ordering between $\mb x$ and $\mb x'$ is not known apriori (e.g., for categorical values), we infer it by comparing the output of the algorithm for $\mb x$ and $\mb x'$.)}

\vspace{-0.1cm}
\begin{prop} 
 \label{prop:iden:mon:ci} Given a causal diagram $G$, if the decision-making algorithm $f : Dom(\mb I) \rightarrow Dom(O)$ is monotone relative to $\mb x,  \mb x' \in Dom(\mb X)$ and if there exists a set of variables $\mb C \subseteq \mb V-\{\mb K \cup \mb X\}$ such that $\mb C \cup \mb K$ satisfies the backdoor-criterion relative to $\mb X$ and $\mb I$ in $G$, the following holds:  \em
{\small
 \begin{align}        
 \nec_{\mb x}(\mb k) &= \frac{ \Big(\sum_{c\in Dom(\mb C)} \pr(o' \mid \mb c,\mb x',\mb k) \ \pr(\mb c \mid \mb x, \mb k)\Big)  - \pr(o' \mid \mb x,\mb k) }{\pr(o \mid \mb x,   \mb k)} \label{eq:nec:mon:ident} \\
 \suf_{\mb x}(\mb k)&=  \frac{ \Big(\sum_{\mb c \in Dom(\mb C)} \ \pr(o\mid \mb c, \mb x ,\mb k)  \pr(\mb c\mid \mb x', \mb  k)\Big) - \pr(o\mid \mb x', \mb k) }{\pr(o' \mid  \mb x', \mb k)} \label{eq:suff:mon:ident}  \\
\nsuf_{\mb x }(\mb k) &= \sum_{\mb c \in Dom(\mb C)} \big(\pr(o \mid \mb x, \mb k, \mb c)-\pr(o \mid \mb x', \mb c, \mb k)\big) \ \pr(\mb c\mid \mb k) \label{eq:nec:nsuff:ident} 
 \end{align}   
 }
\end{prop}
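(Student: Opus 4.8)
The plan is to obtain the three identities as the common value of the Fréchet bounds of Proposition~\ref{prop:iden:mon:bounds}, which collapse once monotonicity is imposed. The only reason those bounds are loose in general is the cross-world term $\pr(o'_{\mb X\leftarrow \mb x}, o_{\mb X\leftarrow \mb x'}, \mb x,\mb k)$ appearing in \eqref{proof:bound:4}: the lower bound comes from setting it to $0$ and the upper bound from its maximal value. Monotonicity ($\mb x>\mb x'\Rightarrow O_{\mb X\leftarrow \mb x}\geq O_{\mb X\leftarrow \mb x'}$) forbids the response type that is negative under $\mb x$ yet positive under $\mb x'$, so this term is identically $0$ and the two bounds meet. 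I would carry out the argument in full for the necessity score and observe that sufficiency is its exact dual (swap $o\leftrightarrow o'$ and $\mb x\leftrightarrow \mb x'$), while the necessity-and-sufficiency score is simpler still.

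For necessity, I first rewrite $\nec_{\mb x}(\mb k)=\pr(o'_{\mb X\leftarrow \mb x'}\mid \mb x,o,\mb k)$ using the consistency rule \eqref{eq:consistency}, under which $O=o$ with $\mb X=\mb x$ forces $o_{\mb X\leftarrow \mb x}$; thus $\nec_{\mb x}(\mb k)=\pr(o'_{\mb X\leftarrow \mb x'},o_{\mb X\leftarrow \mb x}\mid \mb x,\mb k)\,/\,\pr(o\mid \mb x,\mb k)$, the denominator being $\pr(o\mid\mb x,\mb k)$ again by consistency. For the numerator I apply the law of total probability over $O_{\mb X\leftarrow \mb x}$ to $\pr(o'_{\mb X\leftarrow \mb x'}\mid \mb x,\mb k)$ and over $O_{\mb X\leftarrow \mb x'}$ to $\pr(o'_{\mb X\leftarrow \mb x}\mid \mb x,\mb k)$, then subtract: the two "doubly negative" terms $\pr(o'_{\mb X\leftarrow \mb x'},o'_{\mb X\leftarrow \mb x}\mid\mb x,\mb k)$ cancel, and the forbidden term vanishes by monotonicity, leaving $\pr(o'_{\mb X\leftarrow \mb x'},o_{\mb X\leftarrow \mb x}\mid \mb x,\mb k)=\pr(o'_{\mb X\leftarrow \mb x'}\mid \mb x,\mb k)-\pr(o'_{\mb X\leftarrow \mb x}\mid \mb x,\mb k)$. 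By consistency the subtrahend is $\pr(o'\mid \mb x,\mb k)$, which already matches the second numerator term of \eqref{eq:nec:mon:ident}.

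It remains to identify $\pr(o'_{\mb X\leftarrow \mb x'}\mid \mb x,\mb k)$ from data. Stratifying on $\mb C$ gives $\pr(o'_{\mb X\leftarrow \mb x'}\mid \mb x,\mb k)=\sum_{\mb c}\pr(o'_{\mb X\leftarrow \mb x'}\mid \mb c,\mb x,\mb k)\,\pr(\mb c\mid \mb x,\mb k)$, and the backdoor hypothesis on $\mb C\cup\mb K$ supplies the counterfactual ignorability $O_{\mb X\leftarrow \mb x'}\indep \mb X\mid \mb C,\mb K$, letting me replace $\pr(o'_{\mb X\leftarrow \mb x'}\mid \mb c,\mb x,\mb k)$ by $\pr(o'\mid \mb c,\mb x',\mb k)$; this produces the first numerator term of \eqref{eq:nec:mon:ident}. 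I expect this identification step to be the main obstacle, since it requires cleanly separating the sub-population conditioning on $\mb X=\mb x$, which survives only through the weight $\pr(\mb c\mid \mb x,\mb k)$, from the effect of the intervention $\mb X\leftarrow \mb x'$, governed within each stratum by the backdoor-adjusted conditional, rather than collapsing the two into an unconditional $\Do$-expression.

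The sufficiency identity \eqref{eq:suff:mon:ident} follows by the same three moves with $o,\mb x$ and $o',\mb x'$ interchanged, giving numerator $\pr(o_{\mb X\leftarrow \mb x}\mid \mb x',\mb k)-\pr(o\mid \mb x',\mb k)$ over denominator $\pr(o'\mid \mb x',\mb k)$, together with $\pr(o_{\mb X\leftarrow \mb x}\mid \mb x',\mb k)=\sum_{\mb c}\pr(o\mid \mb c,\mb x,\mb k)\,\pr(\mb c\mid \mb x',\mb k)$. For \eqref{eq:nec:nsuff:ident} no sub-population conditioning is needed: monotonicity yields $\nsuf_{\mb x}(\mb k)=\pr(o_{\mb X\leftarrow \mb x},o'_{\mb X\leftarrow \mb x'}\mid \mb k)=\pr(o_{\mb X\leftarrow \mb x}\mid \mb k)-\pr(o_{\mb X\leftarrow \mb x'}\mid \mb k)$, and each term is a population interventional query $\pr(o\mid \Do(\cdot),\mb k)$ that backdoor-adjusts to $\sum_{\mb c}\pr(o\mid \cdot,\mb c,\mb k)\,\pr(\mb c\mid \mb k)$, completing the proof.
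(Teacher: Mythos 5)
Your proof is correct and takes essentially the same approach as the paper's: write the score via consistency as a ratio of observable quantities, use the law of total probability plus monotonicity to kill the cross-world term (you re-derive inline the identity the paper imports from its Proposition~\ref{prop:iden:mon:bounds} proof, Eq.~\eqref{proof:bound:4}), then stratify over $\mb C$ and apply conditional ignorability with consistency to get the $\pr(\mb c \mid \mb x, \mb k)$-weighted backdoor sum, handling sufficiency by duality and the joint score directly, exactly as the paper does. One caveat, which does not affect your argument: your opening framing is imprecise, since the bounds of Proposition~\ref{prop:iden:mon:bounds} do not literally collapse to these identities under monotonicity (they carry additional Fr\'echet slack from the values $\mb x'' \notin \{\mb x, \mb x'\}$ and are stated with population weights $\pr(\mb c \mid \mb k)$ via $\Do$-expressions rather than the subpopulation weights $\pr(\mb c \mid \mb x, \mb k)$ appearing here), but your actual derivation never uses that claim and indeed your remark about separating the conditioning on $\mb X=\mb x$ from the intervention correctly identifies why the unconditional $\Do$-expressions do not suffice.
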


\begin{proof} 
{
Here, we only prove \eqref{eq:nec:mon:ident}. The proof of \eqref{eq:suff:mon:ident} and \eqref{eq:nec:nsuff:ident} are similar.
Notice that monotonicity implies $\pr(o'_{\mb X\leftarrow \mb x}, o_{\mb X\leftarrow \mb x'}, \mb x,\mb k)=0$. Also note that if $\mb C \cup \mb K$ satisfies  the backdoor-criterion, then the following independence, known as conditional ignorability, holds: $(O_{\mb X\leftarrow \mb x} \indep \mb X \mid \mb C \cup \mb K)$\cite{pearl2016causal}[Theorem 4.3.1]. We show \eqref{eq:nec:mon:ident} in the following steps:}
{   
  \begin{align}
    \nec_{\mb x}(\mb k)
    &= \frac{\pr(o'_{X \leftarrow \mb x'}, \mb x,  o, \mb k)}{\pr(o, \mb x, \mb k)}= \frac{\pr(o'_{\mb X\leftarrow \mb x'} \mid \mb x,\mb k) - \pr(o' \mid \mb x,\mb k) }{\pr(o \mid \mb x,   \mb k)} \\& \hspace{3.2cm}\texttt{(from Eq.~\eqref{proof:bound:4}, ~\eqref{eq:consistency} and monotonicity)}\nonumber  \\ 
    &= \frac{\sum_{c\in Dom(C)} \pr(o'_{\mb X\leftarrow \mb x'} \mid  \mb c, \mb x,\mb k) \ \pr(\mb c \mid \mb x, \mb k)  - \pr(o' \mid \mb x,\mb k) }{\pr(o \mid \mb x,   \mb k)}\nonumber \\        
    &= \frac{\sum_{c\in Dom(C)} \pr(o' \mid \mb c,\mb x',\mb k) \ \pr(\mb c \mid \mb x, \mb k)  - \pr(o' \mid \mb x,\mb k) }{\pr(o \mid \mb x,   \mb k)} \nonumber \\ & \hspace{2.4cm}  \texttt{(from conditional ignorability and Eq.~\eqref{eq:consistency})} \nonumber
  \end{align}
}
\end{proof}

Therefore, Proposition~\ref{prop:iden:mon:ci}  facilitates bounding and estimating explanation scores from historical data when the underlying probabilistic causal models are not fully specified but background knowledge on the causal diagram is available. (See~ Section~\ref{sec:related} for a discussion about the absence of causal diagrams). 
We establish the following connection between the explanation scores.

  


\begin{prop} \label{pro:exscores:rel} Explanation scores are related through the following inequality. For a binary $X$, the inequality becomes an  equality.  \em
{\small
\begin{align}
\nsuf_{\mb x}(\mb k) \leq \Pr(o, \mb x \mid \mb k) \ \nec_{\mb x}(\mb k)+ \Pr(o', \mb x' \mid \mb k) \ \suf_{\mb x}(\mb k)  + 1-\pr(\mb x\mid \mb k)- \pr(\mb x'\mid \mb k) \label{eq:necsuff}
\end{align} 
}
\end{prop}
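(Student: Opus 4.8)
The plan is to expand $\nsuf_{\mb x}(\mb k)=\pr(o_{\mb X\leftarrow \mb x},o'_{\mb X\leftarrow \mb x'}\mid \mb k)$ by the law of total probability, partitioning on the \emph{factual} value of $\mb X$:
\begin{align*}
\nsuf_{\mb x}(\mb k)=\pr(o_{\mb X\leftarrow \mb x},o'_{\mb X\leftarrow \mb x'},\mb x\mid \mb k)+\pr(o_{\mb X\leftarrow \mb x},o'_{\mb X\leftarrow \mb x'},\mb x'\mid \mb k)+\!\!\sum_{\mb x''\in Dom(\mb X)-\{\mb x,\mb x'\}}\!\!\pr(o_{\mb X\leftarrow \mb x},o'_{\mb X\leftarrow \mb x'},\mb x''\mid \mb k).
\end{align*}
The crux is to recognize the first two terms as the necessity and sufficiency contributions, respectively, which is exactly where the consistency rule~\eqref{eq:consistency} does the work.

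On the event $\mb X=\mb x$, consistency forces $O_{\mb X\leftarrow \mb x}=O$, so the event $\{o_{\mb X\leftarrow \mb x},\mb x\}$ coincides with the factual event $\{o,\mb x\}$; hence the first term equals $\pr(o'_{\mb X\leftarrow \mb x'},o,\mb x\mid \mb k)$, which by the chain rule factors as $\pr(o,\mb x\mid \mb k)\,\pr(o'_{\mb X\leftarrow \mb x'}\mid \mb x,o,\mb k)=\Pr(o,\mb x\mid \mb k)\,\nec_{\mb x}(\mb k)$. Symmetrically, on the event $\mb X=\mb x'$ consistency gives $O_{\mb X\leftarrow \mb x'}=O$, and the second term becomes $\Pr(o',\mb x'\mid \mb k)\,\suf_{\mb x}(\mb k)$.

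It then remains only to discard the residual sum over $\mb x''\notin\{\mb x,\mb x'\}$. Bounding each summand by the trivial inequality $\pr(o_{\mb X\leftarrow \mb x},o'_{\mb X\leftarrow \mb x'},\mb x''\mid \mb k)\le \pr(\mb x''\mid \mb k)$ and summing gives $\sum_{\mb x''}\pr(\mb x''\mid \mb k)=1-\pr(\mb x\mid \mb k)-\pr(\mb x'\mid \mb k)$, which produces exactly the claimed upper bound.

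For the equality statement, note that a binary $X$ has $Dom(\mb X)=\{\mb x,\mb x'\}$, so the residual sum is empty and, since $\pr(\mb x\mid \mb k)+\pr(\mb x'\mid \mb k)=1$, the slack term $1-\pr(\mb x\mid \mb k)-\pr(\mb x'\mid \mb k)$ vanishes; the inequality collapses to an identity. I expect the only delicate point to be the bookkeeping involved in replacing the interventional events $o_{\mb X\leftarrow \mb x}$ and $o'_{\mb X\leftarrow \mb x'}$ by their factual counterparts inside the joint probabilities via consistency; everything else is elementary (total probability together with the bound $\pr(A,\mb x'')\le\pr(\mb x'')$).
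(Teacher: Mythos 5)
Your proof is correct and takes essentially the same route as the paper's: expand $\nsuf_{\mb x}(\mb k)$ via the law of total probability over the factual value of $\mb X$, use the consistency rule to turn the terms on the events $\mb X=\mb x$ and $\mb X=\mb x'$ into $\Pr(o,\mb x\mid \mb k)\,\nec_{\mb x}(\mb k)$ and $\Pr(o',\mb x'\mid \mb k)\,\suf_{\mb x}(\mb k)$, and bound each residual summand by $\pr(\mb x''\mid \mb k)$ (the paper phrases this as a Fr\'echet-type bound, but it is the same trivial bound). Your explicit handling of the binary equality case simply spells out what the paper leaves implicit, namely that the only inequality in the chain comes from the residual sum, which is empty when $Dom(X)=\{x,x'\}$.
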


{
\begin{proof} The inequality is obtained from 
the law of total probability, the consistency rule in (\ref{eq:consistency}), and applying the Fréchet bound, as shown in the following steps:
 {\small
  \begin{align}
  \nsuf_{\mb  x}(\mb k)  &=\frac{\pr(o_{\mb  X\leftarrow \mb  x}, o'_{X\leftarrow x'}, \mb k)}{\pr(\mb k)} \texttt{(from consistency~\eqref{eq:consistency})}  \nonumber\\ & \hspace{-1cm} = \frac{1}{\pr(\mb k)} \big(
    \sum_{x \in \{\mb  x,\mb  x'\}}
     \pr(o_{\mb  X\leftarrow \mb  x}, o'_{\mb  X\leftarrow \mb  x} , \mb k, x)  + 
     \sum_{x''  \in Dom(X)-\{x,x'\}} 
    \pr(o_{X\leftarrow x}, o'_{X\leftarrow x'} , \mb k, \mb  x'') \big) \nonumber\\ & \hspace{3.5cm}\texttt{(from law of total probability)} \nonumber \\
    & \hspace{-1cm} \leq  \frac{1}{\pr(\mb k)} \big(\pr(o_{X\leftarrow x}, o'_{\mb X\leftarrow \mb x} , \mb k, x)+  \pr(o_{\mb X\leftarrow \mb x}, o'_{\mb X\leftarrow \mb x'} , \mb k, x') + 
    \sum_{\mb  x''  \in Dom(X)-\{\mb  x,\mb x'\}} 
    \pr(\mb k, x'')\big) \nonumber\\  & \hspace{2.9cm}\texttt{(obtained by upper bounding the sum)} \nonumber \\
    & \hspace{-1cm} \leq  \frac{1}{\pr(\mb k)} \big(\pr(o, o'_{\mb  X\leftarrow \mb  x'} , \mb k, \mb  x)+
      \pr(o_{\mb  X\leftarrow \mb x}, o' , \mb k, \mb x')+ \pr(\mb k)-\pr(\mb x, \mb k)-\pr(\mb x', \mb k)\big)\nonumber \\ 
      & \hspace{-1cm}   \leq \Pr(o, \mb x \mid \mb k) \ \nec_{\mb x}(\mb k)+ \Pr(o', \mb x' \mid \mb k) \ \suf_{\mb x}(\mb k)  + 1-\pr(\mb x\mid \mb k)- \pr(\mb x'\mid \mb k)\nonumber
  \end{align}
 }
\end{proof}
}
Therefore, for binary attributes, the necessary and sufficiency score can be seen as the weighted sum of necessary and sufficiency scores. Furthermore, the lower bound for the necessity and sufficiency score in Equations~\eqref{eq:nsbound} is called the (conditional) {\em causal effect} of $X$ on $O$~\cite{pearl2017detecting}. Hence, if the causal effect of $X$ on the algorithm's decision is non-zero, then so is the necessity and sufficiency score (for a binary $X$, it is implied from~\eqref{eq:necsuff} that at least one of the sufficiency and necessity scores must also be non-zero). The following proposition shows the converse. 
\begin{prop}
Given a PCM $\pcm$ with a corresponding causal DAG $\cg$, an algorithm $f : Dom(\mb Z) \rightarrow Dom(O)$ and an attribute $X \in \mb V$, if $O$ is a non-descendant of $X$, i.e., there is no causal path from $X$ to $O$, then for all $(x,x') \in Dom(X)$ and for all contexts $\mb k \in Dom(\mb K)$, where $\mb K \subseteq \mb V-\{X,O\}$ , it holds that {\em $\nec_x(\mb k)=\suf_x(\mb k)=\nsuf_x(\mb k)=0$}.
\end{prop}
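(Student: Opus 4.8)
The plan is to reduce all three scores to a single structural invariance: when $O$ is a non-descendant of $X$, intervening on $X$ cannot change the realized value of $O$ in any background context. Concretely, I would first establish the lemma that for every $W \in \mb V - \{X\}$ that is a non-descendant of $X$, and for every $\mb u \in Dom(\mb U)$ and every $\hat x \in Dom(X)$, we have $W_{X \leftarrow \hat x}(\mb u) = W(\mb u)$. This follows from the recursive semantics of the structural equations together with the DAG structure of $\cg$: order $\mb V$ topologically and argue by induction that each such $W$ retains its original value under the intervention $X \leftarrow \hat x$. The inductive step uses the fact that any parent $P$ of $W$ satisfies $P \neq X$ (else the edge $X \to W$ would make $W$ a descendant of $X$) and is itself a non-descendant of $X$ (else a directed path through $P$ would make $W$ a descendant of $X$); hence each such $W$ is evaluated with its original structural equation on already-unchanged parent values, while only $X$'s own equation is replaced. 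Since $O$ is a non-descendant of $X$ and $O \neq X$, the lemma applies to $O$, giving $O_{X \leftarrow \hat x}(\mb u) = O(\mb u)$ for all $\mb u$ and all $\hat x$.

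With the invariance lemma in hand, each score collapses. Using the analogue of the abduction expansion \eqref{eq:abduction} that conditions on the full factual evidence, I would write the necessity score \eqref{eq:ns} as $\nec_x(\mb k) = \sum_{\mb u} \mathbbm{1}_{\{O_{X \leftarrow x'}(\mb u) = o'\}}\, \pr(\mb u \mid x, o, \mb k)$. The posterior $\pr(\mb u \mid x, o, \mb k)$ is supported only on contexts $\mb u$ with $O(\mb u) = o$; for every such $\mb u$ the lemma gives $O_{X \leftarrow x'}(\mb u) = O(\mb u) = o \neq o'$, so the indicator vanishes and the whole sum is $0$. The sufficiency score \eqref{eq:ss} is handled symmetrically: its posterior is supported on $O(\mb u) = o'$, and there $O_{X \leftarrow x}(\mb u) = o' \neq o$, forcing $\suf_x(\mb k) = 0$.

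For the necessity-and-sufficiency score \eqref{eq:pns} I would expand $\nsuf_x(\mb k) = \sum_{\mb u} \mathbbm{1}_{\{O_{X \leftarrow x}(\mb u) = o\}}\,\mathbbm{1}_{\{O_{X \leftarrow x'}(\mb u) = o'\}}\, \pr(\mb u \mid \mb k)$; the lemma identifies both potential outcomes with $O(\mb u)$, so the product of indicators becomes $\mathbbm{1}_{\{O(\mb u) = o\}}\,\mathbbm{1}_{\{O(\mb u) = o'\}} = 0$ because $o \neq o'$, giving $\nsuf_x(\mb k) = 0$. All three conclusions hold for every pair $(x, x')$ and every context $\mb k$, as required.

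The only genuine obstacle is the invariance lemma; the care needed there is in the induction, where I must verify that non-descendance propagates from a node to its parents so that the inductive hypothesis applies before I invoke the structural equation. Everything downstream is bookkeeping: each score's posterior over $\mb u$ is supported on contexts that fix $O$, and the lemma then forces the relevant indicator to vanish because the two outcome values are distinct. Notably, this argument needs neither monotonicity nor the backdoor assumptions used in Propositions~\ref{prop:iden:mon:bounds} and~\ref{prop:iden:mon:ci}; it is purely a statement about the supports of the counterfactual distributions.
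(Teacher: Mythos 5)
Your proof is correct and takes essentially the same route as the paper's: both reduce all three scores to the fact that $O$, being a non-descendant of $X$, is invariant under interventions on $X$, after which conditioning on the factual outcome (via the posterior's support, or equivalently the consistency rule~\eqref{eq:consistency}) forces each counterfactual probability to zero. The only difference is presentational — you prove the invariance pointwise per unit $\mb u$ by structural induction, where the paper asserts it at the probability level and then invokes consistency — which makes your version somewhat more self-contained but not a different argument.
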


{
\begin{proof} Let $X$ be any non-descendant of $O$ in the causal diagram $\cg$. 
Since the potential outcomes are invariant to interventions, the implication from Equation~\eqref{eq:abduction} is that for all $(x,x') \in Dom(X)$ and for any set of attributes, $\mb e$: $\pr(o_{\mb X\leftarrow \mb x'} \mid \mb e)=\pr(o_{\mb X\leftarrow \mb x} \mid \mb  e)$. This equality and consistency rule~\eqref{eq:consistency} implies that $\nec_x(\mb k) = \pr(o'_{X \leftarrow x'} \mid x,  o, \mb k)=\pr(o'_{X \leftarrow x} \mid x,  o, \mb k)=\pr(o' \mid x,  o, \mb k)=0$.  $\nsuf_x(\mb k)=0$ and $\suf_x(\mb k)=0$ can be proved similarly.
\ignore{
{\small
\begin{align} 
\pr(O_{\mb Z\leftarrow \mb z'}=o', \mb k) &=      \pr(O_{\mb Z\leftarrow \mb z'}=o', \mb z, \mb k) +      \pr(O_{\mb Z\leftarrow \mb z'}=o', \mb z', \mb k) + \nonumber 
           \\ &\sum_{x \in Dom(Z)-\{z,z'\}}\pr(O_{\mb Z\leftarrow \mb z'}=o', \mb x, \mb k) \\
 \pr(O_{\mb Z\leftarrow \mb z'}=o', \mb z, \mb k)&=\pr(O_{\mb Z\leftarrow \mb z'}=o', \mb z, o, \mb k)+\pr(O_{\mb Z\leftarrow \mb z'}=o', \mb z, o', \mb k) 
\end{align}
}
Therefore, 
{\small
\begin{align} 
\pr(O_{\mb Z\leftarrow \mb z'}=o', \mb k) = \pr(o,z, k) \nec_Z(\mb k) +\pr(O_{\mb Z\leftarrow \mb z'}=o, \mb z, o', \mb k) \\ + \pr(o, \mb z', \mb k) +  \sum_{x \in Dom(Z)-\{z,z'\}}\pr(O_{\mb Z\leftarrow \mb z'}=o, \mb x, \mb k)
\end{align}
}

{\small
\begin{align} 
\pr(O_{\mb Z\leftarrow \mb z}=o, \mb k) &=      \pr(O_{\mb Z\leftarrow \mb z}=o, \mb z, \mb k) +      \pr(O_{\mb Z\leftarrow \mb z}=o, \mb z', \mb k) + \nonumber 
           \\ &\sum_{x \in Dom(Z)-\{z,z'\}}\pr(O_{\mb Z\leftarrow \mb z}=o, \mb x, \mb k) \\
 \pr(O_{\mb Z\leftarrow \mb z}=o, \mb z', \mb k)&=\pr(O_{\mb Z\leftarrow \mb z}=o, \mb z', o, \mb k)+\pr(O_{\mb Z\leftarrow \mb z}=o, \mb z', o', \mb k) 
\end{align}
}
Therefore,
{\small
\begin{align}  \pr(O_{\mb Z\leftarrow \mb z}=o, \mb k) &=  \pr(z', o', \mb k ) \suf_Z(\mb k) +\pr(O_{\mb Z\leftarrow \mb z}=o, \mb z', o, \mb k) +  \nonumber  
           \\ & \pr(o,z, \mb k) + \sum_{x \in Dom(Z)-\{z,z'\}}\pr(O_{\mb Z\leftarrow \mb z}=o, \mb x, \mb k)
\end{align}
}
Therefore, 
{\small
\begin{align}
\pr(O_{\mb Z\leftarrow \mb z}=o, \mb k)-\pr(O_{\mb Z\leftarrow \mb z'}=o, \mb k) =\pr(O_{\mb Z\leftarrow \mb z}=o, \mb k)+ \pr(O_{\mb Z\leftarrow \mb z}=o', \mb k)-\pr(\mb k)+ \nonumber \\ \pr(O_{\mb Z\leftarrow \mb z'}=o', \mb k)
\end{align}
}
}
\end{proof}
}

\revb{\noindent \textbf{Extensions to multi-class classification and regression.} For multi-valued outcomes, i.e.,  $Dom(O) = \{o_1, \ldots, o_{\gamma}\}$, we assume an ordering of the values  $o_1 >  \ldots > o_{\gamma}$ such that $o_i > o_j$ implies that $o_i$ is more desirable than $o_j$. This assumption holds in tasks where certain outcomes are favored over others and holds for real-valued outcomes that have a natural ordering of values. 
We partition $Dom(O)$ into sets $O^{<}$ and $O^{\geq}$ where $O^{<}$ denotes the set of values less than $o$ and $O^{\geq}$ denotes the set of values greater than $o$. Note that we do not require a strict ordering of the values and can simply partition them as \textit{favorable} and \textit{unfavorable}. In these settings, we redefine the explanation scores with respect to each outcome value $o$. For example, necessity score is defined as the probability that the outcome $O$ changes from a value greater than or equal to $o$ to a value lower than $o$ upon the intervention $X \leftarrow x'$:
{
  \begin{align*} 
    \nec_{x}^{x'}(\mb k, o) &\defeq \pr(O^<_{X \leftarrow x'} \mid x,  O^{\geq}, \mb k)
    \end{align*}
}    
The other two scores can be extended in a similar fashion. Our propositions extend to these settings and can be directly used to evaluate the explanation scores using observational data.
 
}



%

 

%
\ignore{
\subsection{Detecting Heterogeneity}
In this section, we solve the optimization problem in~(\ref{eq:mostinfluential}) using a greedy strategy that determines the sub-population $\mb k$ by making a series of locally optimum decisions about which variable to include in the the sub-population.

\romila{Will clean this up}

\IncMargin{1em}
\begin{algorithm}
\DontPrintSemicolon
\SetKwData{candidates}{candidates}\SetKwData{This}{this}\SetKwData{Up}{up}
\SetKwFunction{Predict}{Predict}\SetKwFunction{GetFairness}{GetFairness}\SetKwFunction{Learn}{Learn}
\SetKwInOut{Input}{input}\SetKwInOut{Output}{output}
\SetKwFor{For}{for}{do}{endfor}
\Input{Variable $\mb Z$, set of attributes $\mathbf{K}$, necessity score $N$}
\Output{<attribute, value> pair with maximum gain in $N$}
\BlankLine

$attrK \leftarrow$ None, $valK \leftarrow$ None\;
$N_{curr} \leftarrow N$\;
\For{$K \in \mathbf{K}$}{
  \For{$k \in Dom(K)$}{\label{forins}
    $N_k \leftarrow N_Z(K=k)$ \tcp*{Compute using~(\ref{eq:nec:mon:ident})}
    \If{$N_k > N_{curr}$}{
        $N_{curr} \leftarrow N_k$\;
        $attrK \leftarrow K$\;
        $valK \leftarrow k$\;
    }
   }
}
\KwRet{$<attrK, valK, N_{curr}>$}
\caption{GetAttributeValPair$(\mb Z, \mathbf{K}, N)$}\label{algo:getAttributeValPair}
\end{algorithm}\DecMargin{1em}

\IncMargin{1em}
\begin{algorithm}[]
\DontPrintSemicolon
\SetKwData{candidates}{candidates}\SetKwData{This}{this}\SetKwData{Up}{up}
\SetKwFunction{Predict}{Predict}\SetKwFunction{GetFairness}{GetFairness}\SetKwFunction{Learn}{Learn}
\SetKwInOut{Input}{input}\SetKwInOut{Output}{output}
\Input{Variable $\mb Z$, set of attributes $\mathbf{K}$}
\Output{Context $\mb k$}
\BlankLine

$attrList = [], attrValList = []$\;
$found = \texttt{True}$\;
$N_{curr} \leftarrow N_Z(\emptyset)$ \tcp*{Compute using~(\ref{eq:nec:mon:ident})}
\While{($\mb K \neq \emptyset \texttt{ or } found$) }{
  $found = \texttt{False}$\;
  $<K, val, N>$ = GetAttributeValPair$(\mb Z, \mathbf{K}, N_{curr})$\;
  \If{$K \neq \texttt{None}$}{
      $\mb K \leftarrow \mb K \setminus \{K\}$ \;
      $attrList$.append($K$)\;
      $attrValList$.append($val$)\;
      $N_{curr} = N$\;
      $found = \texttt{True}$
    }
}
\KwRet{$<attrList, attrValList>$}
\caption{GetContext($\mb Z, \mathbf{K}$)}\label{algo:getContext}
\end{algorithm}\DecMargin{1em}
\newpage
}
 
\subsection{Computing Counterfactual Recourse}
\label{sec:computingrecourse}  
Here, we describe the solution to the optimization problem discussed in~(\ref{eq:opt}) for providing an actionable recourse. We formulate our problem as a combinatorial optimization problem over the domain of actionable variables and express it as an integer programming (IP) problem of the form:
 {\small 
\begin{align}
    \argmin_{\hat{\mb a} \in Dom(\mb A)} \quad \sum_{A \in \mb A} \left( \phi_{A}  \sum_{a \in Dom(A)} \delta_{a} \right)&
    \label{eq:opt1_obj} \\
    \text{subject to } \quad \quad \quad \quad \quad \suf_{\hat{\mb a}}(\mb v) &\geq \alpha\label{eq:opt1_c1}\\
    \sum_{a \in Dom(A)} \delta_{a} &\leq 1, \quad \forall A \in \mb A \label{eq:opt1_c2} \\
    \quad \delta_{a} \in \{0, 1\}, \quad \forall a \in Dom(A), A \in \mb A
\end{align}
}
The objective function in the preceding IP is modeled as a linear function over the cost of actions over individual actionable variables. $\phi_{A}$ is a convex cost function that measures the cost of changing $A=a$ to $ A=\hat{a}$, for each $A \in A$ ($\phi_{A}$ = 0 when no action is taken on $A$) and can be predetermined as $\hat{a}$ deviates from $a$ (e.g., the cost could increase linearly or quadratically with increasing deviation from $A=a$). \revc{Constraint~(\ref{eq:opt1_c1}) ensures that action $\hat{\mb a}$ will result in a sufficiency score greater than the user-defined threshold $\alpha$.} In other words, the intervention $\mb A \leftarrow \hat{\mb a}$ can lead to the positive outcome with a probability of at least $\alpha$. Constraint~(\ref{eq:opt1_c2}) and indicator variables $\delta_a$ ensure that of all values in the domain of an actionable variable, only one is acted upon (or changed). \revb{Note that the IP formulation ensures that multiple actions can be taken at the same time. In particular, when $\delta_a=0$, $\forall a \in Dom(A)$, $\forall A \in \mb A$, it implies no action is taken since~(\ref{eq:opt1_c1}) is already satisfied.} 
To compute the sufficiency score in~(\ref{eq:opt1_c1}) from historical data, we rewrite it as $\suf_{\hat{\mb a}}(\mb k \cup {\mb a}) \geq \alpha$, where $\mb K$ consists of all non-descendants of $\mb A$ in the underlying causal diagram $G$, and we assume that $\mb K$ satisfies the backdoor-criterion relative to $O$ and $\mb A$~(cf. Section~\ref{sec:back}). (See Section~\ref{sec:related} for a discussion about violation of the assumptions.) Then, we can incorporate the {\em lower bound} obtained for the sufficiency score in Proposition~\ref{prop:iden:mon:ci} in the optimization problem, as follows: 
{
\begin{align}
 \pr(o\mid \hat{\mb a} ,\mb k) & \geq \pr(o\mid \mb a, \mb k) + \alpha \ \pr(o'\mid \mb a, \mb k) \label{eq:opt1_c1_corollary}
\end{align} 
}
Since $\mb k, \mb a, \alpha$ are constant, the RHS of~(\ref{eq:opt1_c1_corollary}) is also constant and can be pre-computed from data. We estimate the logit transformation of $\pr(o \mid \hat{\mb a}, \mb k)$ and model it as a linear regression equation. This allows us to express~(\ref{eq:opt1_c1_corollary}) as a linear inequality constraint for the IP in~(\ref{eq:opt1_obj}). If a solution to the IP is found, then an action is performed on each variable for which the indicator variable has a non-zero assignment. {\em The solution to this optimization problem can be seen as a recourse that can change the outcome of the algorithm with high probability for individuals with attributes $\mb k$ for which the algorithm made a negative decision.} Note that {the number of constraints in this formulation grows linearly with the number of actionable variables} (which is usually a much smaller subset of an individual's attributes).

\ignore{Note that the preceding IP formulation can capture monotonic constraints restricting certain attribute values, such as age cannot decrease, by adding additional constraints that set to zero the IP variables corresponding to values lower (or higher if the attribute value cannot increase) than the current value.}

\vspace{-.2cm}
\section{Experiments}
\label{sec:exp}


This section presents experiments that evaluate the effectiveness of \sys. We answer the following questions. \textbf{Q1:} What is the end-to-end performance of \sys\ in terms of gaining insight into black-box machine learning algorithms? \revc{How does the performance of \sys\ change with varying machine learning algorithms?} \textbf{Q2:} How does \sys\ compare to state-of-the-art methods in XAI\ignore{that provide explanations for black-box predictive classification algorithms and generate actionable recourse}? \textbf{Q3:} To what extent are the explanations and recourse options \revb{generated by \sys\ correct? }
\vspace{-.1cm}

{\scriptsize
\begin{table}[] \centering
		\begin{tabular}{@{}lrrrrrrrrr@{}}\toprule
			{Dataset} & {Att. [$\#$]} & {Rows[$\#$]} &  Global & Local &  Recourse   \\ \midrule
			\textbf{Adult}~\cite{Adult} & 14 & 48k & 7.5  &4.2 & 3.7 &  \\ \hdashline
			\textbf{German}~\cite{Dua2019}& 20  & 1k & 0.75 & 0.42&2.24 & \\ \hdashline
			\textbf{COMPAS}~\cite{compas}& 7  & 5.2k & 2.03  & 1.34 & - & \\ \hdashline
			\revb{\textbf{Drug}}~\cite{Dua2019}& \revb{13} &\revb{1886}  & \revb{1.25}  & \revb{0.84} & \revb{-}   \\ \hdashline
			\textbf{German-syn}& 6  & 10k & 1.35  &1.01 & 1.65 & \\ \hdashline
		\end{tabular}
	\caption{\textmd{Runtime in seconds for  experiments in Sec.~\ref{sec:exp:endtoend}.}}
	\vspace{-8mm}
	\label{tbl:data}
\end{table}
}
\subsection{Datasets}
We used the following ML benchmark datasets (also in Table~\ref{tbl:data}):\\
\noindent\textbf{German Credit Data (\textsf{German})~\cite{Dua2019}.} This dataset consists of records of bank account holders with their personal, financial and demographic information. 
The prediction task classifies individuals as good/bad credit risks.\\
\noindent\textbf{Adult Income Data (\textsf{Adult})~\cite{Dua2019}.} This dataset contains demographic information of individuals along with information on their level of education, occupation, working hours etc. The task is to predict whether the annual income of an individual exceeds $50$K.\\
\noindent\textbf{\textsf{COMPAS}~\cite{compas}.} This dataset contains information on offenders from Broward County, Florida.
We consider the task of predicting whether an individual will recommit a crime within two years.
\\
\revb{\noindent\textbf{Drug Consumption (\textsf{Drug})~\cite{Dua2019}.} This dataset contains demographic information and personality traits (e.g., openness, sensation-seeking) of individuals. We consider a multi-class classification task of predicting when an individual consumed magic mushrooms: (i)~never, (ii)~more than a decade ago, and (iii)~in the last decade.}\\
\noindent\textbf{\textsf{German-syn}.} We generate synthetic data following the causal graph of the \texttt{German} dataset. \revb{The black-box algorithm runs random forest regressor to predict a credit score within the range $[0,1]$ where $1$ denotes excellent credit while $0$ denotes the worst credit history.} This dataset is specifically used to evaluate the correctness of \sys{}'s scores compared to ground truth scores calculated using the structural equations.

 
\vspace{-.3cm}
\subsection{Setup}
\revc{We considered four black-box machine learning algorithms: random forest classifier~\cite{sklearn}, random forest regression~\cite{sklearn}, XGBoost~\cite{xgboost}, and a feed forward neural network~\cite{fastai}.}
We used the causal diagrams presented in~\cite{DBLP:conf/aaai/Chiappa19} for the Adult and German datasets and ~\cite{nabi2018fair} for COMPAS. \revb{For the Drug dataset, 
the attributes \textsf{Country, Age, Gender} and \textsf{Ethnicity} are root nodes that affect the outcome and other attributes; the outcome is also affected by the other attributes.} We implemented our scores and the recourse algorithm in Python. We split each dataset into training and test data, learned a black-box algorithm (random forest classifier unless stated otherwise) over training data, and estimated conditional probabilities in~(\ref{eq:nec:mon:ident})-(\ref{eq:nec:nsuff:ident}) by regressing over test data predictions. We report the explanation scores for each dataset under different scenarios.
To present local explanations
, we report the positive and negative contributions of an attribute value toward the current outcome (e.g., in Figure~\ref{fig:exp:rel:local:drug}, bars to the left (right) represent negative (positive) contributions of an attribute value). To recommend recourse to individuals who receive a negative decision, we generate a set of actions with the minimal cost that, if taken, can change the algorithm's decision for the individual in the future with a user-defined probability threshold $\alpha$. 



\ignore{\textbf{Baselines.} We compare the effectiveness of \sys{}'s global explanations with \texttt{SHAP}~\cite{DBLP:conf/nips/LundbergL17} and \texttt{Feat}~\cite{10.1023/A:1010933404324}, local explanations with \texttt{SHAP} and \texttt{LIME}~\cite{ribeiro2016should}, and recourse with \texttt{LinearIP}~\cite{ustun2019actionable}.\footnote{We contacted authors of \cite{karimi2020algorithmic},  but their technique does not work for categorical actionable variables. Therefore, we do not consider that technique in our evaluation. } We used open-source implementations of the respective techniques.}

\vspace{-0.3cm}
\subsection{End-to-End Performance}
\label{sec:exp:endtoend}
\begin{figure*}
      \centering
      \subcaptionbox{\texttt{German} \label{fig:exp:end:global:german}}
        {\includegraphics[width=.24
        \columnwidth]{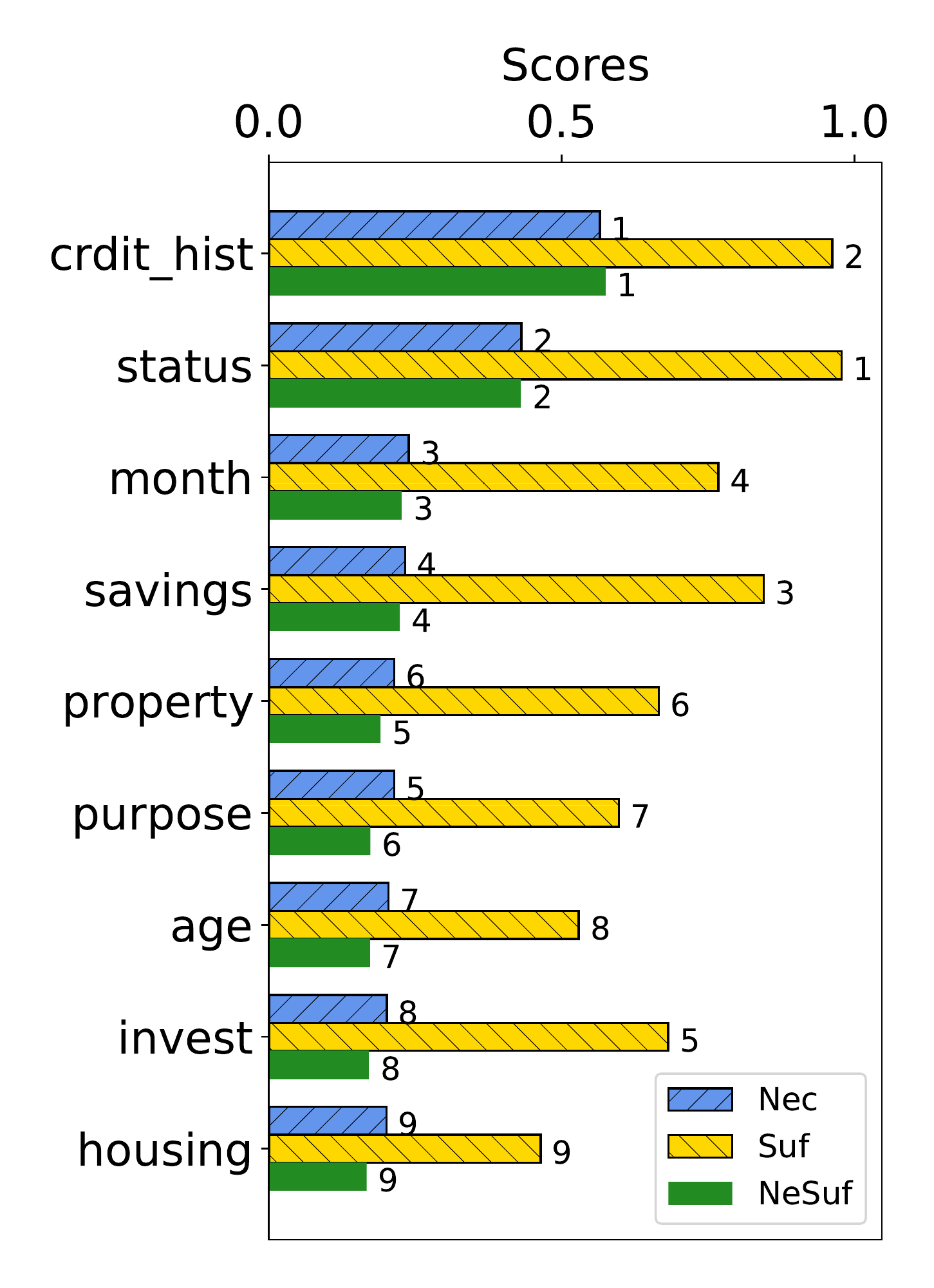}}
      \subcaptionbox{\texttt{Adult} \label{fig:exp:end:global:adult}}
        {\includegraphics[width=.24\columnwidth]{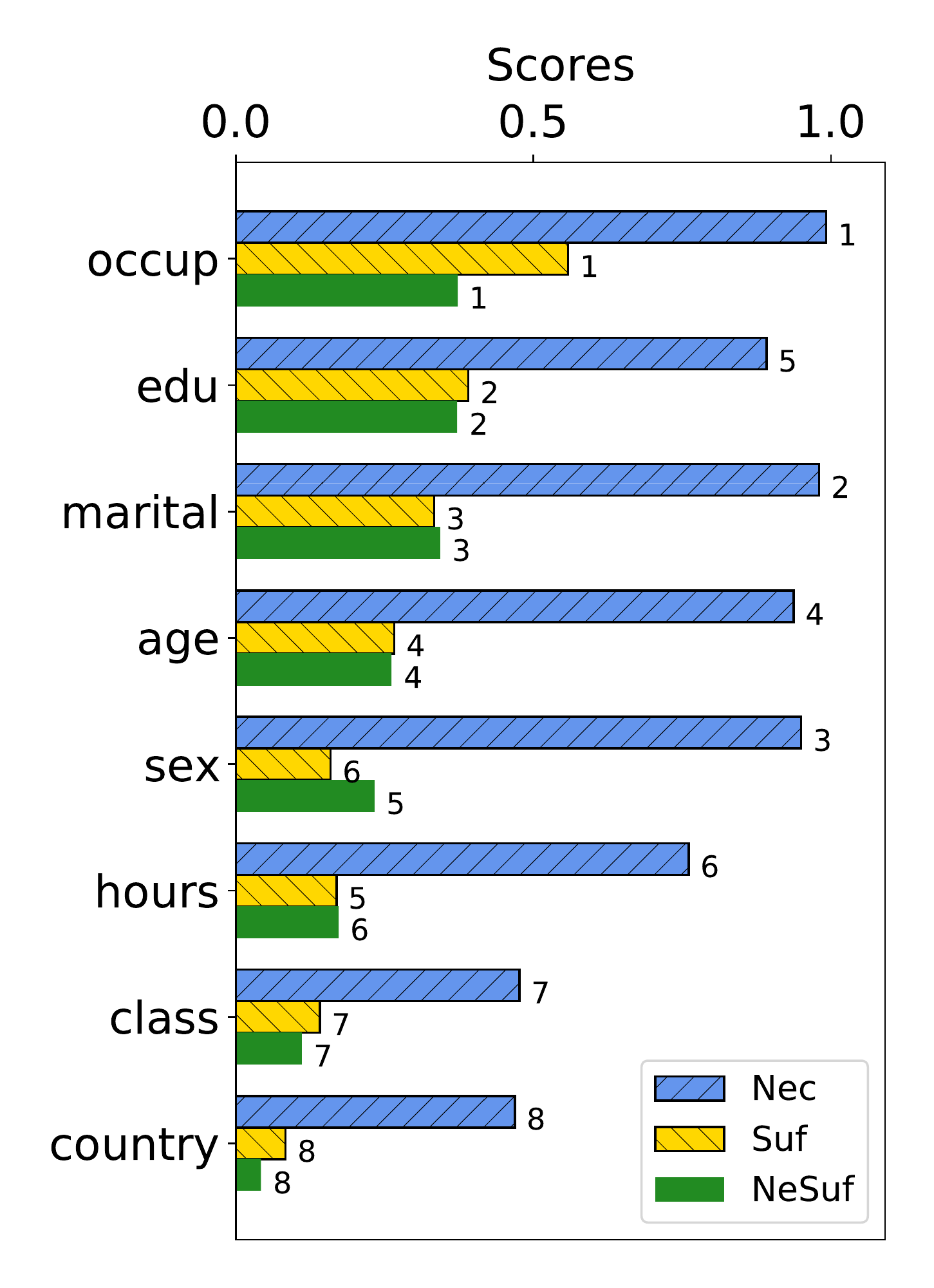}}
         \subcaptionbox{\texttt{Compas} -- Software score \label{fig:exp:end:global:compasscore}}
        {\includegraphics[width=.24\columnwidth]{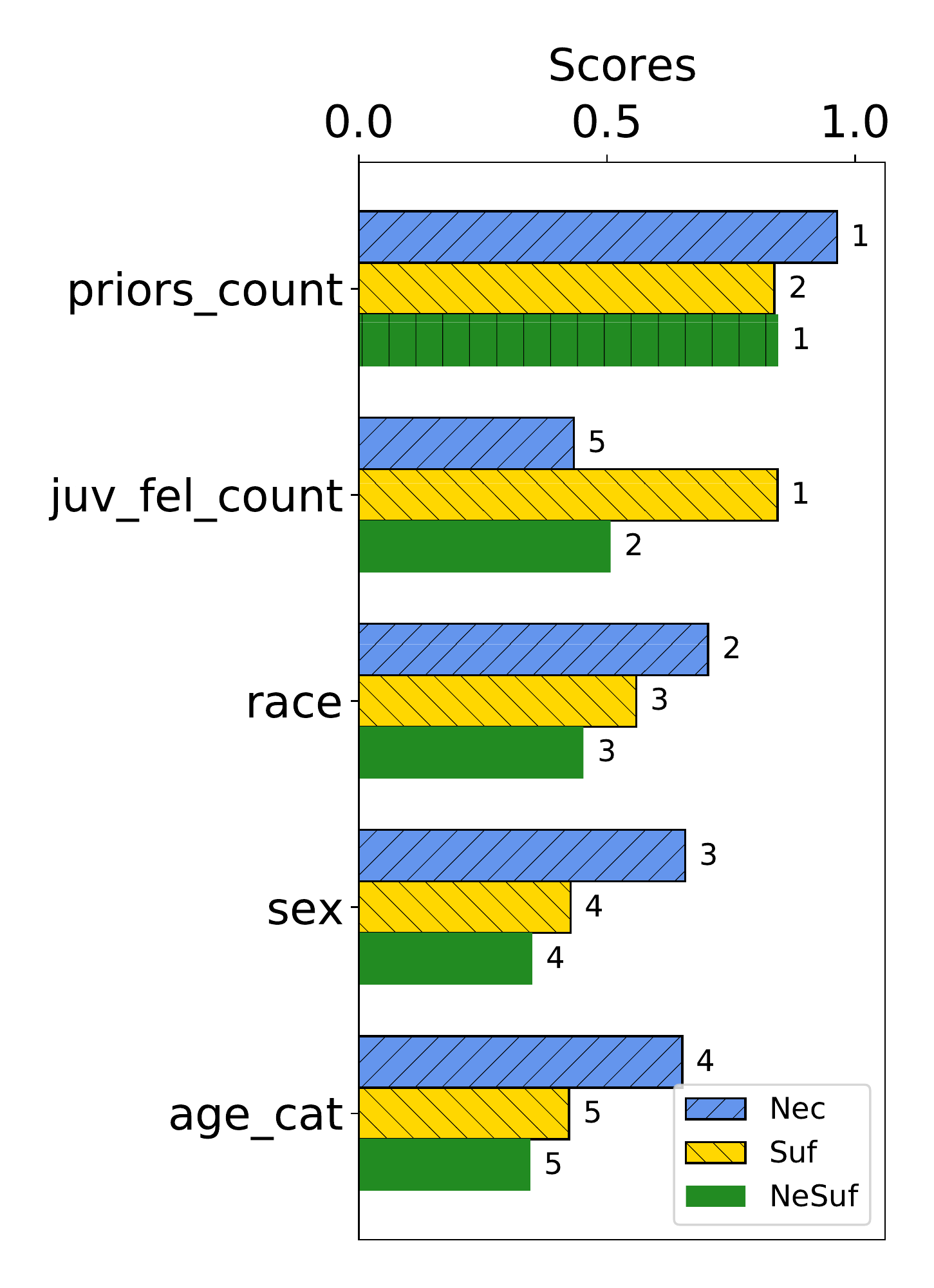}}
        \subcaptionbox{\revb{\texttt{Drug}} \label{fig:exp:end:global:multi-class}}
        {\includegraphics[width=.24
        \columnwidth]{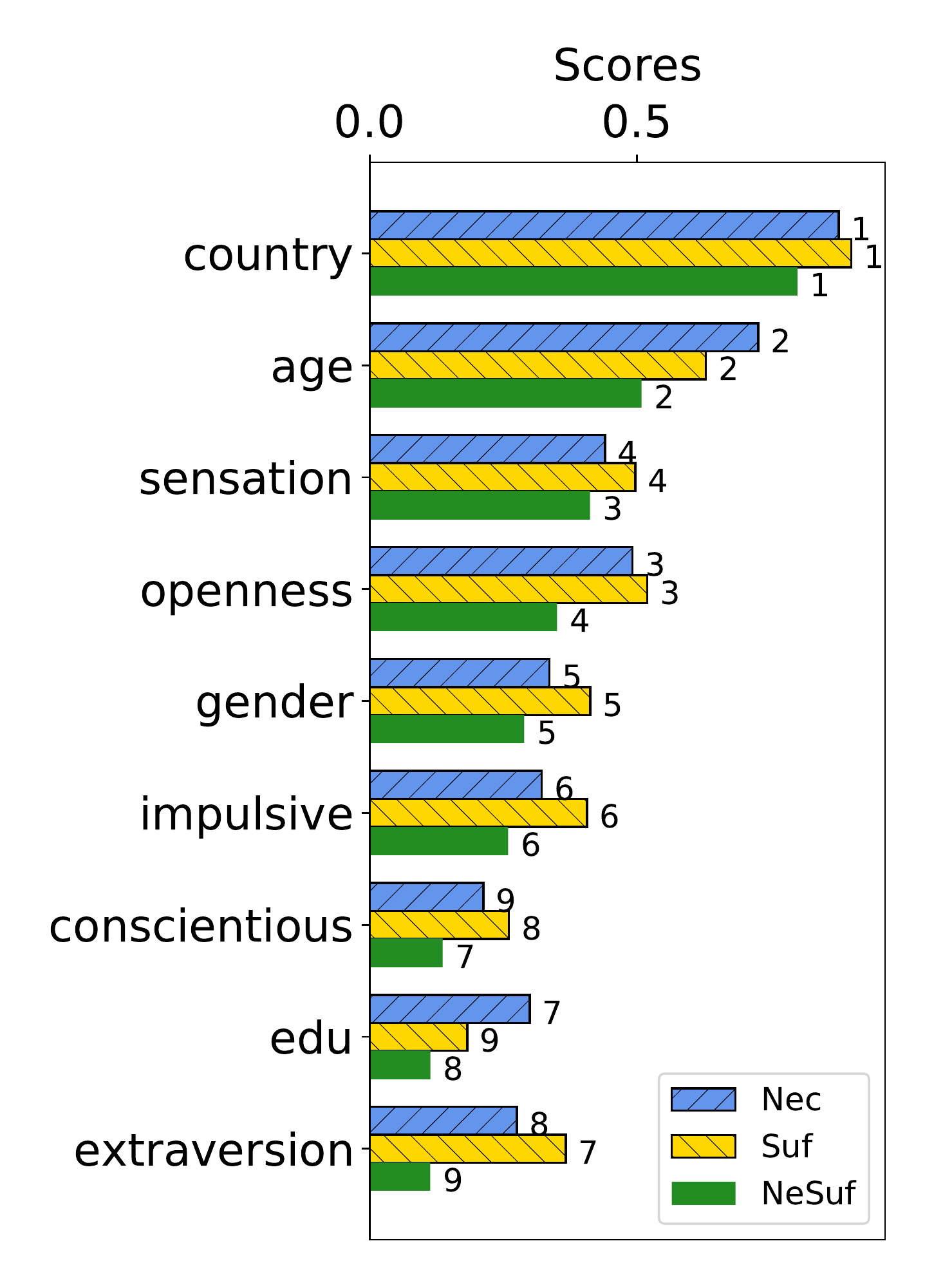}}
      \caption{Global explanations generated by \sys{}, ranking attributes in terms of their necessity, sufficiency and necessity and sufficiency scores. The rankings are consistent with insights from existing literature.}\label{fig:exp:end:global}
\end{figure*}

In the following experiments, we present the local, contextual and global explanations and recourse options generated by \sys.
\ignore{We present results of learning a random forest classifier over training data, and we estimated the conditional probabilities by learning a random forest regressor over test data predictions. Note that our results are independent of the ML algorithms used in these two steps.} In the absence of ground truth, we discuss the coherence of our results with intuitions from existing literature.  Table~\ref{tbl:data} reports the running time of \sys\ for computing explanations and recourse. 

\noindent\textbf{German.} Consider the attributes \texttt{status} and \texttt{credit\_history} in Figure~\ref{fig:exp:end:global:german}. Their near-perfect sufficiency scores indicate their high importance toward a positive outcome at the population level. For individuals for whom the algorithm generated a negative outcome, an increase in their credit history or maintaining above the recommended daily minimum in checking accounts (\texttt{status}) is more likely to result in a positive decision compared to other attributes such as housing or age. These scores and the low necessity scores of attributes are aligned with our intuition about good credit risks: (a)~good credit history and continued good status of checking accounts add to the credibility of individuals in repaying their loans, and (b)~ multiple attributes favor good credit and a reduction in any single attribute is less likely to overturn the decision.

In Figure~\ref{fig:exp:end:contextual:german}, we present contextual explanations that capture the effect of intervening on \texttt{status} in different age groups. We observe that increasing in the status of checking account from $<0$ DM to $>200$ DM is more likely to reverse the algorithm's decision for older individuals (as seen in their higher sufficiency score compared to younger individuals). This behavior can be attributed to the fact that along with checking account status,  loan-granting decisions depend on variables such as credit history, which typically has good standing for the average older individual. For younger individuals, even though the status of their checking accounts may have been raised, their loans might still be rejected due to the lack of a credible credit history.

We report the local explanations generated by \sys\ in Figure~\ref{fig:exp:end:local:german}. In the real world, younger individuals and individuals with inadequate employment experience or insufficient daily minimum amount in checking accounts are less likely to be considered good credit risks. This observation is evidenced in the negative contribution of status, age and employment for the negative outcome example.  For the positive outcome example, current attribute values contribute toward the favorable outcome. Since increasing any of them is unlikely to further improve the outcome, the values do not have a negative contribution. Figure~\ref{fig:ravan} presents an example actionable recourse scenario, suggesting an increase in savings, credit amount and purpose improves credit risk.

\ignore{We report the local explanations generated by \sys\ in Figure~\ref{fig:exp:related:local:negative:german}. For the negative outcome example, the values of current status, credit history and housing contribute positively toward the outcome. The negative contribution of status and age indicate that a further increase in their values could result in a potential favorable outcome. As is expected in the real world, younger individuals and those with insufficient daily minimum amount in checking accounts are less likely to be considered good credit risks. On the other hand, all current attribute values of the positive outcome example (Figure~\ref{fig:exp:related:local:positive:german}) contribute toward their favorable outcome. Since increasing any of them is unlikely to further improve the outcome, the values do not have a negative contribution.}

\begin{figure}[t]
    \centering
    \subcaptionbox{Effect of status on different age groups. (\texttt{German}) \label{fig:exp:end:contextual:german}}
    {\includegraphics[width=.24\columnwidth]{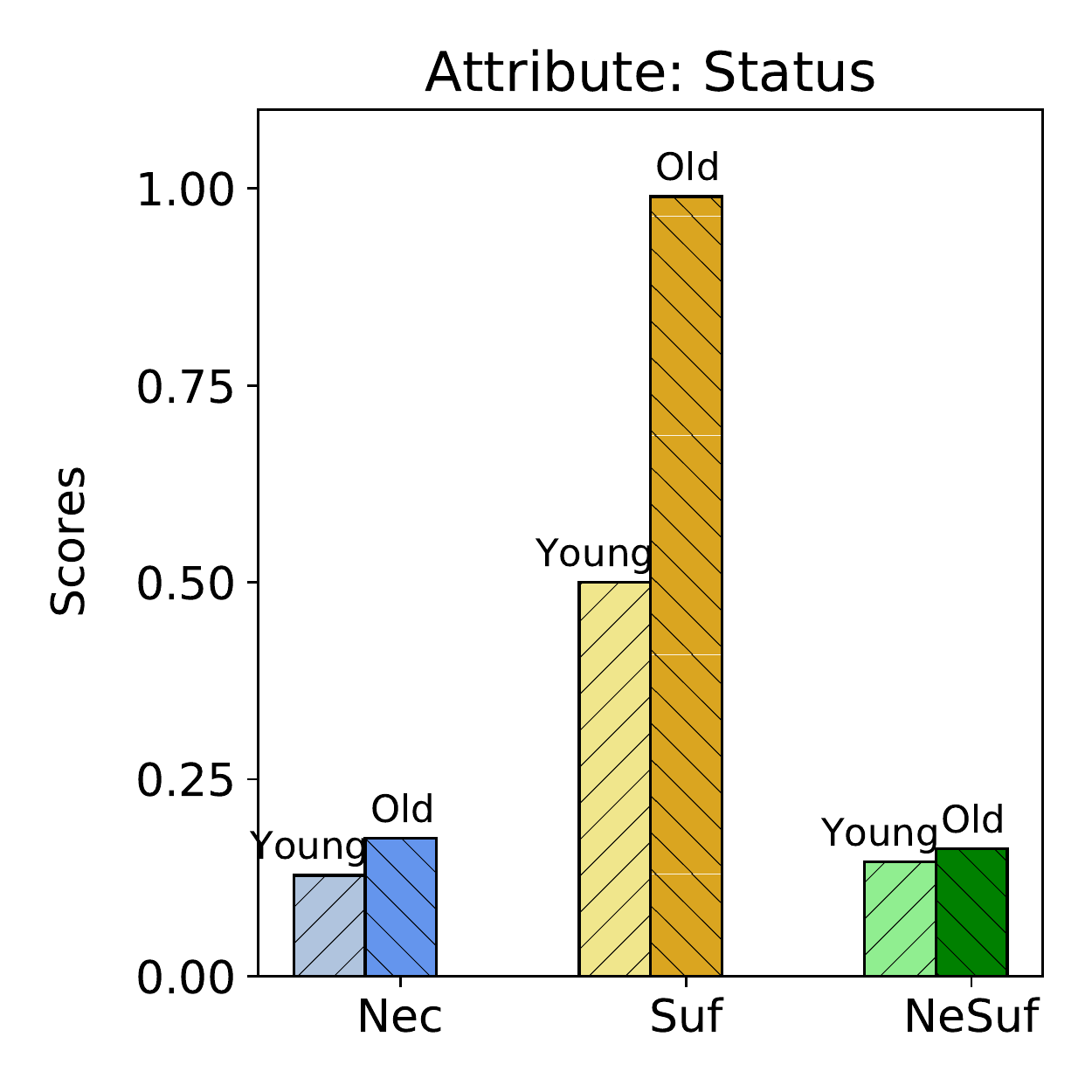}}
    \subcaptionbox{Effect of marital on different age groups. (\texttt{Adult}) \label{fig:exp:end:contextual:adult}}
    {\includegraphics[width=.24\columnwidth]{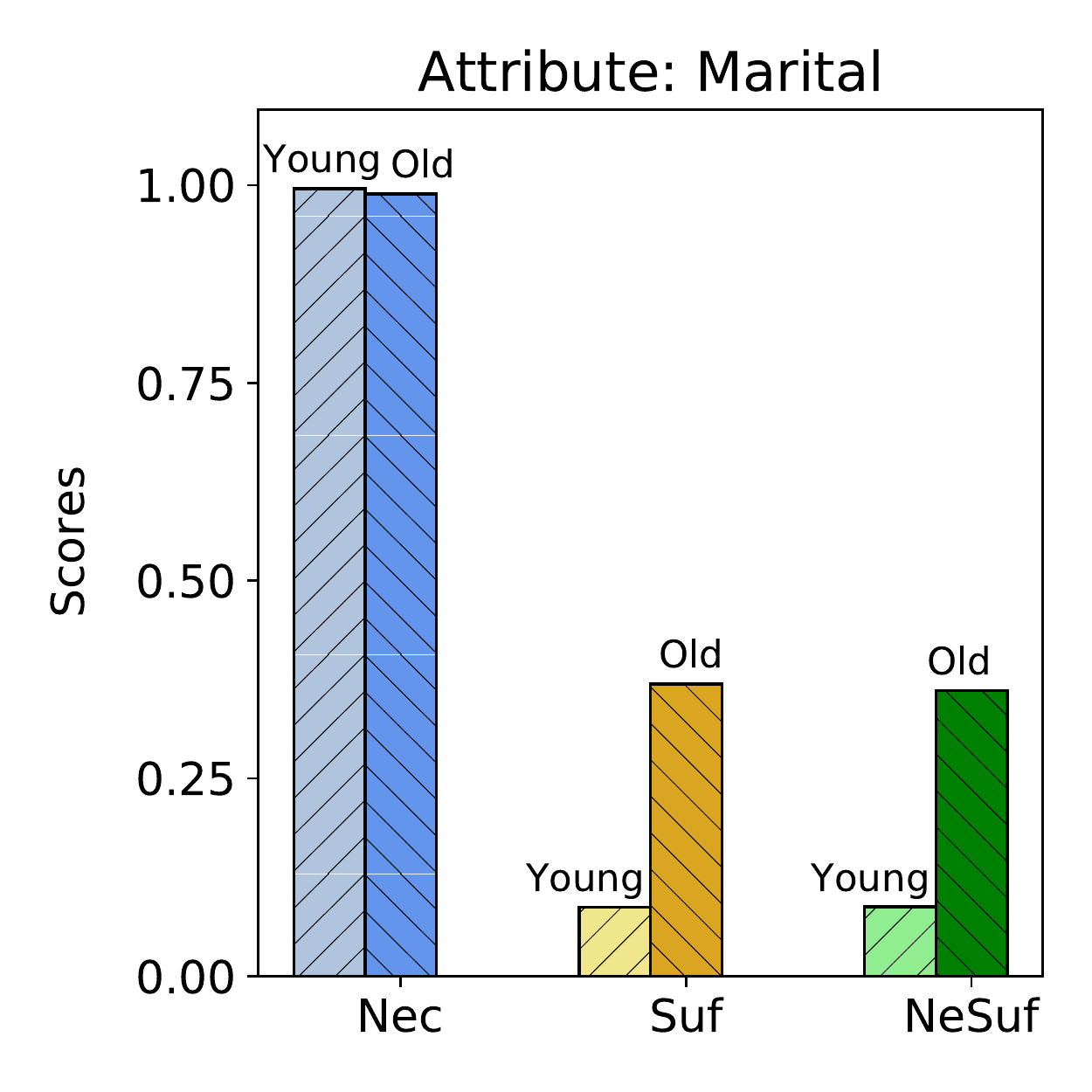}}
     \subcaptionbox{Effect of prior count on race. (\texttt{COMPAS}) \label{fig:exp:end:contextual:compas_prior_count}}
     {\includegraphics[width=.24\columnwidth]{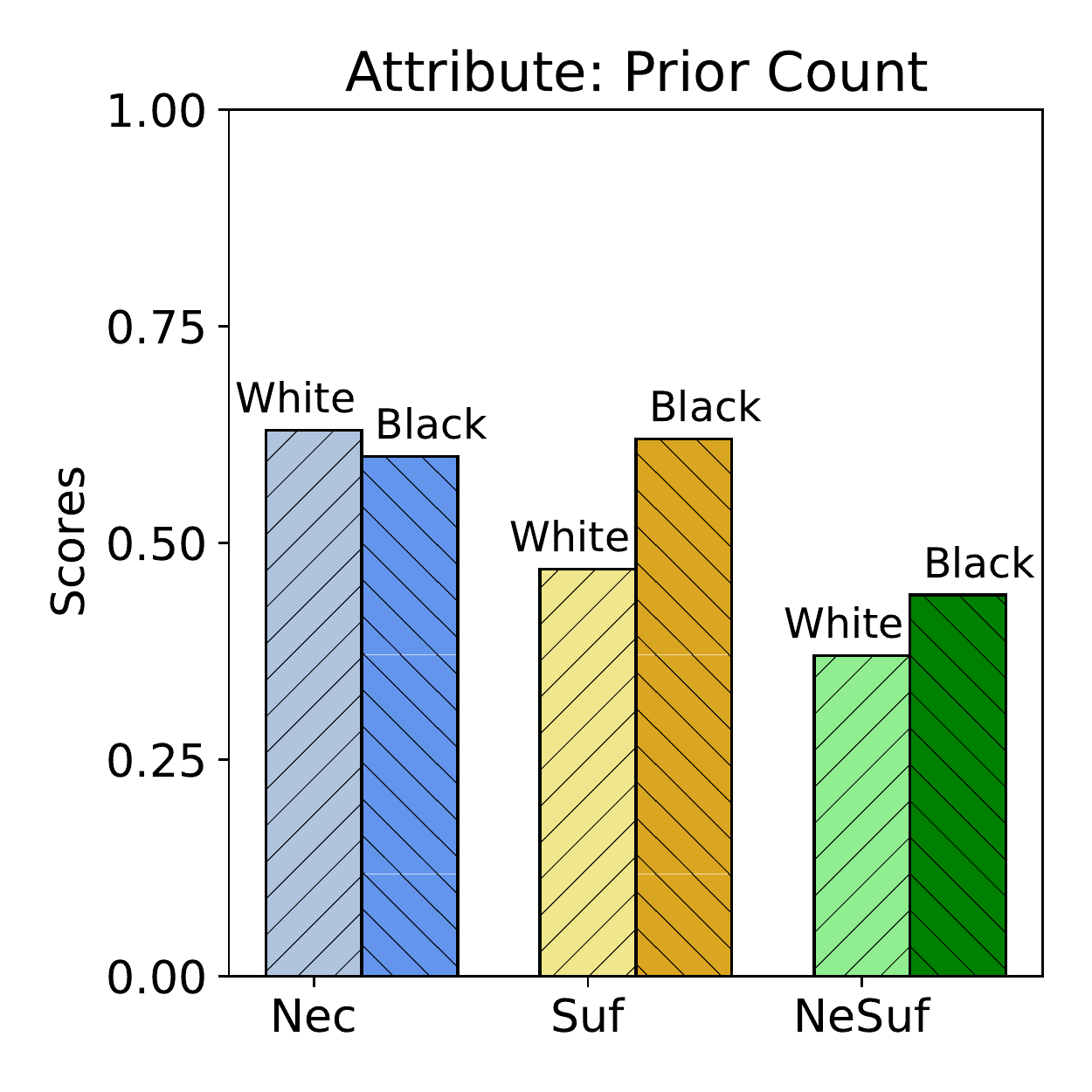}}
   \subcaptionbox{Effect of juvenile crime on race. (\texttt{COMPAS}) \label{fig:exp:end:contextual:compas_juvenile_crime}}
    {\includegraphics[width=.24\columnwidth]{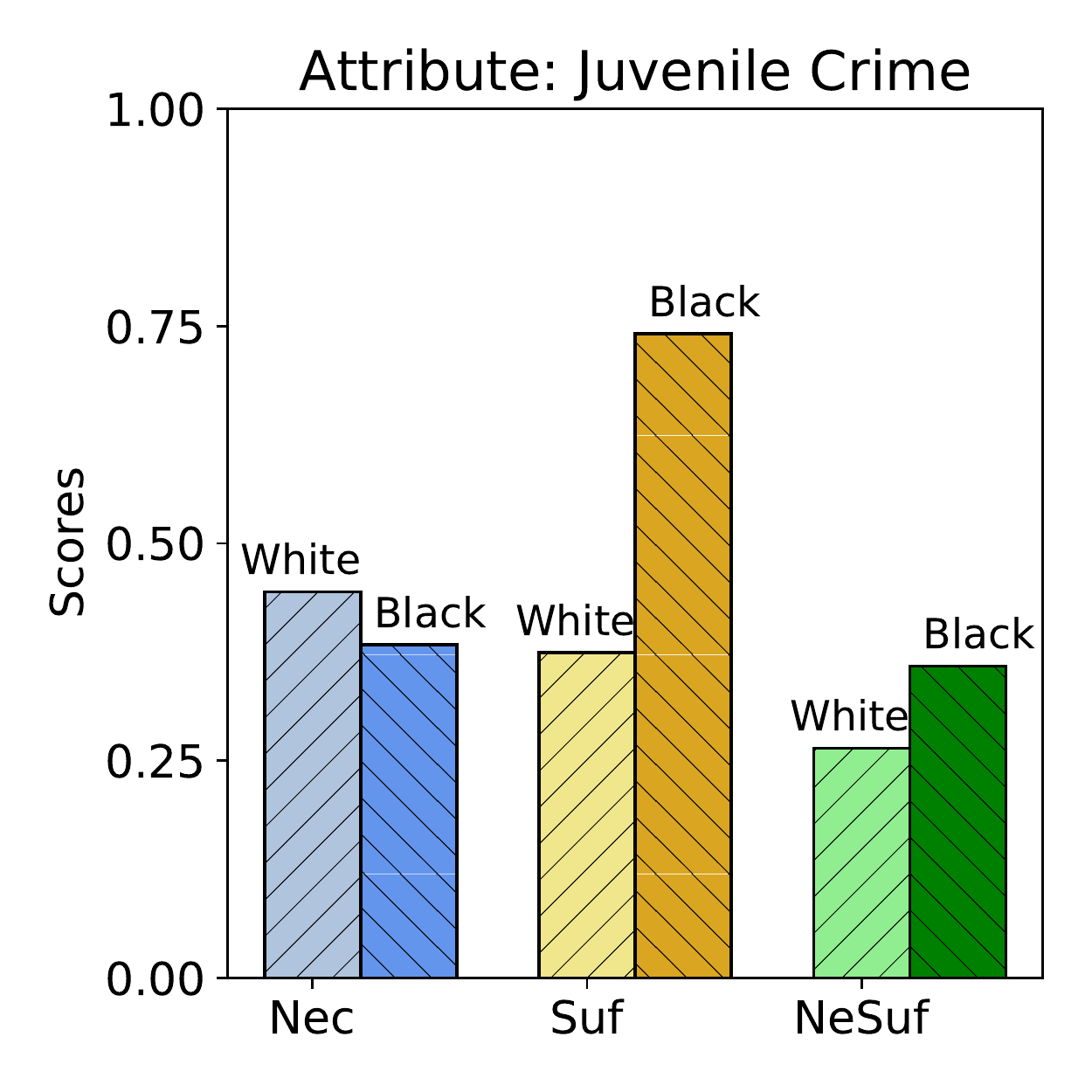}}
      \caption{\sys's contextual explanations show the effect of intervening on an attribute over different sub-populations.  }\label{fig:exp:end:contextual}
\end{figure}

\begin{figure}
\centering
\begin{minipage}{.5\textwidth}
  \centering
  \includegraphics[width=.6\columnwidth]{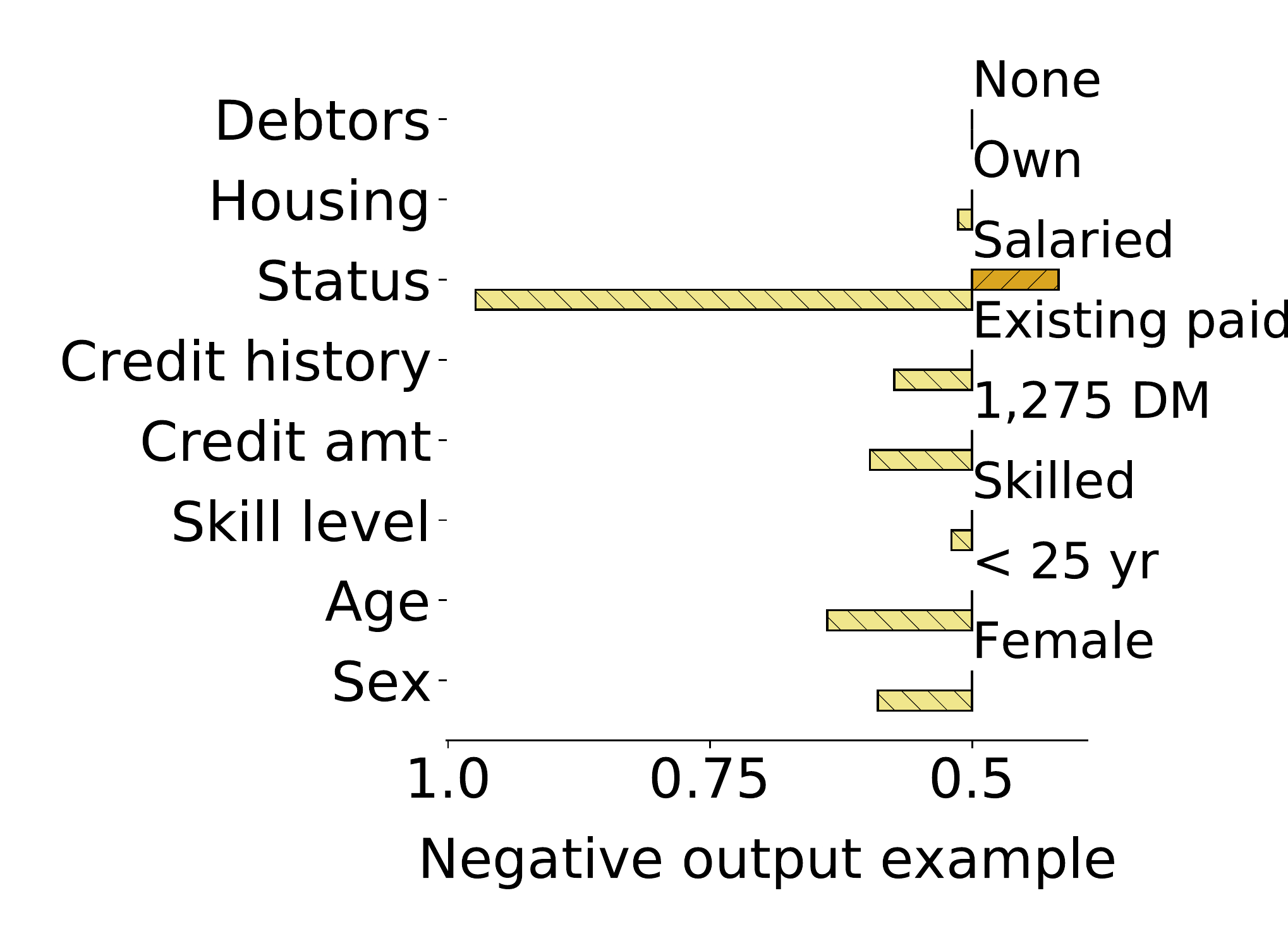}
      \includegraphics[width=.38\columnwidth]{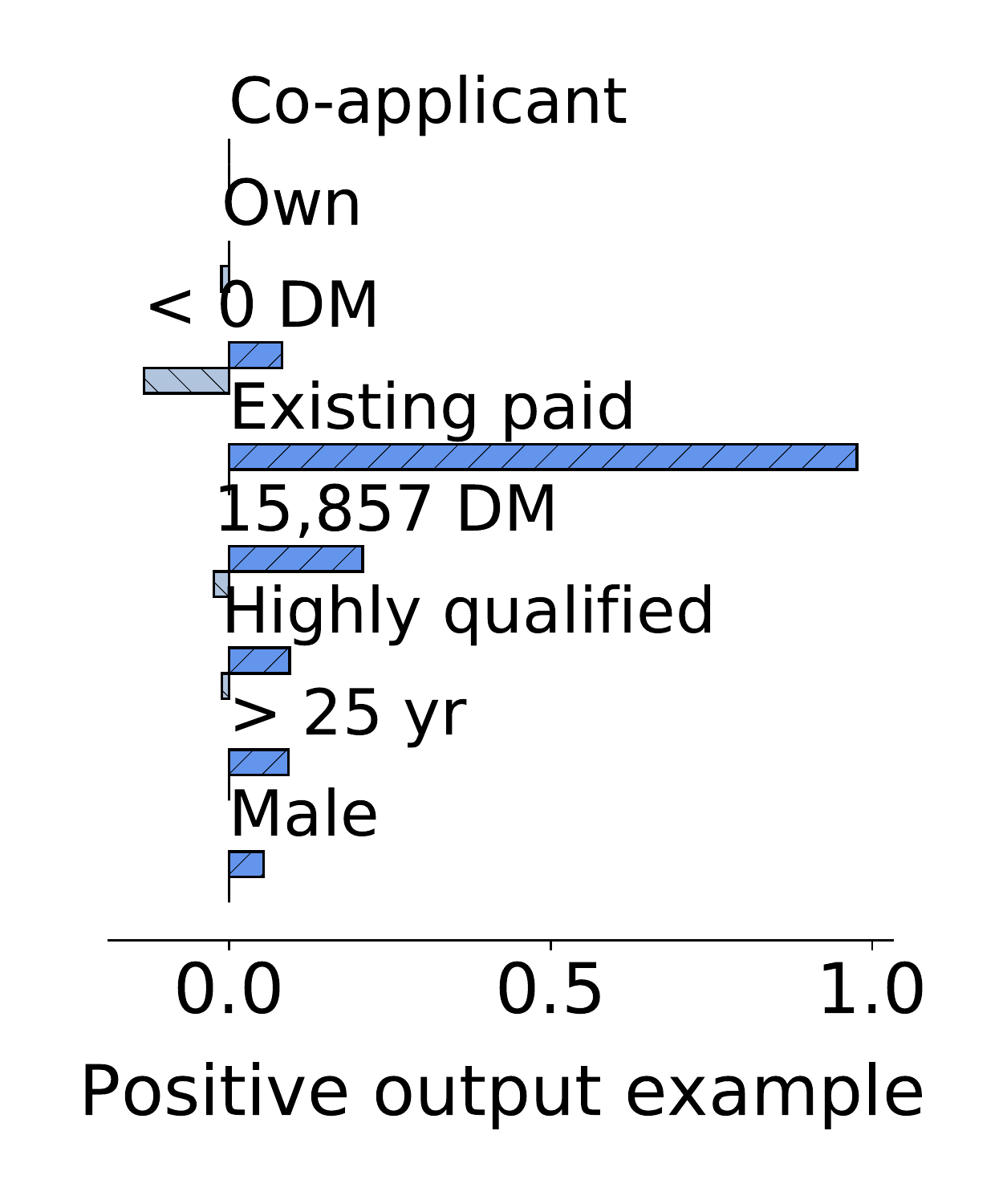}
      \vspace*{-.3cm}		\caption{\sys's local explanations. (\texttt{German})}\label{fig:exp:end:local:german}
\end{minipage}%
\begin{minipage}{.5\textwidth}
  \centering
          \includegraphics[width=.6\columnwidth]{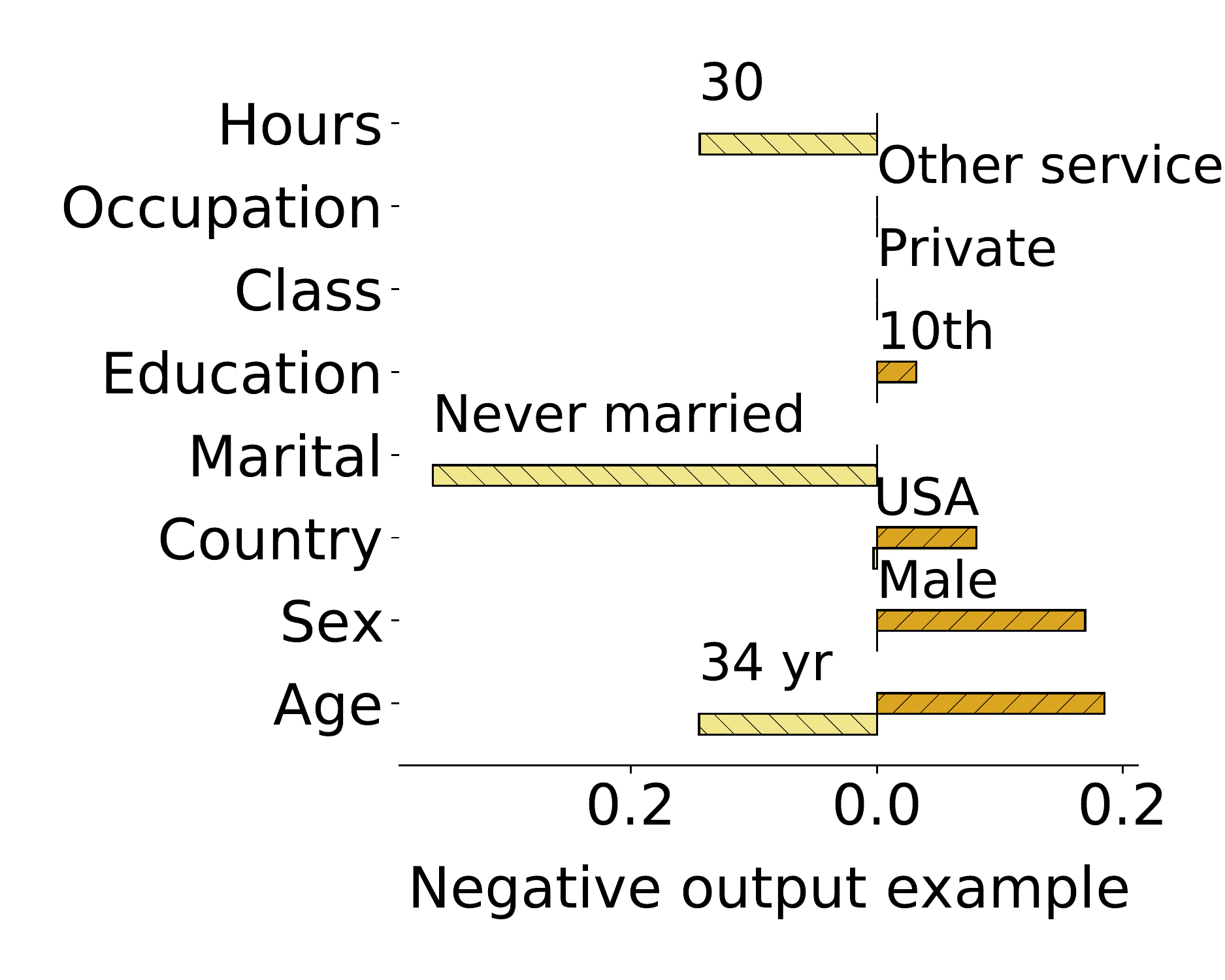}
      \includegraphics[width=.38\columnwidth]{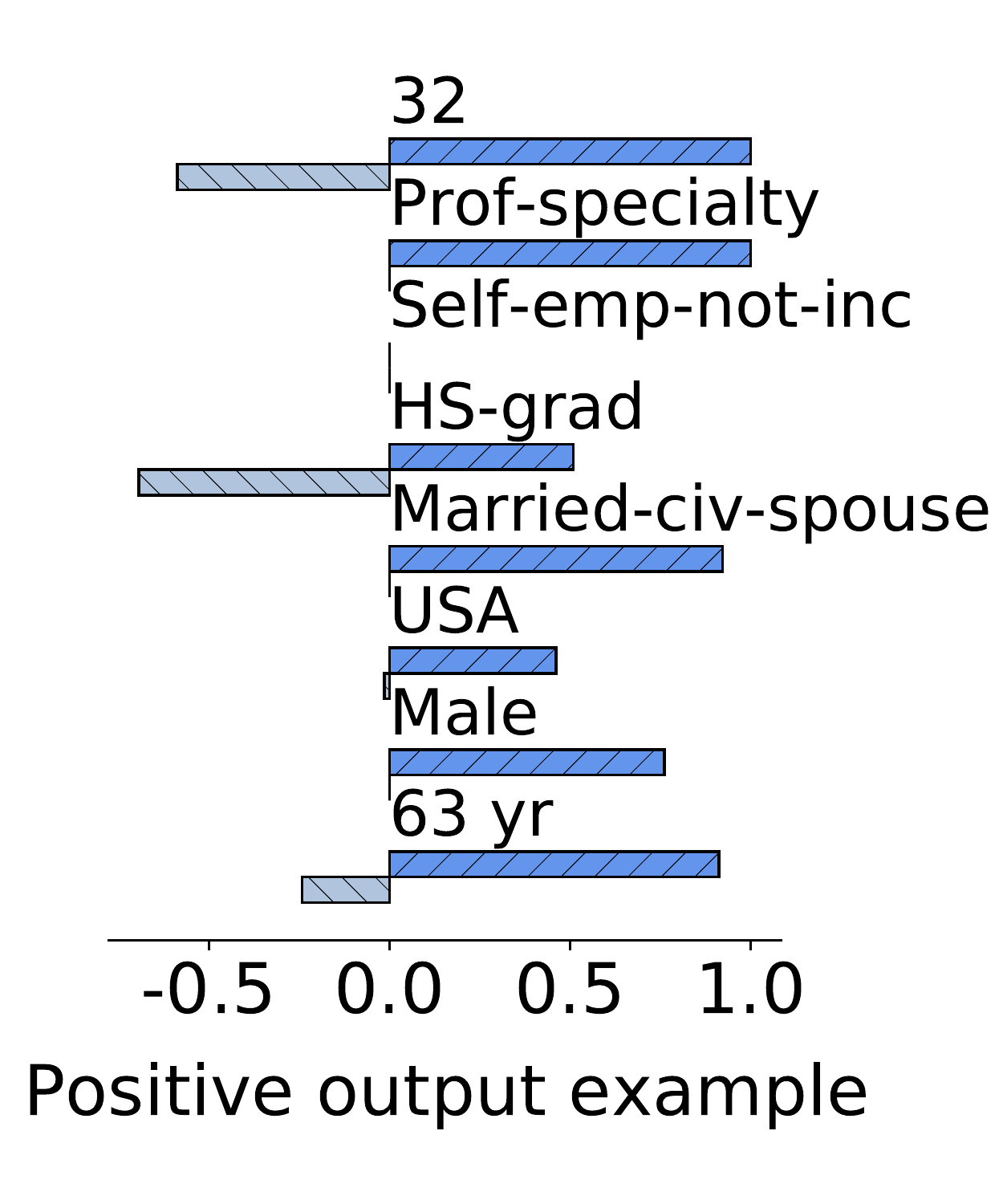}
      \vspace*{-.3cm}		\caption{\sys's local explanations. (\texttt{Adult})}\label{fig:exp:end:local:adult}
\end{minipage}
\end{figure}

\begin{figure}
      \centering
      
      \subcaptionbox{\revb{Negative outcome example.} \label{fig:exp:related:local:negative:drug}}
    {\includegraphics[width=.24\columnwidth]{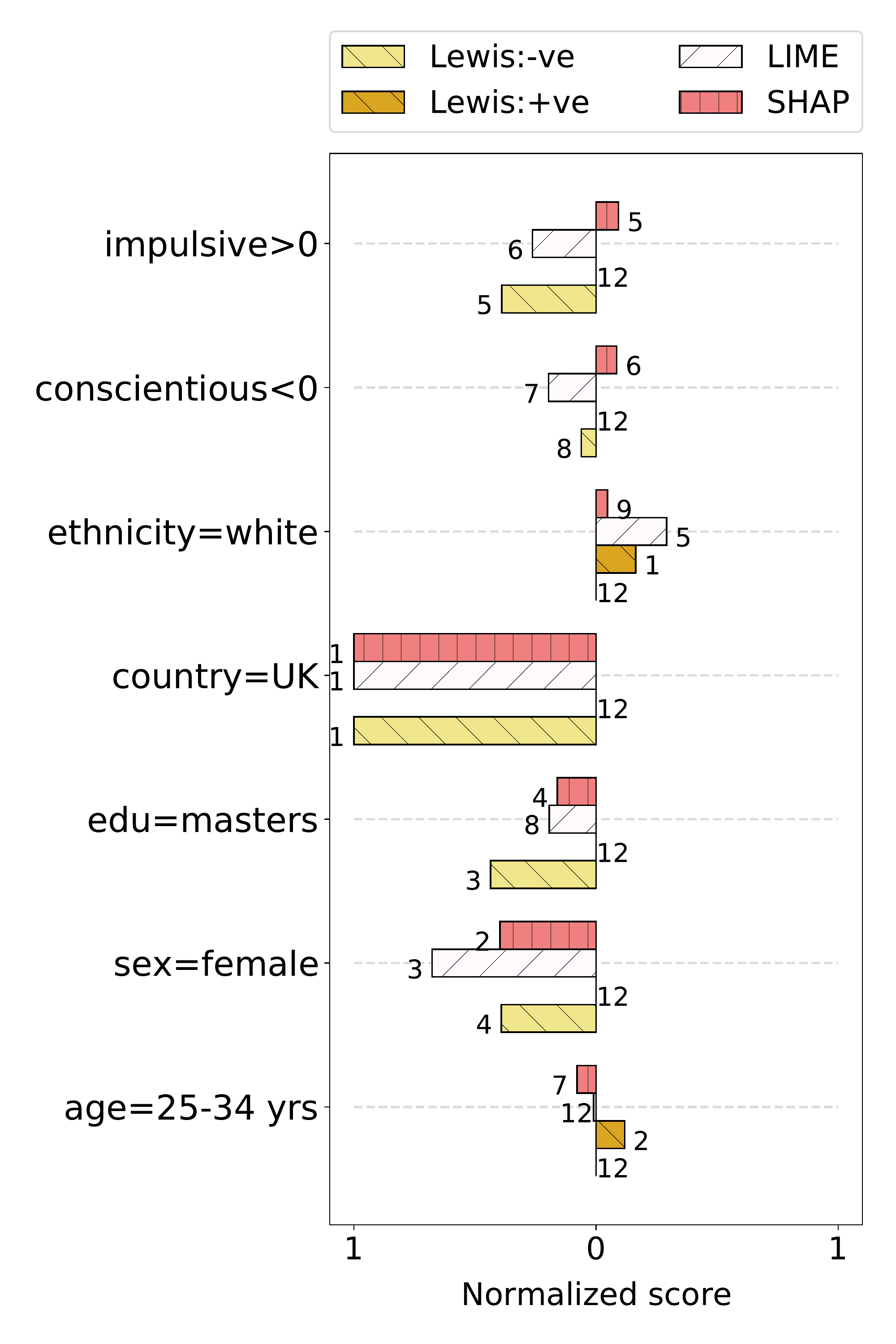}}
        \subcaptionbox{\revb{Positive outcome example.} \label{fig:exp:related:local:positive:drug}}
    {\includegraphics[width=.24\columnwidth]{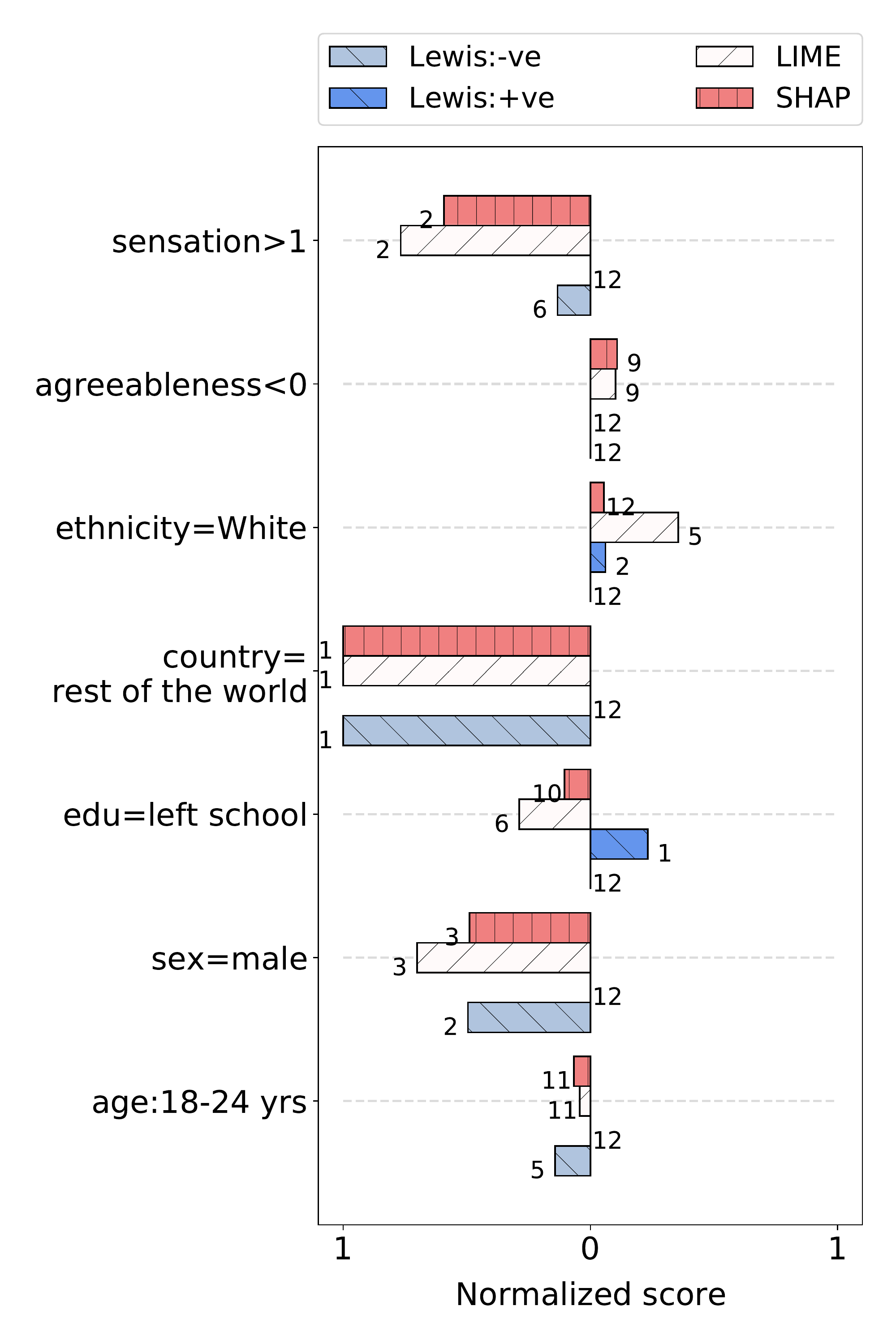}}
      
      \caption{\revb{\sys's local explanations. (\texttt{Drug})}}\label{fig:exp:rel:local:drug}
\end{figure}


\noindent\textbf{Adult.} Several studies~\cite{TAGH+17,10.1109/ICDM.2011.72} have analyzed the impact of gender and age in this dataset. The dataset has been shown to be inconsistent: income attributes for married individuals report household income, and there are more married males in the dataset indicating a favorable bias toward males~\cite{DBLP:conf/sigmod/SalimiGS18}. We, therefore, expect age to be a necessary cause for higher income, but it may not be sufficient since increasing age does not imply that an individual is married. This intuition is substantiated by the high necessity and low sufficiency scores of \texttt{age} in Figure~\ref{fig:exp:end:global:adult}. Furthermore, as shown in Figure~\ref{fig:exp:end:contextual:adult}, changing marital status to a higher value has a greater effect on older than on younger individuals; this effect can be attributed to the fact that compared to early-career individuals, mid-career individuals typically contribute more to joint household income. Consequently, for an individual with a negative outcome  (Figure~\ref{fig:exp:end:local:adult}), marital status and age contribute toward the negative outcome. For an individual with a positive outcome, changing any attribute value is less likely to improve the outcome. However, increasing working hours will further the favorable outcome with a higher probability. We calculated the recourse for the individual with negative outcome and identified that increasing the hours to more than $42$ would result in a high-income prediction.

\noindent\textbf{COMPAS.}  We compare the global explanation scores generated by \sys{} for the COMPAS software used in courts (Figure~\ref{fig:exp:end:global:compasscore}). The highest score of 
\texttt{priors\_ct} validates the insights of previous studies~\cite{compas,salimi2019capuchin} that the number of prior crimes is one of the most important factors determining chances of recidivism. 
Figures
~\ref{fig:exp:end:contextual:compas_prior_count} and
~\ref{fig:exp:end:contextual:compas_juvenile_crime}
present the effect of intervening on  \texttt{prior} crime count 
and \texttt{juvenile} crime count, respectively, 
on the software score (note that for these explanations, we use the prediction scores from the COMPAS software, not the classifier output). We observe that both the attributes have a higher sufficiency for  \texttt{Black}  compared to \texttt{White}, indicating that an increase in prior crimes and juvenile crimes is more detrimental for Blacks compared to  Whites. A reduction in these crimes, on the other hand, benefits Whites more than Blacks, thereby validating the inherent bias in COMPAS scores. We did not perform recourse analysis as the 
attributes describe past crimes and, therefore, are not actionable.

\revb{\noindent\textbf{Drug.} This dataset has been studied to understand the variation in drug patterns across demographics and the effectiveness of various sensation measurement features toward predicting drug usage. Figure~\ref{fig:exp:end:global:multi-class} compares the global scores with respect to the outcome that the drug was used atleast once in lifetime. Previous studies~\cite{fehrman2017factor} have found that consumption of the particular drug is common in certain countries, as substantiated by the high necessity and sufficiency scores of country. Furthermore, intuitively, individuals with a higher level of education are more likely to be aware of the effects of drug abuse and hence, less likely to indulge in its consumption. This intuition is supported by the observation in Figure~\ref{fig:exp:related:local:negative:drug}: a higher education level contributes towards the negative drug consumption outcome, and in Figure~\ref{fig:exp:related:local:positive:drug}: a lower education level  contributes positively toward the drug consumption outcome.
We observe similar conclusions for the explanations with respect to a different outcome such as drug used in the last decade.
}

\begin{figure}
      \centering
         \subcaptionbox{\revc{\texttt{Adult} + XGBoost} \label{fig:exp:end:relative:adultxgboost}}
        {\includegraphics[width=.24\columnwidth]{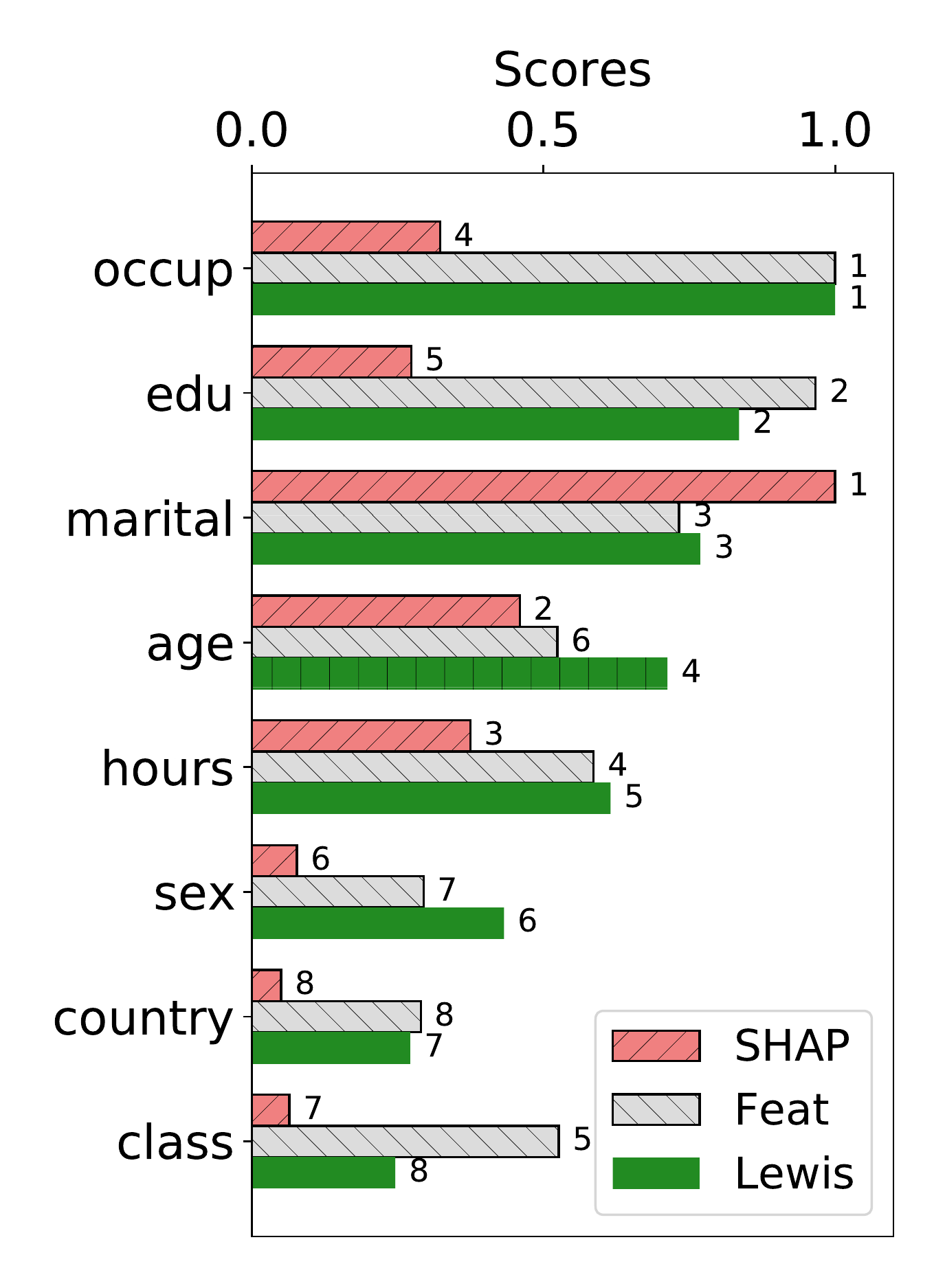}}
        \subcaptionbox{\revc{\texttt{Adult} + Neural Networks} \label{fig:exp:end:relative:adultneural}}
        {\includegraphics[width=.24\columnwidth]{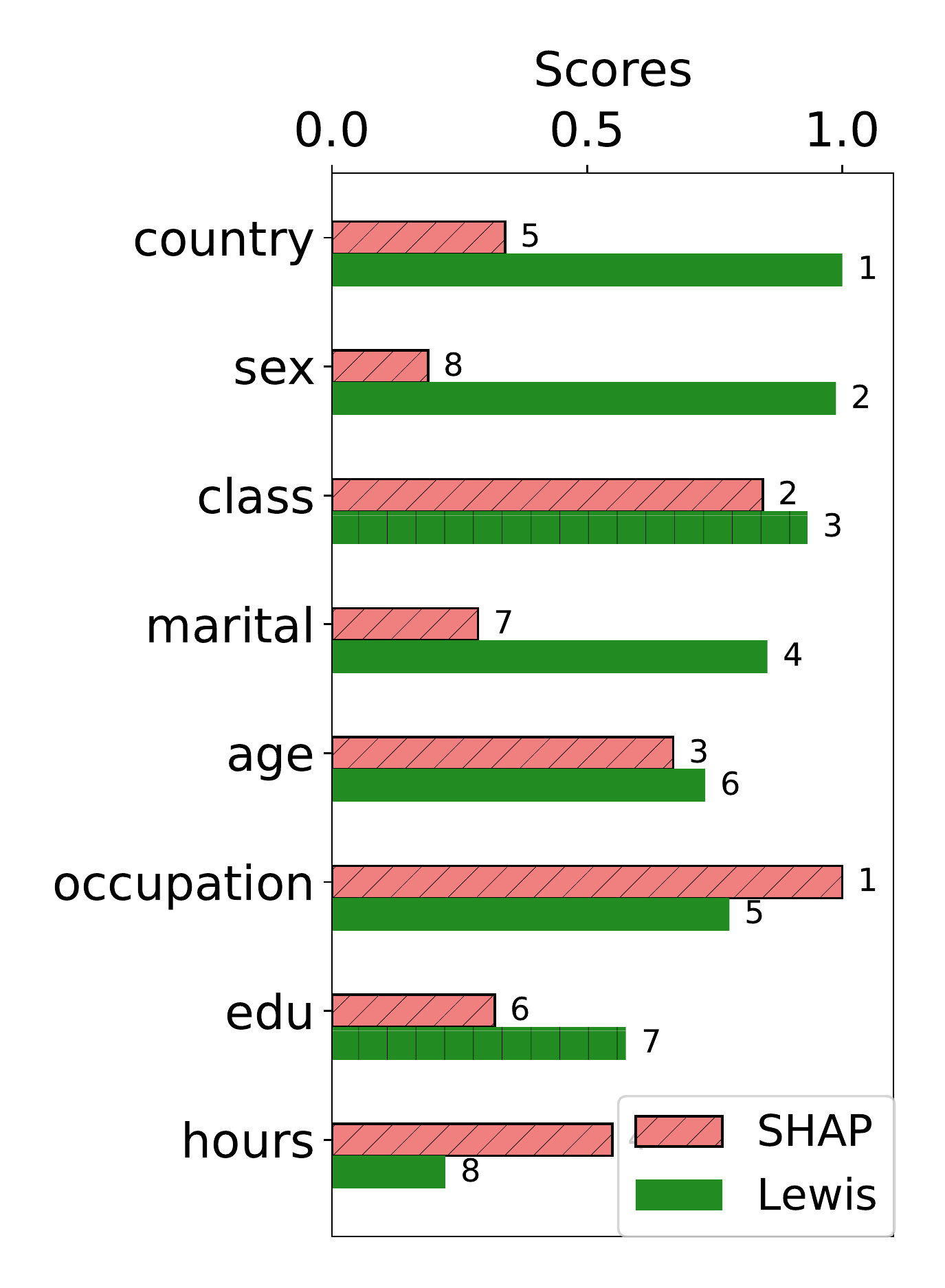}}
      \caption{\revc{Generalizability of \sys{} to black-box algorithms.}}\label{fig:exp:end:relative:adultnew}
\end{figure}
\noindent \revc{\textbf{Generalizability of \sys{} to black-box algorithms.} In Figure~\ref{fig:exp:end:relative:adultnew}, we present the global explanations generated by \sys{} for black-box algorithms that are harder to interpret and are likely to violate the monotonicity assumption, such as XGBoost and feed forward neural networks, and report the necessity and sufficiency score for each classifier. For ease in deploying neural networks, we conducted this set of experiments on \texttt{Adult} which is our largest dataset. We observed that different classifiers rank attributes differently depending upon the attributes they deem important. For example, the neural network learns class as the most important attribute. Since country and sex have a causal effect on class, \sys{} ranks these three attributes higher than others (see Section~\ref{exp:related} for a detailed interpretation of the results).} 

\reva{
\noindent\textbf{Key takeaways.} 
(1)~The explanations generated by \sys\ capture causal dependencies between attributes, and are applicable to any black-box algorithm. (2)~\sys\ has proved effective in determining attributes causally responsible for a favorable outcome. (3)~Its contextual explanations, that show the effect of particular interventions on sub-populations, are aligned with previous studies. (4)~The local explanations offer fine-grained insights into the contribution of attribute values toward the outcome of an individual. (5)~For individuals with an unfavorable outcome, whenever applicable, \sys\ provides recourse in the form of actionable interventions.

}

\subsection{Comparing \sys\ to Other Approaches}
\label{exp:related}

\begin{figure*}
      \centering
      \subcaptionbox{\texttt{German} \label{fig:exp:related:global:german}}
        {\includegraphics[width=.24\columnwidth]{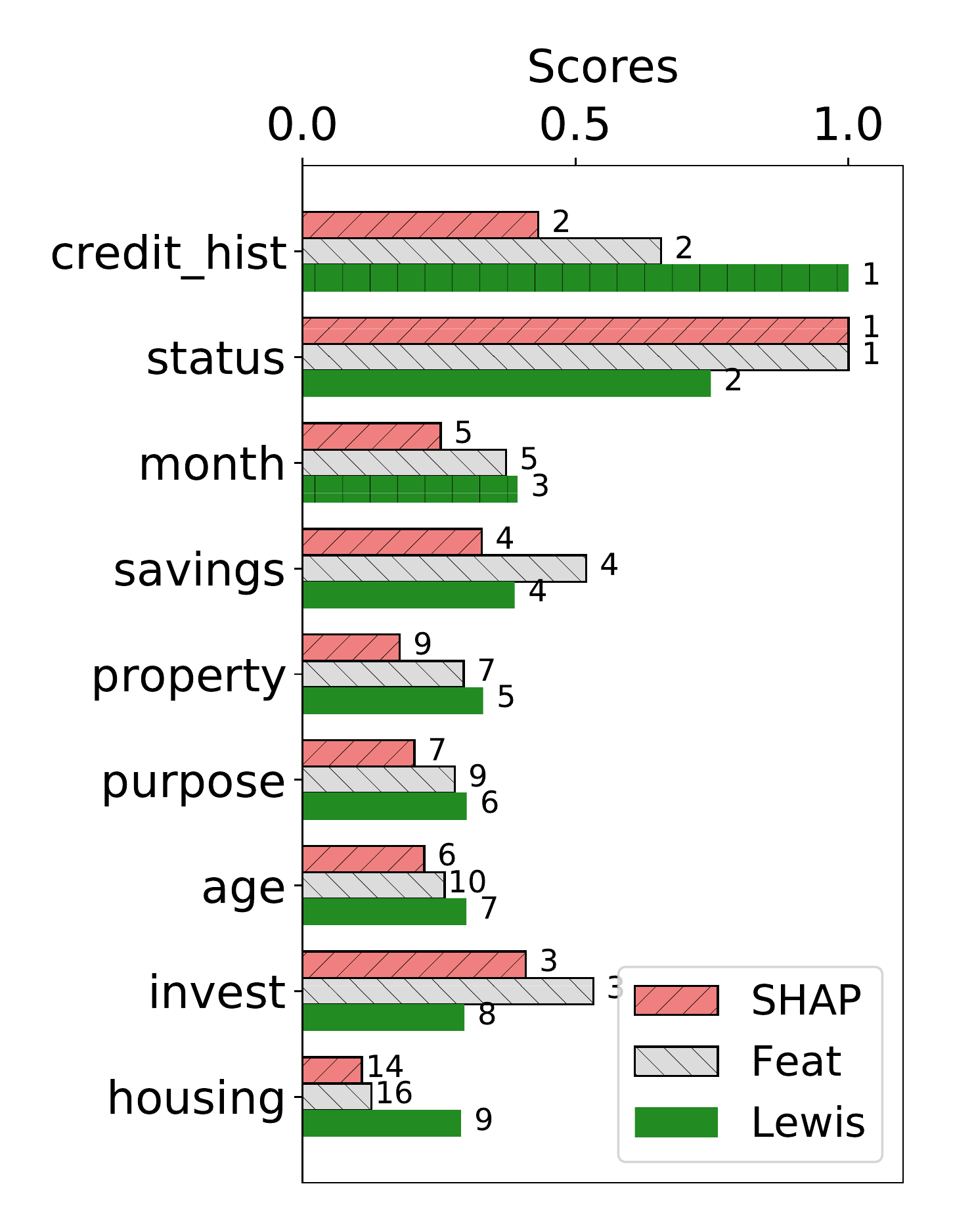}}
      \subcaptionbox{\texttt{Adult} \label{fig:exp:related:global:adult}}
        {\includegraphics[width=.24\columnwidth]{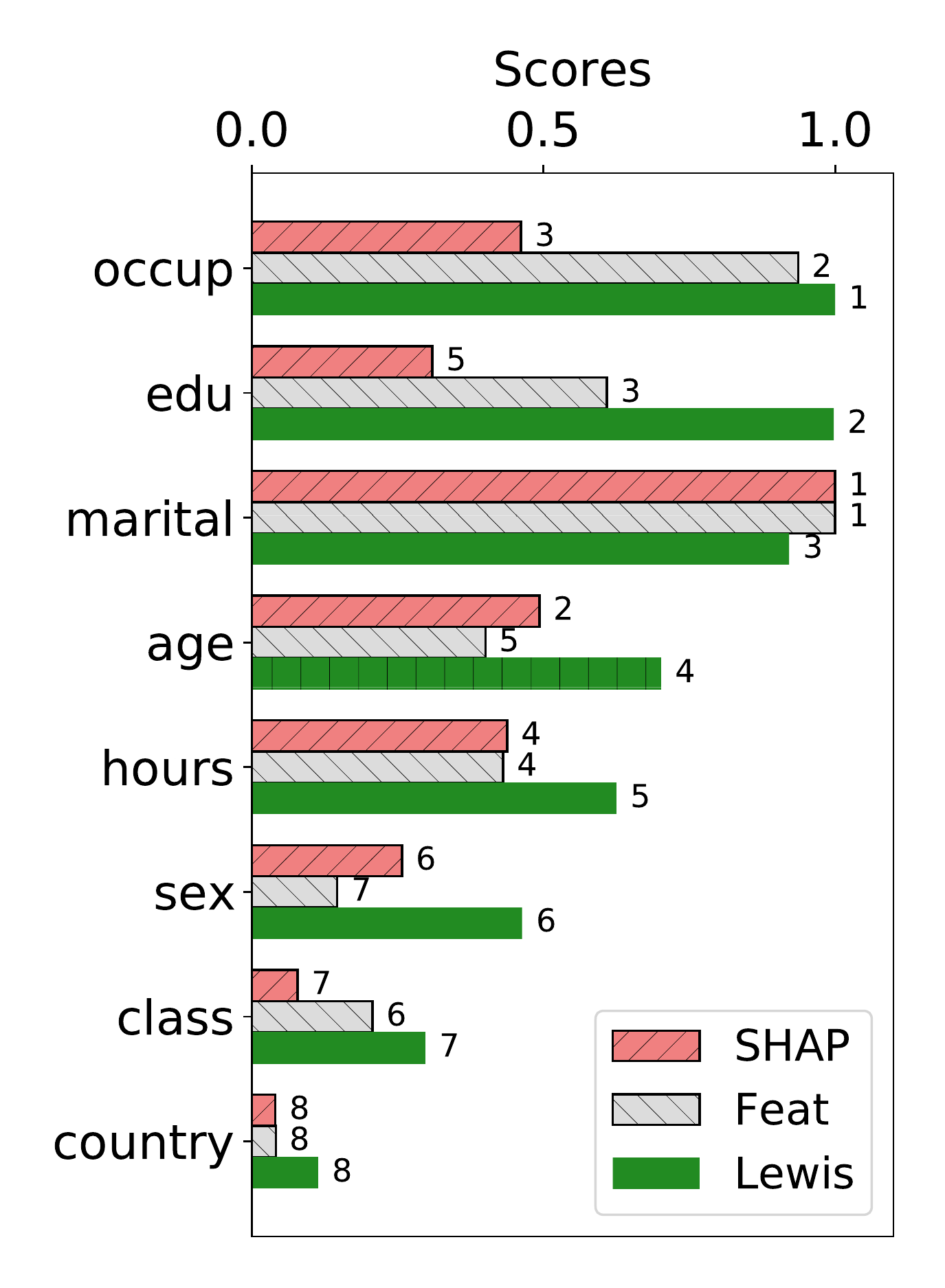}}
         \subcaptionbox{\texttt{Compas} -- Software score \label{fig:exp:related:global:compasscore}}
        {\includegraphics[width=.24\columnwidth]{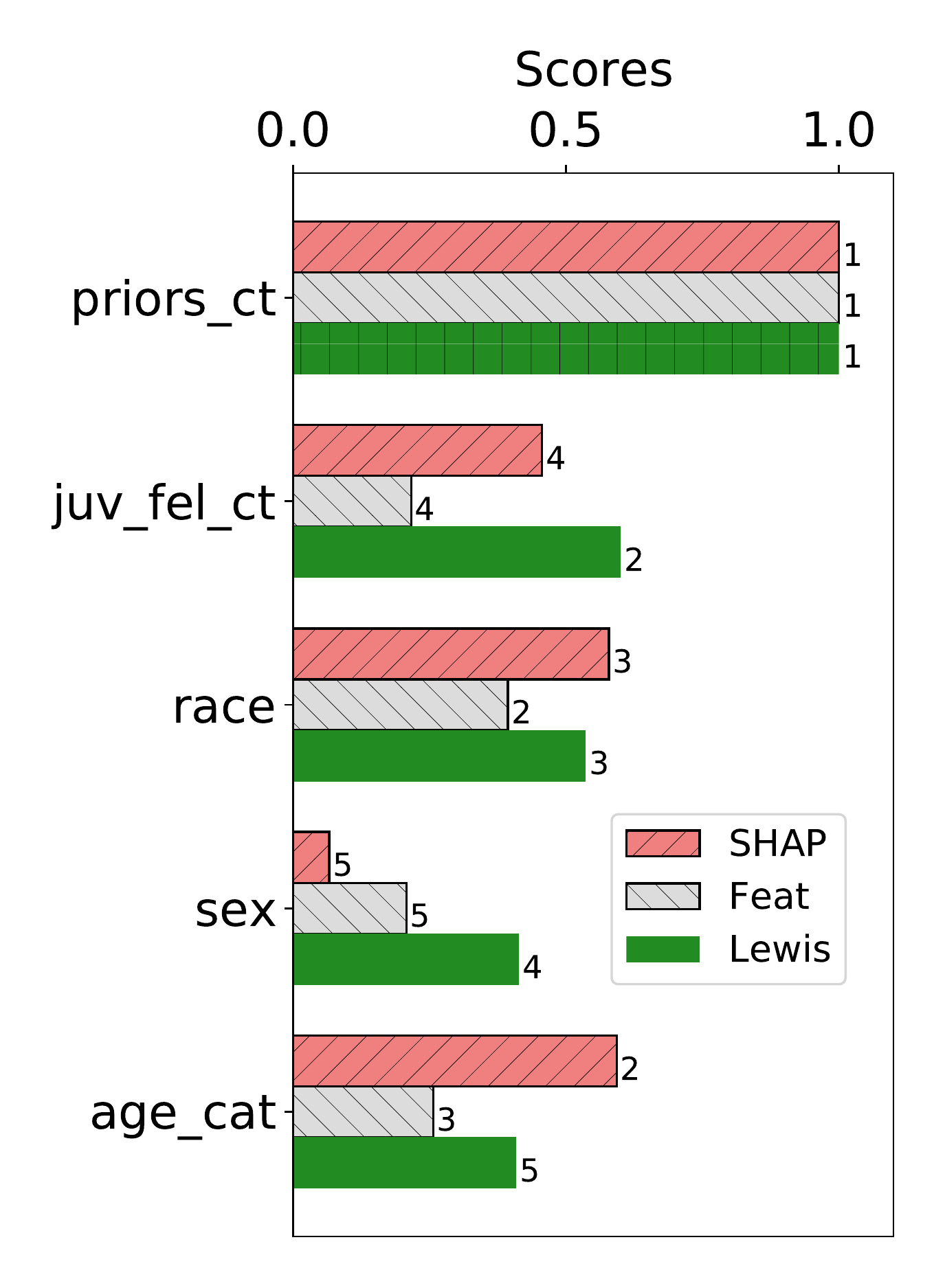}}
        \subcaptionbox{\revb{\texttt{Drug}} \label{fig:exp:related:global:multi-class}}
        {\includegraphics[width=.24\columnwidth]{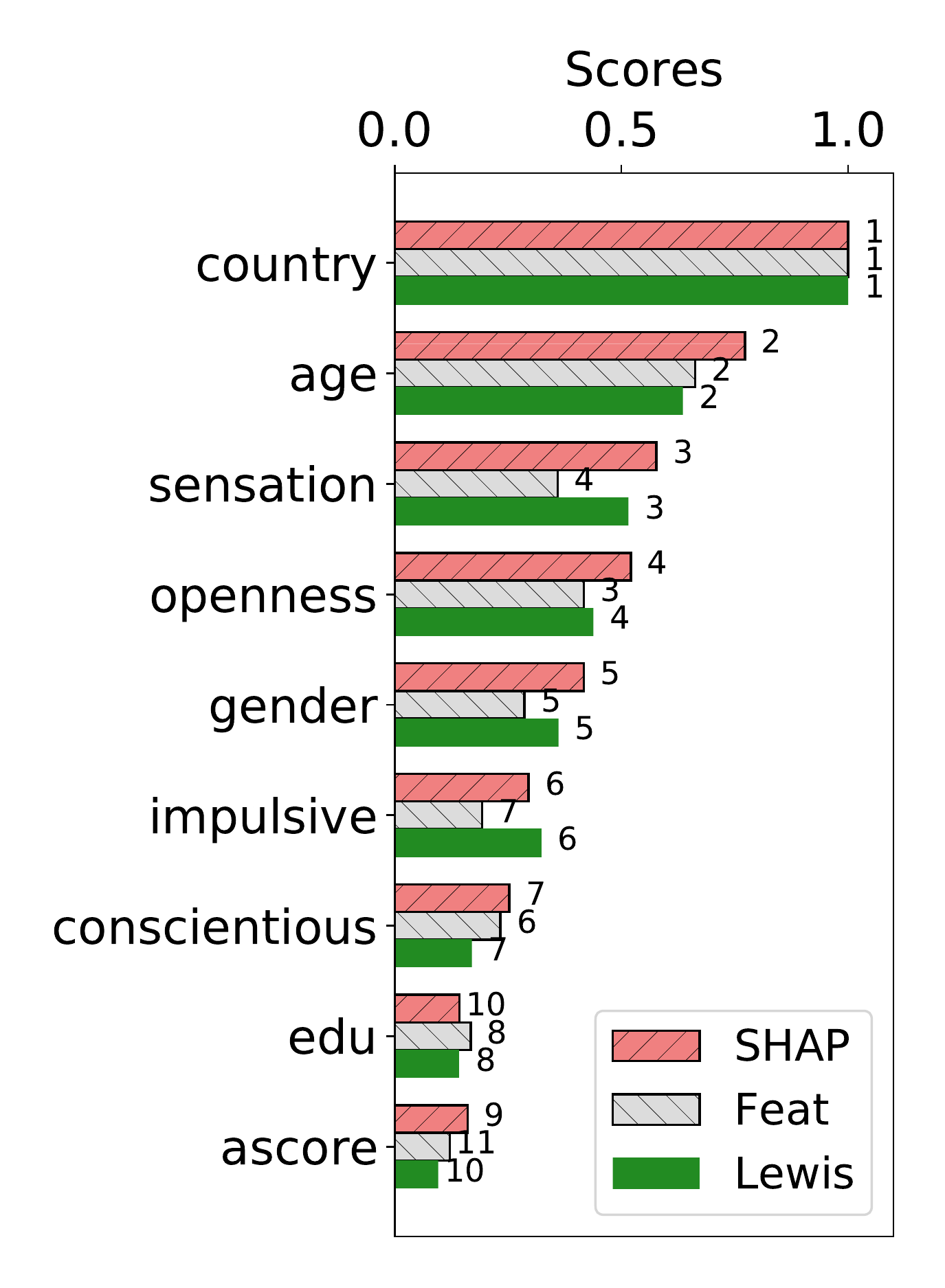}}
        \vspace{-2mm}
      \caption{Comparing different global explanation methods: \texttt{SHAP} and \texttt{Feat} fail to account for causal relationships in the data that are effectively captured by \sys.}\label{fig:exp:end:global}
      \label{fig:exp:related:global}
\end{figure*}

We compared the global and local explanations generated by \sys{} to existing approaches used for interpreting ML algorithms: \texttt{SHAP}~\cite{DBLP:conf/nips/LundbergL17}, \texttt{LIME}~\cite{ribeiro2016should} and feature importance (\texttt{Feat})~\cite{10.1023/A:1010933404324}.
\texttt{SHAP}
explains the difference between a prediction and the {\em global} average prediction, \texttt{LIME}
explains the difference from a {\em local} average prediction, and \texttt{Feat} measures the increase in an algorithm's prediction error after permutating an attribute's values. \texttt{LIME} provides local explanations, \texttt{Feat} provides global explanations and \texttt{SHAP} generates both global and local explanations. \texttt{LIME} and \texttt{SHAP} provide marginal contribution of an attribute to classifier prediction and are not directly comparable to \sys's probabilistic scores. However, since all the methods measure the importance of attributes in classifier predictions, we report the relative ranking of attributes generated on their normalized scores, and present a subset of attributes ranked high by any of the methods. We report the maximum $\nsuf_x$ score of an attribute obtained by \sys\ on all of its value pairs. We also compared the recourse generated by \sys{} with \texttt{LinearIP}.  (We contacted the authors of \cite{karimi2020algorithmic} but do not use it in evaluation since their technique does not work for categorical actionable variables). We used open-source implementations of the respective techniques.

\noindent\textbf{German.} In Figure~\ref{fig:exp:related:global:german}, 
note that \texttt{housing} is ranked higher by \sys\ than by \texttt{Feat} and \texttt{SHAP}. The difference lies in the data: \texttt{housing=own} is highly correlated with a positive outcome. However, due to a skewed distribution (there are $\sim10\%$ of instances where \texttt{housing=own}), random permutations of \texttt{housing} do not generate new instances, and \texttt{Feat} is unable to identify it as an important attribute. \sys\ uses the underlying causal graph to capture the causal relationship between the two attributes.

In Figures~\ref{fig:exp:related:local:negative:german} and \ref{fig:exp:related:local:positive:german}, we report the rankings obtained by \texttt{LIME}, \texttt{SHAP} and \sys\ on two instances that respectively have negative and positive predicted outcomes. Age and account status have a high negative contribution toward the outcome in Figure~\ref{fig:exp:related:local:negative:german}, indicating that an increase in either is likely to reverse the decision. Intuitively, with age, continued employment and improved account status, individuals tend to have better savings, credit history, housing, etc., which, in turn, contribute toward a positive outcome. \sys's ranking captures this causal dependency between the attributes, which is recorded by neither \texttt{SHAP} nor \texttt{LIME}.

To compare recourse generated by \sys\ and \texttt{LinearIP}, we tested them on the example for \negativeuser\ in Figure~\ref{fig:ravan}. While both the methods identify the same solution for small thresholds, \texttt{LinearIP} did not return any solution for success threshold $> 0.8$. In contrast to \sys{} that generalizes to black-box algorithms,  \texttt{LinearIP} depends on linear classifiers and offers recommendations that do not account for the causal relationship between attributes.\\

\begin{figure*}
      \centering
      \subcaptionbox{Negative outcome (\texttt{German}) \label{fig:exp:related:local:negative:german}}
       {\hspace{-7pt}\includegraphics[width=.24\columnwidth]{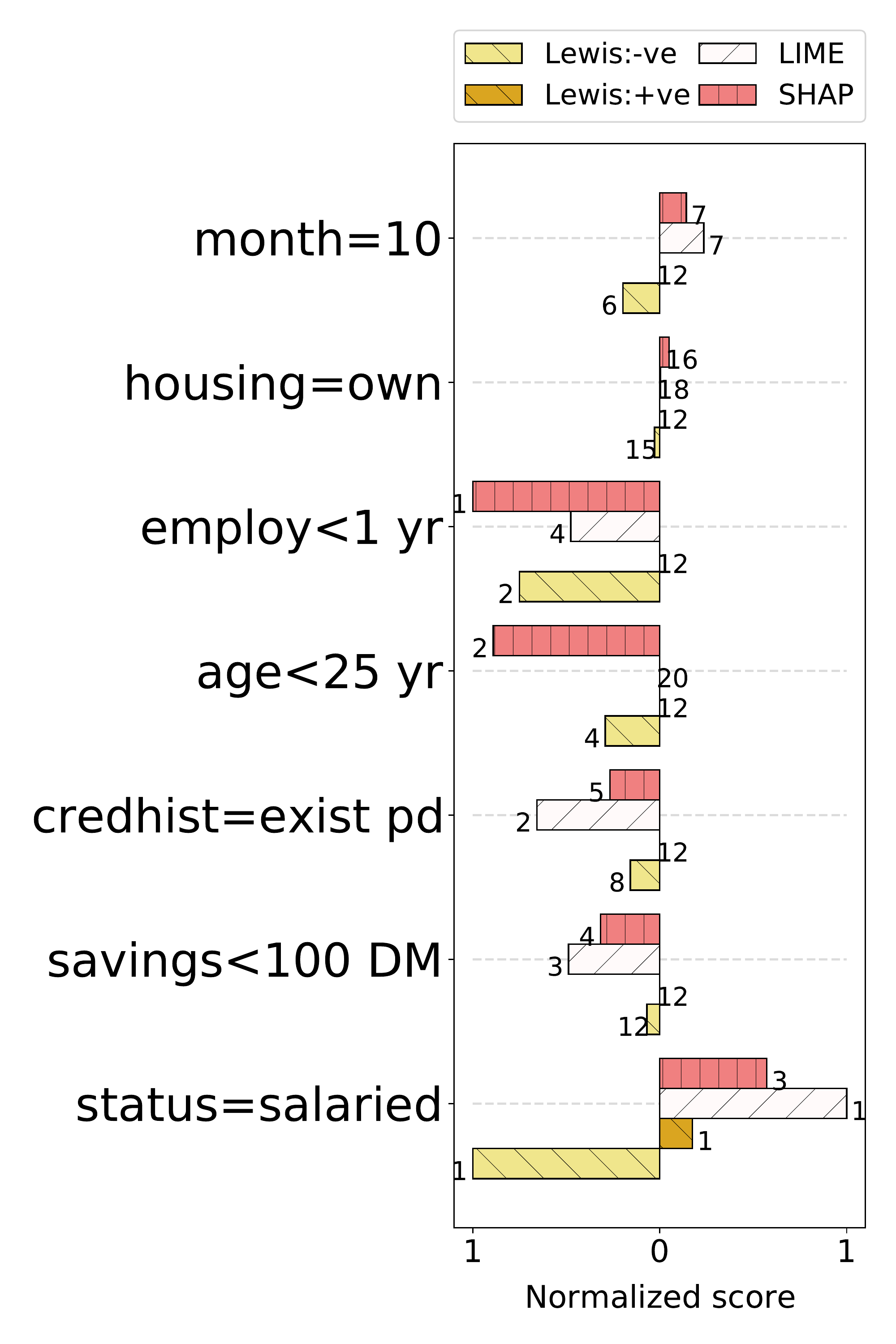}}
      \subcaptionbox{Positive outcome (\texttt{German}) \label{fig:exp:related:local:positive:german}}
       {\hspace{-7pt}\includegraphics[width=.24\columnwidth]{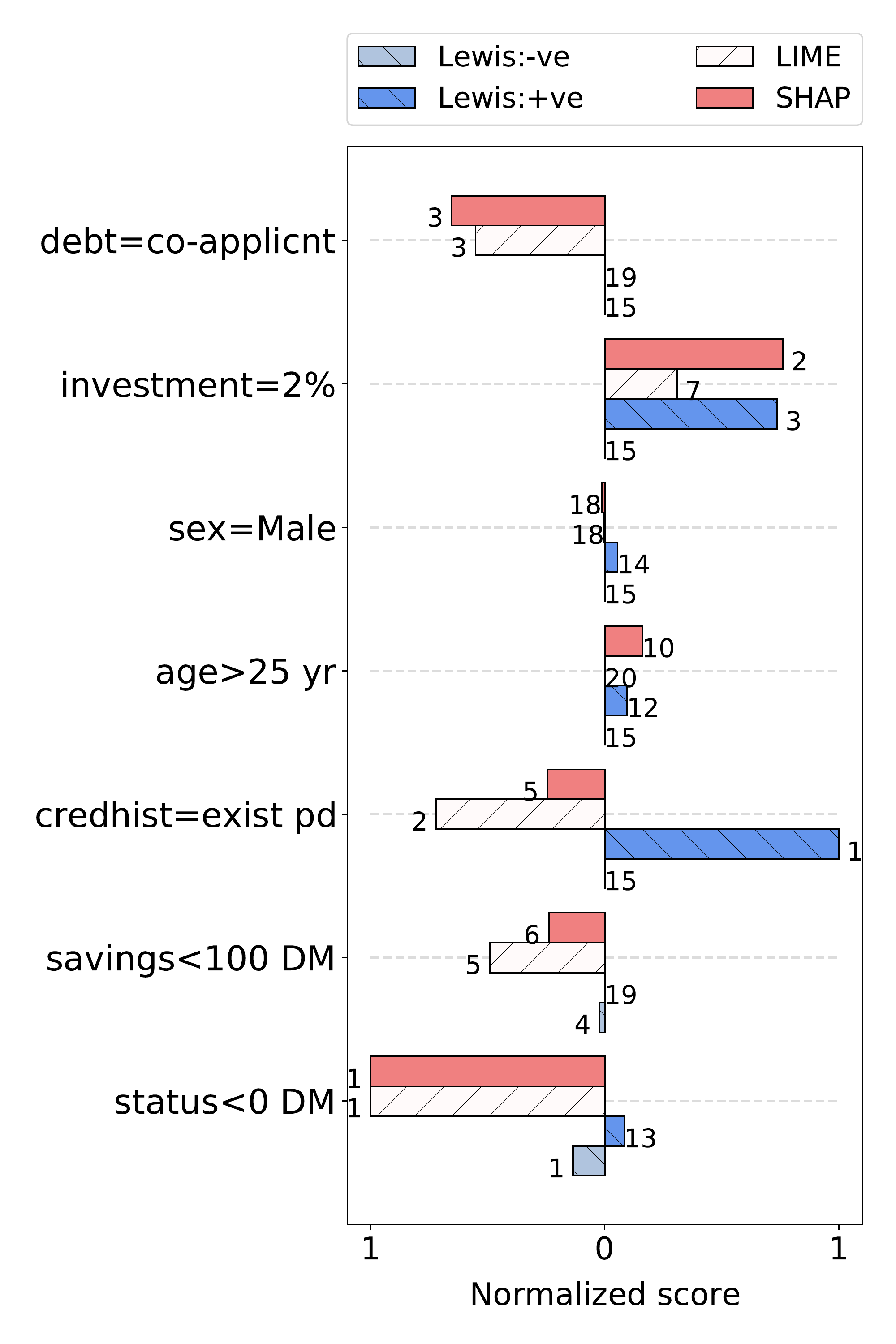}}
      \subcaptionbox{Negative outcome (\texttt{Adult}) \label{fig:exp:related:local:negative:adult}}
      {\hspace{-7pt}\includegraphics[width=.24\columnwidth]{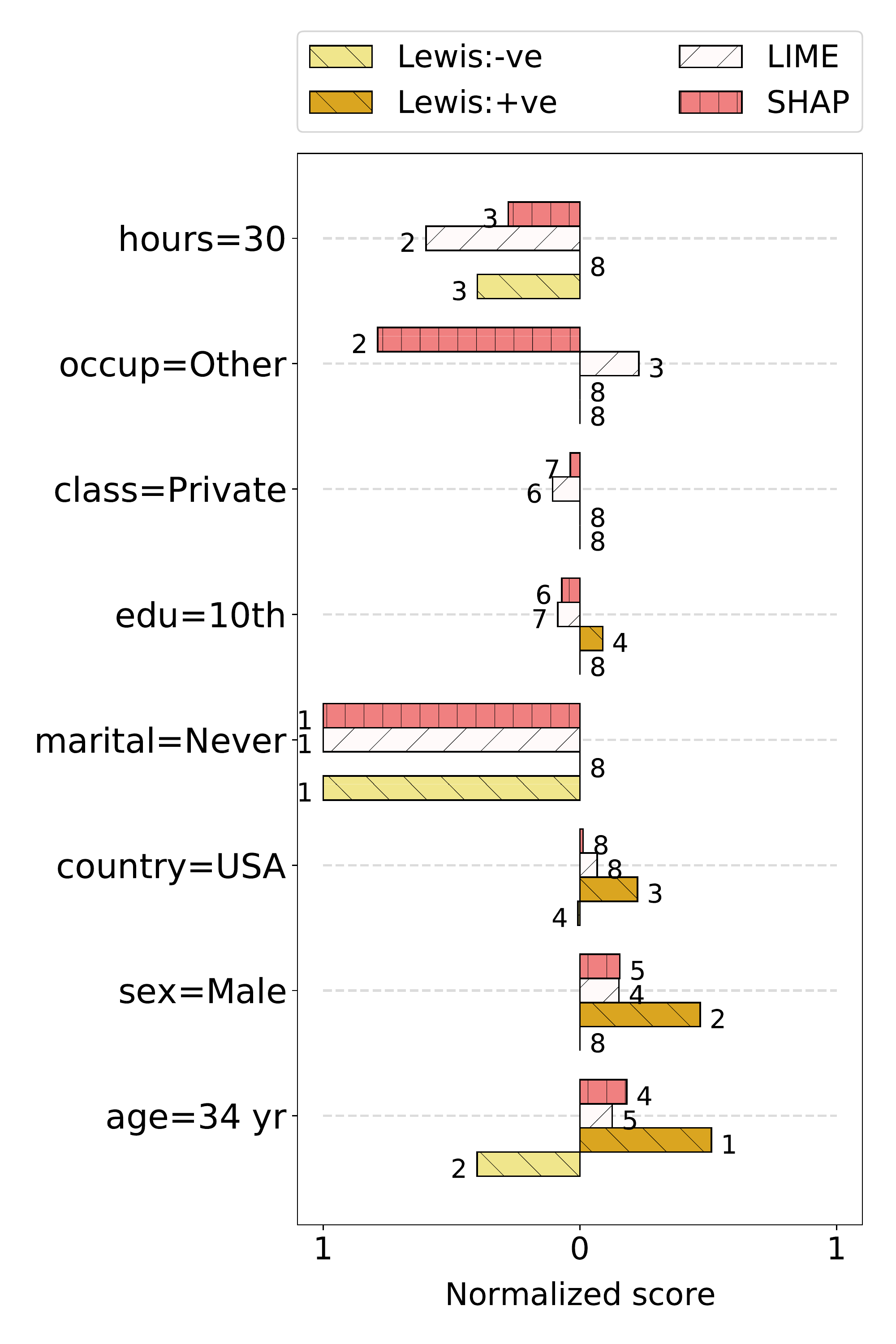}} 
      \subcaptionbox{Positive outcome (\texttt{Adult}) \label{fig:exp:related:local:positive:adult}}
      {\hspace{-7pt}\includegraphics[width=.24\columnwidth]{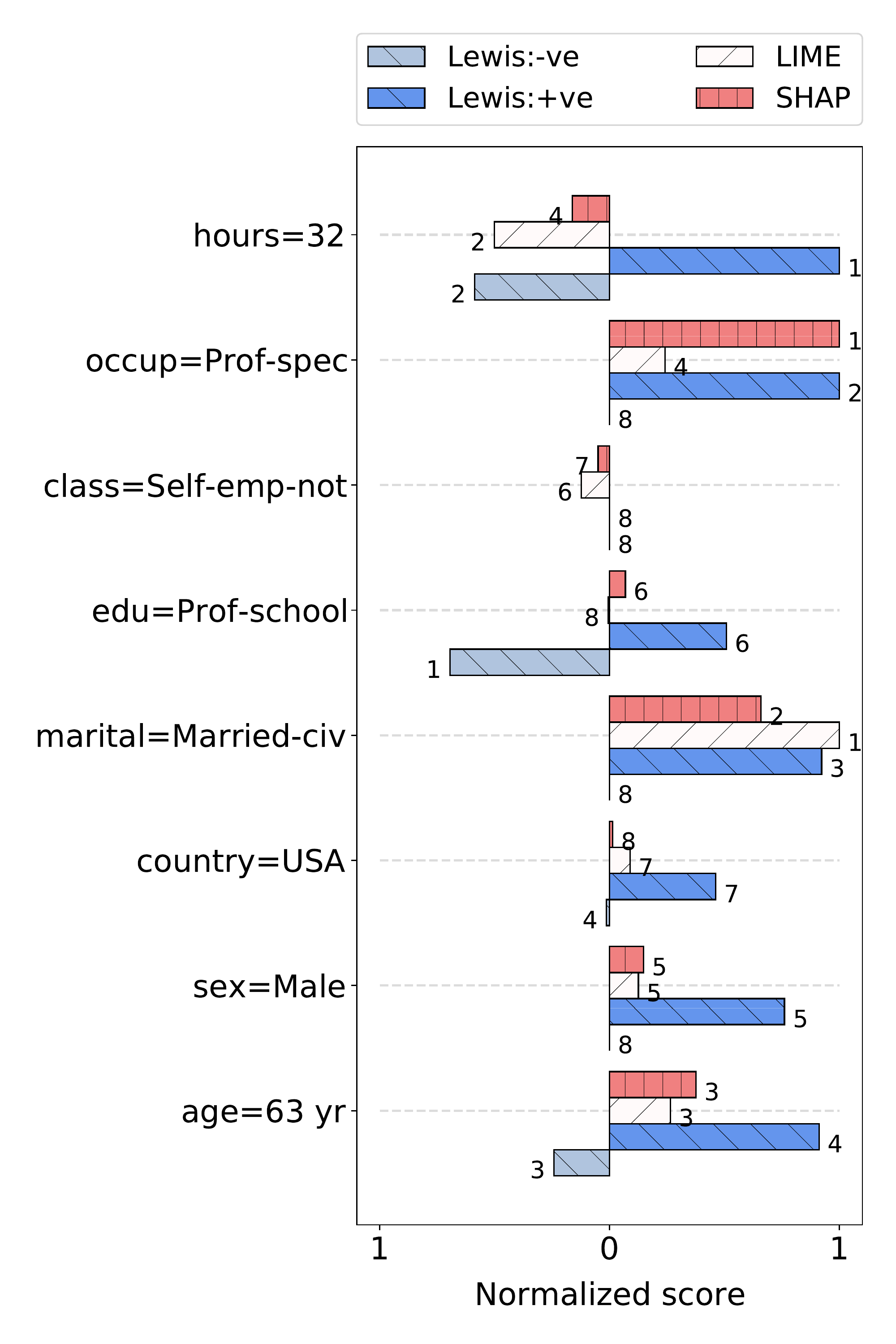}}
      \vspace{-1mm}
      \caption{Comparing different local explanation methods. \texttt{SHAP} and \texttt{LIME} explain the output of an instance in terms of its difference from the global or local average prediction; \sys{} explains it in terms of the underlying causal graph.}
\end{figure*}
\vspace{-.3cm}
\noindent \textbf{Adult.}
In Figure~\ref{fig:exp:related:global:adult},
the ranking of attributes generated by \sys{} and \texttt{Feat} matches observations in prior literature that consider occupation, education and marital status to be the most important attributes. However, \texttt{SHAP} picks on the \textit{correlation} of age with marital status and occupation (older individuals are more likely to be married and have better jobs), and ranks it higher.
\revc{
The rankings are similar for XGBoost (Figure~\ref{fig:exp:end:relative:adultxgboost}) and Random forest (Figure~\ref{fig:exp:related:global:adult}) but different for the neural network (Figure~\ref{fig:exp:end:relative:adultneural}).
We investigated the outputs and observed that the prediction of neural networks differs from that of random forest and XGBoost for more than $20\%$ of the test samples, leading to varied ranking of attributes. Additionally, the class of an individual is ranked important by the classifier. Since country and sex have a causal impact on class, it justifies their high ranks as generated by \sys{}. In Figure~\ref{fig:exp:end:relative:adultneural}, we do not report the scores for \texttt{Feat} as it does not support neural networks.} 
  
In Figures~\ref{fig:exp:related:local:negative:adult} and~\ref{fig:exp:related:local:positive:adult}, we compare \sys{} with local explanation methods \texttt{LIME} and \texttt{SHAP}. Consistent with existing studies on this dataset, \sys{} recognizes the negative contribution of unmarried marital status and positive contribution of sex=male toward the negative outcome.
For the positive outcome example, \sys{} identifies that age, sex and country have a high positive contribution toward the outcome due to their causal impact on attributes such as occupation and marital status (ranked higher by \texttt{SHAP} and \texttt{LIME}). We also observed that  the results of \texttt{SHAP} are not stable across different iterations.


\noindent \textbf{COMPAS.} Since COMPAS scores were calculated based on criminal history and indications of juvenile delinquency~\cite{compas}, the higher ranking of juvenile crime history by \sys{} is justified in Figure~\ref{fig:exp:related:global:compasscore}. Note that bias penetrated into the system due to the correlation between demographic and  non-demographic attributes. \texttt{SHAP} and \texttt{Feat} capture this correlation and rank age higher than juvenile crime history.

\revb{\noindent \textbf{Drug.}  Figure~\ref{fig:exp:related:global:multi-class} shows that all techniques have a similar ordering of attributes with country and age being most crucial for the desired outcome.
Comparing the local explanations of \sys{} with \texttt{SHAP} and \texttt{LIME} (Figure~\ref{fig:exp:rel:local:drug}), we observe that \sys{} correctly identifies the negative contribution of higher education toward negative drug consumption prediction and the positive contribution of a lower level of education toward a positive drug consumption prediction.}

\vspace{-.3cm}
\subsection{Correctness of \sys's explanations} 

Since ground truth is not available in real-world data, we evaluate the correctness of \sys{} on the \texttt{German-Syn} dataset.

{
\noindent\textbf{Correctness of estimated scores} In Figure~\ref{fig:exp:synthetic:germansn}, we compare the global explanation scores of different variables  with ground truth necessity and sufficiency score estimated using Pearl’s three-step procedure discussed in equation~(\ref{eq:abduction}) (Section~\ref{sec:back}). We present the comparison for a non-linear  regression based black-box algorithm with respect to outcome $o=0.5$.
The average global explanation scores returned by \sys{} are consistently similar to ground truth estimates, thereby validating the correctness of Proposition~\ref{prop:iden:mon:ci}. \texttt{SHAP} and \texttt{Feat} capture the correlation between the input and output attributes, and rank \texttt{Status} higher than \texttt{Age} and \texttt{Sex} which are assigned scores close to $0$. These attributes do not directly impact the output but indirectly impact it through \texttt{Status} and \texttt{Saving}. This experiment validates the ability of \sys{} in capturing causal effects between different attributes, estimate explanation scores accurately and  present actionable insights as compared to \texttt{SHAP} and \texttt{Feat}.
} To understand the effect of the number of samples  {on the scores estimated by \sys{},} 
we compare the $\nsuf$ scores of \texttt{status}  for different sample sizes in Figure~\ref{fig:exp:synthetic:sample}. 
We observe that the variance in estimation is reduced with an increase in sample size and scores converge to ground truth estimates for larger samples.

\revc{
\noindent\textbf{Robustness to violation of Monotonicity}
To evaluate the impact of non-monotonicity on the explanation scores generated by \sys{}, we changed the structural equations for the causal graph of \texttt{German-Syn} to simulate non-monotonic effect of Age on the prediction attribute.  This data was used to train random forest and XGBoost classifiers. We measured \textit{monotonicity violation} as $\Lambda_{\texttt{viol}}=\pr[o_{X\leftarrow x}'| o, x']$. Note that $\Lambda_{\texttt{viol}}=0$ implies monotonicity and higher $\Lambda_{\texttt{viol}}$ denotes higher violation of monotonicity.  We observed that the scores estimated by \sys{} differ from ground truth estimates by less than $5$\%, as long as the monotonicity violation is less than $0.25$. Furthermore, the relative ranking of the attributes remains consistent with the ground truth ranking calculated using equation~(\ref{eq:abduction}). 
This experiment demonstrates that the explanations generated by \sys{} are robust to slight violation in monotonicity.
}

\begin{figure}
      \centering
        \subcaptionbox{{Quality of the estimates.} \label{fig:exp:synthetic:germansn}}
        {\includegraphics[width=.32\columnwidth]{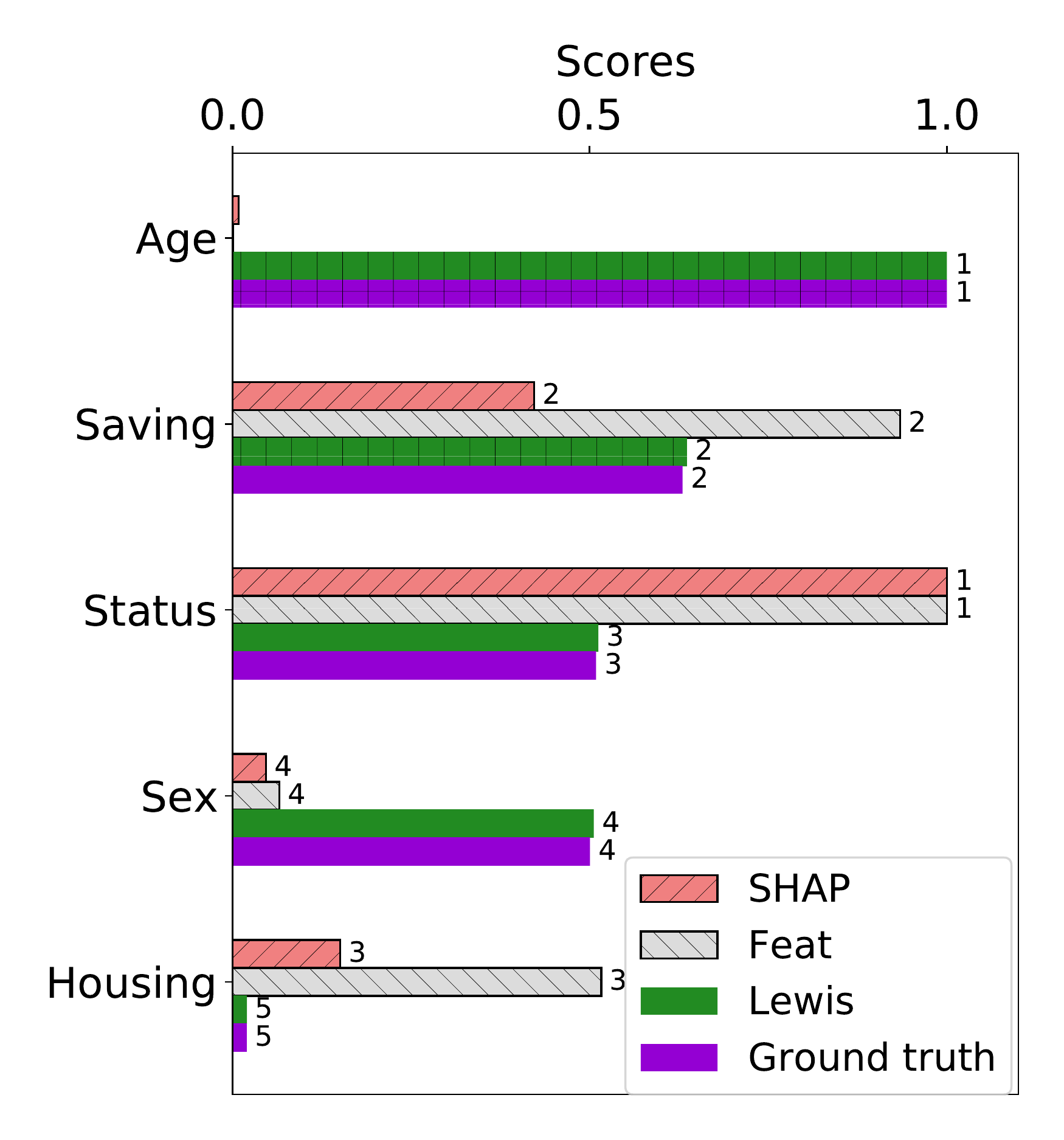}}
         \subcaptionbox{{Effect of sample size on error.} \label{fig:exp:synthetic:sample}}
        {\includegraphics[width=.36\columnwidth]{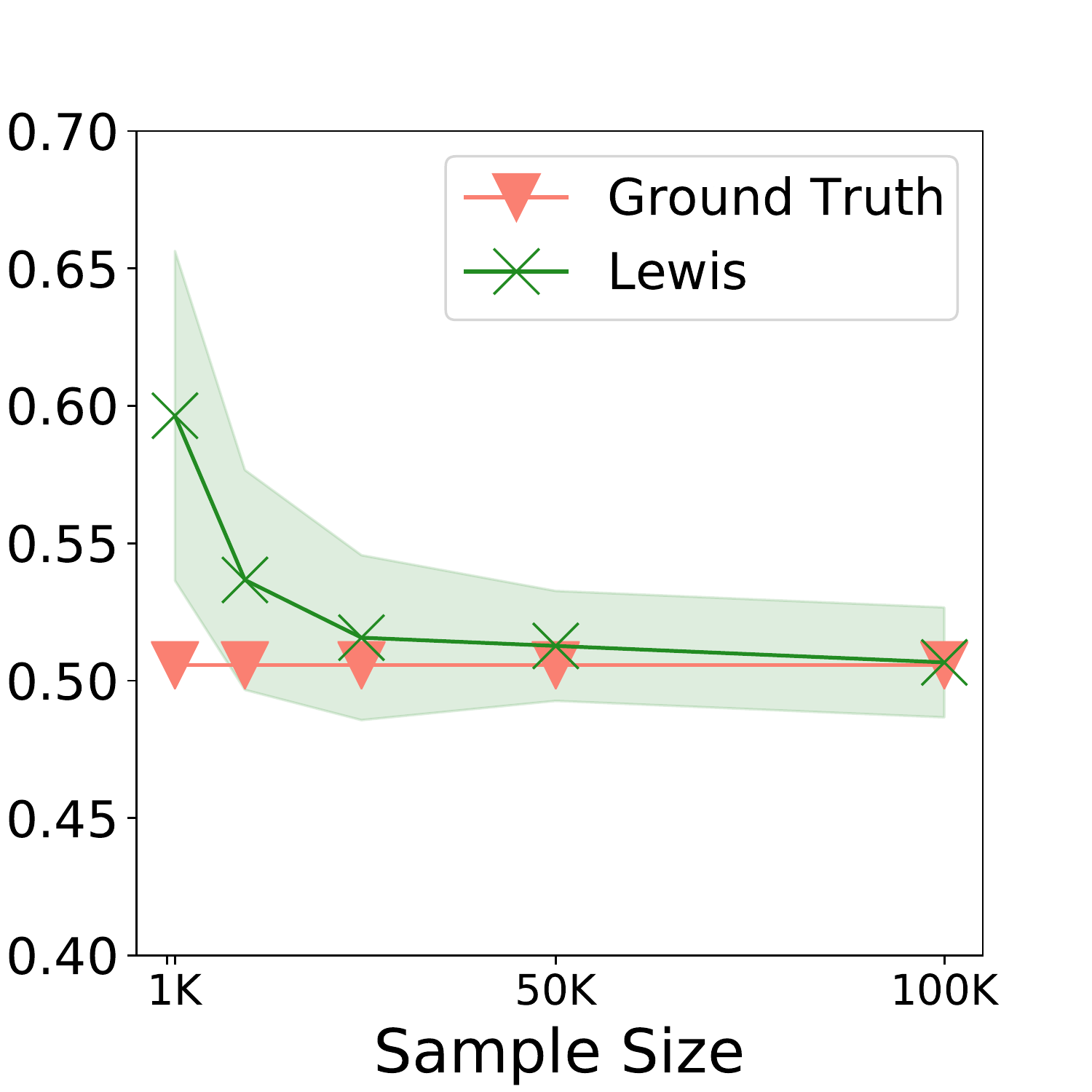}}
      \caption{Comparing with ground truth.}
      \label{fig:exp:related:synthetic}
\end{figure}
\noindent\textbf{Recourse analysis.} We sampled 1000 random instances that received negative outcomes and generated recourse (sufficiency threshold $\alpha = 0.9$) using \sys{}. Each unit change in attribute value was assigned unit cost. The output was evaluated with respect to the ground truth sufficiency and cost of returned actions. In all instances, \sys{}'s output achieved more than $0.9$ sufficiency with the optimal cost. This experiment validates the optimality of the IP formulation in generating effective recourse. To further test the \revb{scalability of \sys}, we considered a causal graph with $100$ variables and increased the number of actionable variables from $5$ to $100$. The number of constraints grew linearly from $6$ to $101$ (one for each actionable variable and one for the sufficiency constraint), and the running time increased from $1.65$ seconds to $8.35$ seconds, demonstrating \sys{}'s scalability to larger inputs.

\vspace{-0.3cm}
\section{Related Work and Discussion}
\label{sec:related}
Our research is mainly related to XAI work in quantifying feature importance and counterfactual explanations. 



\par
{\bf Quantifying feature importance.} Due to its strong axiomatic guarantees, methods based on Shapley values are emerging as the de facto approach for quantifying feature influence~\cite{lipovetsky2001analysis, vstrumbelj2014explaining,
lundberg2017unified,lundberg2018consistent,datta2016algorithmic,merrick2019explanation,frye2019asymmetric,aas2019explaining}. However, several practical and epistemological issues have been identified with these methods\ignore{(including attempts such as~\cite{merrick2019explanation,datta2016algorithmic} that simulate interventional distributions by perturbation or using marginal distributions)}. These issues arise primarily because existing proposals for quantifying the marginal influence of an attribute do not have any causal interpretation in general and, therefore, can lead to incorrect and misleading explanations~\cite{merrick2019explanation,kumar2020problems,frye2019asymmetric}. Another popular method for generating local explanations is LIME (Local Interpretable Model-agnostic Explanations~\cite{ribeiro2016should}, which trains an interpretable classifier (such as linear regression) on an instance obtained by perturbing that instance to be explained around its neighborhood. Several issues with LIME have also been identified in the literature, including its lack of human interpretability, its sensitivity to the choice of local perturbation, and its vulnerability to adversarial attacks~\cite{alvarez2018robustness, lundberg2017unified,molnar2020interpretable,slack2020fooling}. 

Unlike existing methods, our proposal offers the following advantages. (1) It is grounded in causality and counterfactual reasoning, captures insights from the theoretical foundation of explanations in philosophy, epistemology and social science, and can provide provably correct explanations. It has been argued that humans are selective about explanations and, depending on the context, certain contrasts are more meaningful than others~\cite{miller2019explanation,ducasse1926nature}. The notions of necessity and sufficiency have been shown to be strong criteria for preferred explanatory causes~\cite{greenland1999relation, robins1989probability, greenland1999epidemiology, tian2000probabilities, robertson1996common, cox1984probability, pearl2009causality}. (2) It accounts for indirect influence of attributes on algorithm's decisions; the problem of quantifying indirect influence has received scant attention in XAI literature (see \cite{adler2018auditing} for a non-causality-based approach). (3) It builds upon scores that are customizable and can therefore generate explanations at the global, contextual and local levels. (4) It can audit black-box algorithms merely by using historical data on its input and outputs. 



%

\par 
{\bf Counterfactual explanations.}
Our work is also related to a line of research that leverages counterfactuals to explain ML algorithm predictions~\cite{wachter2017counterfactual,laugel2017inverse,karimi2019model,ustun2019actionable,mahajan2019preserving,mothilal2020explaining}. In this context, the biggest challenge is generating explanations that
follow natural laws and are feasible and actionable in the real world.  Recent work attempts to address feasibility use ad hoc constraints \cite{ustun2019actionable,mothilal2020explaining,fariha2020,joshi2019towards,dhurandhar2018explanations,van2019interpretable,liu2019generative}. However, it has been argued that feasibility is fundamentally a causal concept \cite{barocas2020hidden,mahajan2019preserving,karimi2019model}.  Few attempts have been made to 
develop a causality-based approach that can generate actionable recourse by relying on the strong assumption that the underlying probabilistic causal model is fully specified or can be learned from data~\cite{mahajan2019preserving,karimi2019model,karimi2020algorithmic}. Our framework  extends this line of work by (1) formally defining feasibility in terms of probabilistic contrastive counterfactuals, and (2) providing a theoretical justification for taking a fully non-parametric approach for computing contrastive counterfactuals from historical data, thereby making no assumptions about the internals of the decision-making algorithm and the structural equations in the underlying probabilistic causal models. The IP formulation we used to generate actionable recourse is similar to \cite{ustun2019actionable} with a difference that their approach uses classifier parameters to bound the change in prediction as opposed to our sufficiency score based constraint. Our formulation is not only causal but also independent of the internals of the black-box algorithm.

\par{\bf Logic-based methods.} Our work shares some similarities with recent work in XAI that employs tools from logic-based diagnosis and operates with the logical representations of
ML algorithms~\cite{shih2018symbolic,ignatiev2020towards,darwiche2020reasons}. In this context, the fundamental concepts of prime implicate/implicant are closely related to sufficiency and necessary causation when the underlying causal model is a {\em logical circuit} \cite{hopkins2003clarifying,de1992characterizing,halpern2005causes,halpern2005causes,darwiche1994symbolic}. It can be shown that the notion of sufficient/necessary explanations proposed in ~\cite{shih2018symbolic} translates to explanations in terms of a set of attributes that have a sufficiency/necessary score of 1. However, these methods can generate explanations only in terms of a set of attributes, are intractable in model-agnostic settings, fail to account for the causal interaction between  attributes, and cannot go beyond deterministic algorithms. 

\par{\bf Algorithmic fairness.} The critical role of causality and background knowledge is recognized and acknowledged in the algorithmic fairness literature~\cite{kusner2017counterfactual,kilbertus2017avoiding,nabi2018fair,russell2017worlds,galhotra2017fairness,salimi2020causal,salimi2019interventional,salimi2020database}. In this context, contrastive counterfactuals have been used to capture individual-level fairness~\cite{counterfactualfairness}. It is easy to show that the notion of {\em counterfactual fairness} in ~\cite{counterfactualfairness} can be captured by the explanation scores introduced in this paper provided that an algorithm is counterfactually fair w.r.t. a protected attribute if the sufficiency score and necessity score of the sensitive attribute are {\em both} zero. Hence, \sys\ is useful for reasoning about individual-level fairness and discrimination.

\revc{Orthogonal to our work, \textit{strategic classification} addresses devising techniques that are robust to manipulation and gaming; recent literature has focused on studying the causal implications of such behavior~\cite{pmlr-v119-miller20b}. In the future, we plan to incorporate such techniques that make our system robust with respect to gaming.}  The metrics we introduce here for quantifying the necessity, sufficiency and necessity and sufficiency of an algorithm's input for its decision are adopted from the literature on probability of causation~~\cite{greenland1999relation, robins1989probability, greenland1999epidemiology, tian2000probabilities, robertson1996common, cox1984probability, pearl2009causality}. The results developed in Section~\ref{sec:identif} generalize and subsume earlier results from~\cite{tian2000probabilities,pearl2009causality} and substantially simplify their proofs.

\par
{\bf Assumptions and limitations.}  Our framework relies on two main assumptions to estimate and bound explanation scores, namely, the  availability of (1) data that is a representative sample of the underlying population of interest, and (2)  knowledge of the underlying causal diagram.  Dealing with non-representative samples goes beyond the scope of this paper, but there are standard approaches that can be adopted (see, e.g.,~\cite{bareinboim2012controlling}).  Furthermore, \sys\ is designed to work with any level of user's background knowledge. If no background knowledge is provided, \sys\ assumes no-confounding, i.e.,  $\pr(o \mid \Do(\mb \mb x), \mb k)=\pr(o \mid \mb \mb x, \mb k)$ and monotonicity.  Under these assumptions, the necessity score and sufficiency score, respectively, become $\frac{  \pr(o' \mid \mb x',\mb k)  - \pr(o' \mid \mb x,\mb k) }{\pr(o \mid \mb x,   \mb k)}$ and $\frac{\pr(o\mid  \mb x ,\mb k) - \pr(o\mid \mb x', \mb k) }{\pr(o' \mid  \mb x', \mb k)}$. The former can be seen as a group-level {\em attributable fraction}, which is widely used in epidemiology as a measure of the proportion of cases attributed to a particular risk factor~\cite{poole2015history}; the latter can be seen as a group-level {\em relative risk}, which is widely used in epidemiology to measure the risk of contracting a disease in a group exposed to a risk factor~\cite{khoury1989measurement}. When computed for individuals, these quantities can be interpreted as proportional to the difference between the ratio of positive/negative algorithmic decisions for individuals that are {\em similar} on all attributes except for $X$. In other words, the quantities measure the correlation between $X$ and the algorithm's decisions across similar individuals. \revc{ This correlation can be interpreted causally only under the no-confounding and monotonicity assumptions. {\em Nonetheless, quantifying the local influence of an attribute by measuring its correlation with an  algorithm's decision across similar individuals underpins most existing methods for generating local explanations such as Shapley values based methods~\cite{lundberg2017unified,aas2019explaining}, feature importance~\cite{vstrumbelj2014explaining}
, and LIME~\cite{ribeiro2016should}}. Approaches differ in terms of how they measure this correlation.} 

In principle, background knowledge on underlying causal models is required to generate effective and actionable explanations. {\em While this may be considered a limitation of our approach, we argue that all existing XAI methods either explicitly or implicitly make causal assumptions (such as those mentioned above in addition to feature independence and the possibility of simulating interventional distributions by perturbing data or using marginal distributions).} Hence, our framework replaces assumptions that are unrealistic with assumptions about the underlying causal diagram that need not be perfect to obtain valuable insights, can be validated using historical data and background knowledge~\cite{pearl2009causality}, and can be learned from a mixture of historical and interventional data~\cite{glymour2019review}. As argued above, in the worst case, our assumptions about generating local explanations are similar to those of existing work. Nevertheless, we show empirically in Section~\ref{sec:exp} that our methods are robust to slight violations of underlying assumptions and generate insights considerably beyond state-of-the-art methods in XAI.


\ignore{
{\bf Necessity vs. sufficiency:} It has been argued that humans are selective about explanations and depending on the context certain contrasts are more meaningful than others~\cite{miller2019explanation}. The notions of necessity and sufficiency shown to be strong criteria for preferred explanatory causes~\cite{greenland1999relation, robins1989probability, greenland1999epidemiology, tian2000probabilities, greenland1999relation, robertson1996common, cox1984probability, pearl2009causality}. 
    }
    
\ignore{  
{\bf Indirect Influence:} The focus of the most existing approach in XAI 
    The explanations scores measures both {\em direct} and {\em indirect} influence of an attribute on an algorithm's decision. A variable $Z\in \mb V$ {\em directly} influences an algorithm's decision if $Z \in \mb X$, i.e., $Z$ is an input to the algorithm, and {\em indirectly} influences an algorithm's decision if it causally influences at least one of the inputs of the algorithm, i.e.,  there exists a directed causal path from $Z$ to an algorithm input $X \in \mb X$.  A variable can both directly and indirectly influence an algorithm's decision. Most of the existing approaches in XAI only focus on quantifying the direct effect. Most of the existing proposals for quantifying the influence of a variable on an algorithm's decision cannot go beyond measuring the direct influence. In contrast, the measures proposed in this section can quantify both internal and external influences. As a result, they can quantify the causal influence of variables that are not explicitly used by an algorithm yet affect its outcome via proxy variables used by the algorithm. 
}


\ignore{
\paragraph*{\bf Individual Fairness}
The first attempt to define individual-level fairness was in the seminal work
\cite{dwork2012fairness}, which states that ``similar individuals should be
treated similarly.''  Specifically, this work envisioned that if a human expert
can specify task-specific similarity metrics (e.g., an author affiliated with
university X with Y years of experience is similar to an author affiliated with
university W with Z years of experience), then the constraint requires that
individuals with similar metrics be treated similarly.  Despite several attempts
to operationalize this definition (see, e.g., \cite{zemel2013learning,
  lahoti2019operationalizing, lahoti2019ifair} ), eliciting a quantitative
measure of similarity is challenging in practice.  Indeed, this notion reduces
individual-level fairness to defining fair similarity measures. A principle
approach to define individual fairness is based on counterfactual. While the
importance of the role of causality in proving/disproving individual-level
discrimination is well understood in legal reasoning and law literature
\cite{robertson1996common,spellman2001relation,honore2001causation,lagnado2017causation},
it receives scant attention in algorithmic fairness literature.  The exceptions
is the following notion of {\em counterfactual fairness} that captures
individual-fairness using retrospective counterfactuals:
\begin{definition}[Counterfactual Fairness~\cite{counterfactualfairness}]
  Given a set of features $\mb X$, a protected attribute $S$, a predictor $O$ is
  {\em counterfactually fair} if for every $\mb x \in Dom(\mb X)$ and $o \in Dom(O)$:
  \begin{align}
    P( O_{S\leftarrow 0}(\mb u) = o| \mb X = \mb x,S = 1) =
    P( O_{S\leftarrow 1}(\mb u)= o| \mb X = \mb x, S = 1) \label{eq:cfair}
  \end{align}
\end{definition}

\paragraph*{\bf Counterfactual Equalized opportunity}

\paragraph*{\bf Counterfactual-Sharply Score}

\paragraph*{\bf Additive Actions}
}

\newpage
\bibliographystyle{plain}
\bibliography{ref}

\end{document}